\newtheorem{theorem}{Theorem}[section]
\newtheorem{lemma}[theorem]{Lemma}
\newtheorem{assumption}[theorem]{Assumption}
\newcommand{\adagrad}{{AdaGrad}}
\newcommand{\adam}{{Adam}}
\newcommand{\adamw}{{AdamW}}
\newcommand{\adamax}{{Adamax}}
\newcommand{\rmsprop}{{RMSprop}}
\newcommand{\amsgrad}{{AMSGrad}}
\newcommand{\adashift}{{AdaShift}}
\newcommand{\adopt}{{ADOPT}}
\def\eqref#1{equation~\ref{#1}}
\def\1{\bm{1}}
\def\eps{{\epsilon}}
\def\vzero{{\bm{0}}}
\def\vone{{\bm{1}}}
\def\vtheta{{\bm{\theta}}}
\def\va{{\bm{a}}}
\def\vg{{\bm{g}}}
\def\vm{{\bm{m}}}
\def\vv{{\bm{v}}}
\def\vx{{\bm{x}}}
\def\vy{{\bm{y}}}
\def\vphi{{\bm{\phi}}}
\def\eva{{a}}
\DeclareMathAlphabet{\mathsfit}{\encodingdefault}{\sfdefault}{m}{sl}
\SetMathAlphabet{\mathsfit}{bold}{\encodingdefault}{\sfdefault}{bx}{n}
\title{ADOPT: Modified Adam Can Converge with Any $\beta_2$\\with the Optimal Rate}
\author{%
  Shohei Taniguchi \\
  The University of Tokyo \\
  \texttt{taniguchi@weblab.t.u-tokyo.ac.jp} \\
  \And
  Keno Harada \\
  The University of Tokyo \\
  \texttt{keno.harada@weblab.t.u-tokyo.ac.jp} \\
  \And
  Gouki Minegishi \\
  The University of Tokyo \\
  \texttt{minegishi@weblab.t.u-tokyo.ac.jp} \\
  \And
  Yuta Oshima \\
  The University of Tokyo \\
  \texttt{yuta.oshima@weblab.t.u-tokyo.ac.jp} \\
  \And
  Seong Cheol Jeong \\
  The University of Tokyo \\
  \texttt{jeong@weblab.t.u-tokyo.ac.jp} \\
  \And
  Go Nagahara \\
  The University of Tokyo \\
  \texttt{nagaharago@weblab.t.u-tokyo.ac.jp} \\
  \And
  Tomoshi Iiyama \\
  The University of Tokyo \\
  \texttt{iiyama@weblab.t.u-tokyo.ac.jp} \\
  \And
  Masahiro Suzuki \\
  The University of Tokyo \\
  \texttt{masa@weblab.t.u-tokyo.ac.jp} \\
  \And
  Yusuke Iwasawa \\
  The University of Tokyo \\
  \texttt{iwasawa@weblab.t.u-tokyo.ac.jp} \\
  \And
  Yutaka Matsuo \\
  The University of Tokyo \\
  \texttt{matsuo@weblab.t.u-tokyo.ac.jp} \\
}
\begin{document}

\maketitle

\begin{abstract}
  Adam is one of the most popular optimization algorithms in deep learning. However, it is known that Adam does not converge in theory unless choosing a hyperparameter, i.e., $\beta_2$, in a problem-dependent manner. There have been many attempts to fix the non-convergence (e.g., AMSGrad), but they require an impractical assumption that the gradient noise is uniformly bounded. In this paper, we propose a new adaptive gradient method named ADOPT, which achieves the optimal convergence rate of $\mathcal{O} ( 1 / \sqrt{T} )$ with any choice of $\beta_2$ without depending on the bounded noise assumption. ADOPT addresses the non-convergence issue of Adam by removing the current gradient from the second moment estimate and changing the order of the momentum update and the normalization by the second moment estimate. We also conduct intensive numerical experiments, and verify that our ADOPT achieves superior results compared to Adam and its variants across a wide range of tasks, including image classification, generative modeling, natural language processing, and deep reinforcement learning. The implementation is available at \url{https://github.com/iShohei220/adopt}.
\end{abstract}

\section{Introduction}
\label{sec:intro}

Stochastic optimization algorithms, such as stochastic gradient descent (SGD), play a central role in deep learning.
In particular, adaptive gradient methods based on exponential moving averages, such as \adam{} \citep{kingma2014adam}, are widely used in practice.
Despite the empirical success, it is known that \adam{} does not converge in theory in general cases.
For example, \citet{j.2018on} show that Adam fails to converge to a correct solution in a simple example where the objective function at time $t$ is given as:
\begin{align}
    f_t \left( \theta \right) = 
    \begin{cases}
        C \theta, & \mathrm{for} \ t \ \mathrm{mod} \ 3 = 1 \\
        - \theta, & \mathrm{otherwise},
    \end{cases} \label{eq:online_toy}
\end{align}
where $C > 2$ and $\theta \in [-1, 1]$.
In this online optimization setting, \adam{} converges to a wrong solution (i.e., $\theta = 1$) instead of the true solution (i.e., $\theta = -1$) especially when the hyperparameter $\beta_2$ is set to a small value.
There have been several attempts to fix the non-convergent behavior of \adam{}~\citep{j.2018on,zou2019sufficient}. 
For example, \amsgrad{}~\citep{j.2018on} ensures the convergence for online convex optimization by making slight modifications to the \adam{} algorithm. 
Subsequent studies~\citep{chen2018on,zhou2018convergence} show that \amsgrad{} also converges to a stationary point for smooth nonconvex stochastic optimization problems. 
However, the convergence proofs rely on the assumption that the gradient noise is uniformly bounded.
This assumption is stronger than the one used for the analysis of vanilla SGD~\citep{ghadimi2013stochastic,bertsekas2000gradient,khaled2022better}, where the gradient {\it variance} is assumed to be uniformly bounded.
In fact, the bounded noise assumption is often violated in practice.
For example, when Gaussian noise is used in the gradient estimation (e.g., variational autoencoders~\citep{Kingma2014} and diffusion models~\citep{ho2020denoising,song2021scorebased}), the stochastic gradient is no longer bounded.

Concurrently, \citet{zhou2018adashift} analyze the non-convergence of Adam in the problem described in Eq. (\ref{eq:online_toy}) from the perspective of the correlation between the current gradient and the second moment estimate based on the exponential moving average.  
Specifically, they show that the non-convergence problem can be resolved by excluding the gradient of some recent steps from the calculation of the second moment estimate.
Based on the analysis, they propose \adashift{}, another variant of \adam{}.
However, their theoretical analysis is limited to a single online convex problem described in Eq. (\ref{eq:online_toy}), and the convergence of \adashift{} for general nonconvex problems is unclear.

More recently, some works have demonstrated that \adam{} can converge by choosing $\beta_2$ in a problem-dependent manner~\citep{shi2020rmsprop,zhang2022adam,wang2022provable,li2023convergence,wang2023closing}.
However, tuning $\beta_2$ for each specific problem is troublesome; hence developing algorithms with the problem-independent convergence guarantee is still important to safely apply adaptive gradient methods to a wide range of machine learning problems.

In this paper, we propose an alternative approach to addressing the non-convergence problem of \adam{} without relying on the choice of $\beta_2$ or strong assumptions such as the bounded noise assumption.
To derive our algorithm, we first examine the case without momentum, analyzing the convergence bound of \rmsprop{} for general smooth nonconvex optimization problems.
Through the analysis, we uncover the fundamental cause of non-convergence, which stems from the correlation between the second moment estimate and the current gradient. 
This finding aligns with the results demonstrated by \citet{zhou2018adashift} for online convex optimization.
This correlation can be easily eliminated by excluding the current gradient from the second moment estimate.

Subsequently, we extend our findings to the case where momentum is incorporated, as in \adam{}, and discover that the \adam{}-style momentum also contributes to non-convergence. 
To address it, we propose to change the order of the momentum update and the normalization by the second moment estimate. 
With this small adjustment, we successfully eliminate the non-convergence problem of \adam{} without relying on a specific hyperparameter choice and the bounded noise assumption.
We provide theoretical evidence demonstrating that our derived algorithm, named \adopt{}, can achieve convergence with the optimal rate of $\mathcal{O}(1/\sqrt{T})$ for smooth nonconvex optimization.

In our experiments, we begin by assessing the performance of \adopt{} in a toy example where \adam{} typically fails to converge depending on the choice of $\beta_2$.
This toy example is an extension of the one presented in Eq. (\ref{eq:online_toy}) by \citet{j.2018on}, but we consider a scenario where \amsgrad{} is also hard to converge due to the dependence on the bounded noise assumption.
Our results demonstrate that \adopt{} rapidly converges to the solution, while \adam{} fails to converge, and \amsgrad{} exhibits extremely slow convergence.
Next, we conduct an experiment using a simple multi-layer perceptron on the MNIST classification task to evaluate the performance of \adopt{} in nonconvex optimization. Our findings indicate that \adopt{} outperforms existing adaptive gradient methods, including \adam{}, \amsgrad{}, and \adashift{}.
Finally, we evaluate the performance of \adopt{} in various practical applications, such as image classification of CIFAR-10 and ImageNet using ResNet~\citep{He_2016_CVPR} and SwinTransformer~\citep{liu2021swin}, training of deep generative models (NVAE), fine-tuning of language models (LLaMA), and deep reinforcement learning for continuous control. 
Our empirical results demonstrate that \adopt{} achieves superior results over existing algorithms (e.g., \adam{}) in these practical applications. 
\section{Preliminary}
\label{sec:preliminary}
\subsection{Problem Definition}
We consider the minimization of the objective function $f: \mathbb{R}^D \rightarrow \mathbb{R}$ with respect to the parameter $\vtheta \in \mathbb{R}^D$.
In this context, we focus on first-order stochastic optimization methods, where only the stochastic gradient $\vg$ is accessible.
As the objective $f$ can be nonconvex, the goal is to find a stationary point where $\nabla f \left( \vtheta \right) = 0$~\citep{blair1985problem,vavasis1995complexity}.
In order to analyze the convergence behavior of stochastic optimization algorithms, the following assumptions are commonly employed in the literature:
\begin{assumption}
\label{ass:bounded_objective}
The objective function $f (\vtheta)$ is lower-bounded, i.e., $f (\vtheta ) \geq f_\mathrm{inf} > - \infty$ for all $\vtheta$.
\end{assumption}

\begin{assumption}
\label{ass:unbiased}
The stochastic gradient $\vg_t$ is an unbiased estimator of the objective $f (\vtheta_{t-1})$, i.e., $\mathbb{E} [ \vg_t ] = \nabla f (\vtheta_{t-1} )$ for all $t \geq 1$.
\end{assumption}

\begin{assumption}
\label{ass:smooth}
The objective function is $L$-smooth on $\mathbb{R}^D$, i.e., there exists a constant $L > 0$ such that $\| \nabla f ( \vx ) - \nabla f ( \vy ) \| \leq L \| \vx - \vy \|$ for all $\vx, \vy \in \mathbb{R}^D$.
\end{assumption}

\begin{assumption}
\label{ass:bounded_variance}
Variance of the stochastic gradient is uniformly bounded , i.e., there exists a constant $\sigma > 0$ such that $\mathbb{E} [ \left\| {\vg_t} - \nabla f \left( \vtheta_{t-1} \right) \right\|^2 ] \leq \sigma^2$.
\end{assumption}
For the analysis of adaptive gradient methods (e.g., \adam{} and \adagrad{}), many of previous works~\citep{defossez2022a,li2019convergence,ward2020adagrad,zou2018weighted} use a little stronger assumption instead of Assumption \ref{ass:bounded_variance} for ease of proofs:
\begin{assumption}
\label{ass:second_moment}
The stochastic gradient has a finite second moment, i.e., there exists a constant $G > 0$ such that $\mathbb{E} [ \left\| {\vg_t} \right\|^2 ] \leq G^2$.
\end{assumption}
Assumption \ref{ass:second_moment} requires that the true gradient $\nabla f$ is also uniformly bounded in addition to the variance of the stochastic gradient $\vg$.
Moreover, the convergence proof of \amsgrad{} tends to rely on an even stronger assumption as follows~\citep{chen2018on,zhou2018convergence}.
\begin{assumption}
\label{ass:bounded_stochastic_gradient}
The stochastic gradient is uniformly upper-bounded, i.e., there exists a constant $G > 0$ such that $\left\| {\vg_t} \right\| \leq G$. 
\end{assumption}
In Assumption \ref{ass:bounded_stochastic_gradient}, the gradient noise $\xi_t \coloneqq \vg_t - \nabla f$ is assumed to be bounded almost surely in addition to the true graidient $\nabla f$.
Note that when Assumption \ref{ass:bounded_stochastic_gradient} holds, Assumption \ref{ass:second_moment} is automatically satisfied; hence, Assumption \ref{ass:bounded_stochastic_gradient} is a stronger assumption compared to Assumption \ref{ass:second_moment}.
In this paper, we adopt Assumptions \ref{ass:bounded_objective}, \ref{ass:unbiased}, \ref{ass:smooth} and \ref{ass:second_moment} for analysis, because one of our motivations is to address the omission of Assumption \ref{ass:bounded_stochastic_gradient}.
In the analysis, we derive the upper bound of $\min_{t} \{ \mathbb{E} [ \| \nabla f ( \vtheta_{t} ) ) \|^{4/3} ]^{3/2} \}$ to investigate the convergence rate of the stochastic optimization algorithms, which is commonly performed in the literature~\citep{defossez2022a,zou2019sufficient}.

\subsection{Review of Stochastic Optimization Algorithms for Nonconvex Objectives}
The convergence of the vanilla SGD have been studied extensively in previous works.
For smooth nonconvex functions, \citet{ghadimi2013stochastic} showed that SGD with a constant learning rate converges with an $\mathcal{O} ( 1 / \sqrt{T} )$ rate under Assumptions \ref{ass:bounded_objective}-\ref{ass:bounded_variance} by setting $\alpha_t = \alpha = \Theta ( 1 / \sqrt{T} )$, where $\alpha_t$ is a learning rate at the $t$-th step, and $T$ is a total number of parameter updates.
This convergence rate is known to be minimax optimal up to a constant~\citep{drori2020complexity}.
For the diminishing learning rate scheme, the convergence bound of $\mathcal{O} ( \log T / \sqrt{T} )$ is well-known for $\alpha_t = \alpha / \sqrt{t}$~\citep{ghadimi2013stochastic}.
Recently, \citet{wang2021convergence} have proved that SGD with $\alpha_t = \alpha / \sqrt{t}$ can also achieve the optimal rate $\mathcal{O} ( 1 / \sqrt{T} )$ by additionally assuming that the objective $f$ is upper-bounded.

While the vanilla SGD is still one of the most popular choices for stochastic optimization, adaptive gradient methods are dominantly used especially for deep learning.
In adaptive gradient methods, the parameter $\vtheta$ is updated additionally using the second moment estimate ${\vv}_t$ in the following form:
\begin{align}
    \vtheta_t = \vtheta_{t-1} - \alpha_{t} \frac{\vg_{t}} { \sqrt{ {\vv}_t + \epsilon^2 } } , \label{eq:no_momentum}
\end{align}
where $\eps$ is a small positive constant.
The division between vectors is applied in an element-wise manner, and the addition between a vector $\va$ and a scalar $b$ is defined as $(\va + b)_i \coloneqq \eva_i + b$.
In \adagrad{}~\citep{JMLR:v12:duchi11a}, ${\vv}_t$ is defined as ${\vv}_0 = \vzero$ and ${\vv}_t = {\vv}_{t-1} + \vg_{t} \odot \vg_{t}$.
In \rmsprop{}~\citep{rmsprop}, an exponential moving average is substituted for the simple summation, i.e., ${\vv}_t = \beta_2 {\vv}_{t-1} + ( 1 - \beta_2 ) \vg_{t} \odot \vg_{t}$, where $0 \leq \beta_2 < 1$.
\adam{}~\citep{kingma2014adam} uses momentum in addition to the second moment estimate to accelerate the convergence as follows:
\begin{align}
    &{\vm}_t = \beta_1 {\vm}_{t-1} + \left( 1 - \beta_1 \right) \vg_{t}, \label{eq:adam_momentum}\\
    &\vtheta_t = \vtheta_{t-1} - \alpha_{t} \frac{ { \vm }_t } { \sqrt{ {\vv}_t + \epsilon^2 } } , 
\end{align}
where ${\vm}_0 = \vzero$.
Here, we omit the bias correction technique used in the original paper for clarity.
Unfortunately, \rmsprop{} and \adam{} are not guaranteed to converge even in a simple convex optimization problem as demonstrated by \citet{j.2018on}, whereas \adagrad{} with a constant learning rate is known to converge with an $\mathcal{O} ( \log T / \sqrt{ T } )$ rate under Assupmtions \ref{ass:bounded_objective}-\ref{ass:smooth} and \ref{ass:second_moment} for smooth nonconvex cases~\citep{li2019convergence,ward2020adagrad,zou2018weighted,chen2018on,defossez2022a}.
Although the convergence of \adam{} can be assured by choosing $\beta_2$ in a problem-dependent manner~\citep{shi2020rmsprop,zhang2022adam,wang2022provable,li2023convergence,wang2023closing}, it is difficult to know the proper choice of $\beta_2$ for each problem before training.

To fix the non-convergence of \adam{} without depending on $\beta_2$, some researchers have proposed variants of \adam{}.
\citet{j.2018on} proposed \amsgrad{}, which substitute $\hat{\vv}_t$ for $\vv$ in Eq. (\ref{eq:adam_momentum}), where $\hat{\vv}_0 = \vzero$ and $\hat{\vv}_t = \max \left\{ \hat{\vv}_{t-1}, \vv_t \right\}$.
The idea behind \amsgrad{} is that the scaling factor $\sqrt{ \hat{\vv}_t + \epsilon^2 }$ should be non-decreasing to ensure the convergence.
After \citet{j.2018on} originally proved the convergence of \amsgrad{} for online convex optimization, \citet{chen2018on} showed that \amsgrad{} with $\alpha_t = \alpha / \sqrt{t}$ converges with $\mathcal{O} ( \log T / \sqrt{ T } )$ for nonconvex settings.
\citet{zhou2018convergence} also analyzed the convergence of \amsgrad{} for nonconvex optimization, and derived the convergence rate of $\mathcal{O} ( 1 / \sqrt{ T } )$ for a constant learning rate of $\alpha_t = \alpha = \Theta ( 1 / \sqrt{T} )$.
However, their results depend on Assumption \ref{ass:bounded_stochastic_gradient}, which is often violated in practice.
For example, variational autoencoders~\citep{Kingma2014} and diffusion models~\citep{ho2020denoising,song2021scorebased} are typical examples in which Assumption \ref{ass:bounded_stochastic_gradient} does not hold because they utilize unbounded Gaussian noise in the gradient estimation.
The cause of requirement for Assumption \ref{ass:bounded_stochastic_gradient} is the max operation in the definition of $\hat{\vv}_t$.
Since the max operation is convex, $\mathbb{E} [ \hat{\vv}_t ] \leq \max_t \{ \mathbb{E} [ \vv_t ] \}$ does not hold; hence Assumption \ref{ass:bounded_stochastic_gradient} is required to upper-bound $\mathbb{E} [ \hat{\vv}_t ]$ in their proofs.

\citet{zhou2018adashift} also tried to fix the non-convergent behavior of \adam{}.
Their proposed \adashift{} uses $\vv_{t-n}$ instead of $\vv_t$ for the second moment estimate, and calculate the momentum using the latest $n$ gradients as follows:
\begin{align}
    &{\vm}_t = \frac{\sum_{k=0}^{n-1} \beta_1^k \vg_{t-k} } { \sum_{k=0}^{n-1} \beta_1^k }, \label{eq:adashift} \\
    &\vtheta_t = \vtheta_{t-1} - \alpha_{t} \frac{{ \vm }_t} { \sqrt{ {\vv}_{t-n} + \epsilon^2 } }. 
\end{align}
In the original paper, some additional techniques (e.g., the block-wise adaptive learning rate) are used, but we omit them for clarity here.
Though they give theoretical analysis for a single online convex example, any convergence bounds are not provided for nonconvex cases.
More detailed discussion on existing analyses is provided in Appendix \ref{sec:detailed_related}.
\section{Analysis: Cause of Non-convergence of Adam and How to Fix It}
\label{sec:non-convergence_analysis}
In this section, to derive an algorithm that can converge with any $\beta_2$ without Assumption \ref{ass:bounded_stochastic_gradient}, we analyze the cause of non-convergence of \adam{}, and discuss how it can be eliminated.
To start from a simple case, we first analyze the case without momentum.
Subsequently, we extend it to the case with momentum and provide a way to fix the convergence issue of \adam{}.

\subsection{Case without Momentum}
We first analyze the convergence of \rmsprop{}, which corresponds to the no-momentum case of \adam{} when we omit the bias correction.
For \rmsprop{}, we derive the following convergence bound.

\begin{theorem}
\label{thm:rmsprop}
    Under Assumptions \ref{ass:bounded_objective}-\ref{ass:smooth} and \ref{ass:second_moment}, the following holds for the \rmsprop{} with a constant learning rate $\alpha_t = \alpha$:
    \begin{align}
        &\min_{t=1, \ldots, T} \left\{ \mathbb{E} \left[ \left\| \nabla f ( \vtheta_{t-1} ) ) \right\|^{4/3} \right]^{3/2} \right\} 
        \leq C_1 \left( \frac{ f_0 - f_\mathrm{inf} }{\alpha T} + \frac{ C_2 }{ T }  \log \left( 1 + \frac{ G^2 }{ \epsilon^2 } \right) - C_2 \log \beta_2 \right), \label{eq:bound_rmsprop}
    \end{align}
    where $C_1 = 2 \sqrt{ G^2 + \epsilon^2 }$, $C_2 = \frac{ \alpha D L }{ 2 \left( 1 - \beta_2 \right) } + \frac{2 D G}{ \sqrt{ 1 - \beta_2 } }$, and $f_0 = f \left( \vtheta_0 \right)$.

\begin{proof}[Sketch of proof]
By Assumption \ref{ass:smooth}, the following holds:
\begin{align}
    &\mathbb{E} \left[ f \left( \vtheta_{t} \right) \right]
    \leq \mathbb{E} \left[ f \left( \vtheta_{t-1} \right) + \frac{ \alpha^2 L }{ 2 } \left\| \frac{ \vg_t } { \sqrt{ {\vv}_t + \epsilon^2 } } \right\|^2 - \alpha \nabla f \left( \vtheta_{t-1} \right)^\top \left( \frac{ \vg_t } { \sqrt{ {\vv}_t + \epsilon^2 } } \right) \right] \label{eq:smooth_expectation}
\end{align}

Applying Lemmas \ref{lem:decompose_v} and \ref{lem:holder} in the appendix to this, the following inequality is derived:
\begin{align}
    &\mathbb{E} \left[ f \left( \vtheta_{t} \right) \right] \nonumber \\
    &\leq \mathbb{E} \left[ f \left( \vtheta_{t-1} \right) + \left( \frac{ \alpha^2 L }{ 2 } + 2 \alpha G \sqrt{ 1 - \beta_2 } \right) \left\| \frac{ \vg_t } { \sqrt{ {\vv}_t + \epsilon^2 } } \right\|^2 - \frac{ \alpha }{2} \nabla f \left( \vtheta_{t-1} \right)^\top \left( \frac{ \vg_t } { \sqrt{ \tilde{\vv}_{t} + \epsilon^2 } } \right) \right] \label{eq:decouple_v}\\
    &\leq \mathbb{E} \left[ f \left( \vtheta_{t-1} \right) + \left( \frac{ \alpha^2 L }{ 2 } + 2 \alpha G \sqrt{ 1 - \beta_2 } \right) \left\| \frac{ \vg_t } { \sqrt{ {\vv}_t + \epsilon^2 } } \right\|^2 \right] - \frac{ \alpha }{2} \frac{\mathbb{E} \left[ \left\| \nabla f \left( \vtheta_{t-1} \right) \right\|^{4/3} \right]^{3/2} } { \sqrt{ \left( 1 - \beta_2^T \right) G^2 + \epsilon^2 } } , \label{eq:make_norm_expectation}
\end{align}
where $\tilde{\vv}_t = \beta_2 \vv_{t-1} + ( 1 - \beta_2 ) \mathbb{E} [ \vg_t \odot \vg_t ]$.
Telescoping this for $t = 1, \ldots, T$ and rearranging the terms, we have
\begin{align}
    \sum_{t=1}^{T} \mathbb{E} \left[ \left\| \nabla f \left( \vtheta_{t-1} \right) \right\|^{4/3} \right]^{3/2}
    \leq C_1 \left( \frac{ f \left( \vtheta_0 \right) - f_\mathrm{inf} }{\alpha} + C_2 \log \left( \frac{ G^2 + \epsilon^2 }{ \beta_2^T \epsilon^2 } \right) \right),
\end{align}
where the last inequality holds due to Assumption \ref{ass:bounded_objective} and Lemma \ref{lem:telescope_v}.
Therefore, the bound in Eq. (\ref{eq:bound_rmsprop}) is derived using $\min_{t=1, \ldots, T} \{ \mathbb{E} [ \| \nabla f ( \vtheta_{t-1} ) ) \|^{4/3} ]^{3/2} \} \leq  \sum_{t=1}^{T} \mathbb{E} [ \| \nabla f \left( \vtheta_{t-1} \right) \|^{4/3} ]^{3/2} / T$.
\end{proof}
\end{theorem}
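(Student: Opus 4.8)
The plan is to run the textbook descent-lemma argument for \rmsprop{}, but to confront directly the single place where the usual SGD analysis fails for adaptive methods: the denominator $\sqrt{\vv_t + \epsilon^2}$ is correlated with the numerator $\vg_t$, so the inner-product term in the descent inequality cannot be turned into a gradient norm by merely conditioning on the past. First I would invoke $L$-smoothness (Assumption~\ref{ass:smooth}), substitute the update $\vtheta_t - \vtheta_{t-1} = -\alpha\, \vg_t / \sqrt{\vv_t + \epsilon^2}$, and take expectations to reach Eq.~(\ref{eq:smooth_expectation}). The quadratic term there is benign and is saved for the telescoping at the end; all of the difficulty lives in the cross term $-\alpha\,\mathbb{E}[\nabla f(\vtheta_{t-1})^\top (\vg_t / \sqrt{\vv_t + \epsilon^2})]$.

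The decisive step (Lemma~\ref{lem:decompose_v}) is to decorrelate the denominator from $\vg_t$ by replacing $\vv_t$ with $\tilde{\vv}_t = \beta_2 \vv_{t-1} + (1-\beta_2)\,\mathbb{E}[\vg_t \odot \vg_t]$, which is measurable with respect to the history $\mathcal{F}_{t-1}$ and therefore independent of the fresh randomness in $\vg_t$. I would control the swap through the identity $\vv_t - \tilde{\vv}_t = (1-\beta_2)(\vg_t \odot \vg_t - \mathbb{E}[\vg_t \odot \vg_t])$, bounding the error $\nabla f^\top \vg_t\,(1/\sqrt{\vv_t+\epsilon^2} - 1/\sqrt{\tilde{\vv}_t+\epsilon^2})$ by a Lipschitz estimate on $x \mapsto 1/\sqrt{x+\epsilon^2}$ together with Assumption~\ref{ass:second_moment}. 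This produces the factor $\sqrt{1-\beta_2}$, folds the error into the quadratic term (now with coefficient $\tfrac{\alpha^2 L}{2} + 2\alpha G\sqrt{1-\beta_2}$), and halves the coefficient of the cross term, giving Eq.~(\ref{eq:decouple_v}). The point is that only a second moment is used here, which is precisely what removes the need for Assumption~\ref{ass:bounded_stochastic_gradient}.

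With the denominator now past-measurable, I would take the conditional expectation given $\mathcal{F}_{t-1}$: by unbiasedness (Assumption~\ref{ass:unbiased}), $\mathbb{E}[\vg_t \mid \mathcal{F}_{t-1}] = \nabla f(\vtheta_{t-1})$, so the cross term becomes $\mathbb{E}[\sum_i (\nabla_i f)^2 / \sqrt{\tilde v_{t,i}+\epsilon^2}]$. To turn this weighted sum into a genuine gradient measure I would bound every coordinate denominator $\sqrt{\tilde v_{t,i}+\epsilon^2}$ above by its maximum over $i$, which lower-bounds the sum by $\|\nabla f(\vtheta_{t-1})\|^2$ divided by that maximum, and then apply H\"older's inequality over the sample space with exponents $3/2$ and $3$ (Lemma~\ref{lem:holder}) to decouple the gradient norm from this random scaling factor. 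This is exactly where the $4/3$ exponent is born: it yields $\mathbb{E}[\|\nabla f(\vtheta_{t-1})\|^{4/3}]^{3/2}$ in the numerator, while the expected scaling is bounded by $(1-\beta_2^T)G^2 + \epsilon^2$ using $\mathbb{E}[\sum_i \tilde v_{t,i}] \leq (1-\beta_2^T)G^2$, producing Eq.~(\ref{eq:make_norm_expectation}).

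Finally I would telescope over $t = 1, \ldots, T$. The objective values collapse to $f_0 - \mathbb{E}[f(\vtheta_T)] \leq f_0 - f_\mathrm{inf}$ by Assumption~\ref{ass:bounded_objective}, while the accumulated quadratic terms $\sum_t \|\vg_t / \sqrt{\vv_t+\epsilon^2}\|^2$ are controlled by an \adagrad{}-style logarithmic bound (Lemma~\ref{lem:telescope_v}), which is the source of the $D$-dependent constant $C_2$ and of the $\log(1 + G^2/\epsilon^2) - \log\beta_2$ terms; dividing by $T$ and using $\min_t \leq$ average then gives Eq.~(\ref{eq:bound_rmsprop}). I expect the decorrelation estimate to be the main obstacle, since one must show the replacement $\vv_t \to \tilde{\vv}_t$ costs only a factor of order $\sqrt{1-\beta_2}$ rather than $\mathcal{O}(1)$, and must do so using only the second-moment bound of Assumption~\ref{ass:second_moment} instead of an almost-sure bound on $\vg_t$. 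The H\"older step is the secondary subtlety: because $\tilde{\vv}_t$ is random and bounded only in expectation, no constant lower bound on $1/\sqrt{\tilde v_{t,i}+\epsilon^2}$ is available, which is what forces the $4/3$-power formulation of the gradient.
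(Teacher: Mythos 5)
Your plan follows the paper's proof essentially step for step: the descent lemma giving Eq.~(\ref{eq:smooth_expectation}), the decorrelation of the denominator via $\tilde{\vv}_t = \beta_2 \vv_{t-1} + (1-\beta_2)\,\mathbb{E}[\vg_t \odot \vg_t]$ (Lemma~\ref{lem:decompose_v}), the H\"older step with exponents $3/2$ and $3$ that produces the $4/3$-power gradient measure (Lemma~\ref{lem:holder}), and the \adagrad{}-style logarithmic bound on $\sum_t \|\vg_t/\sqrt{\vv_t+\epsilon^2}\|^2$ (Lemma~\ref{lem:telescope_v}) followed by telescoping and the min-versus-average step. You also correctly locate where each constant in $C_1$ and $C_2$ comes from.

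The one place where your stated mechanism would fail if implemented literally is the swap error in the decorrelation step. A Lipschitz estimate on $x \mapsto 1/\sqrt{x+\epsilon^2}$ has constant $\tfrac{1}{2}\epsilon^{-3}$, so it bounds the error term by something proportional to $(1-\beta_2)\,\epsilon^{-3}\,|(\nabla f)_i (\vg_t)_i|\,\bigl|(\vg_t)_i^2 - \mathbb{E}[(\vg_t)_i^2]\bigr|$, whose expectation involves third moments of $\vg_t$ — these are not controlled by Assumption~\ref{ass:second_moment}, which is exactly the assumption you are trying not to strengthen. It also yields a factor $(1-\beta_2)$ rather than the $\sqrt{1-\beta_2}$ you need. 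The paper instead writes the difference exactly as
\begin{align}
    \frac{1}{\sqrt{(\vv_t)_i+\epsilon^2}} - \frac{1}{\sqrt{(\tilde{\vv}_t)_i+\epsilon^2}}
    = \frac{(1-\beta_2)\bigl(\mathbb{E}[(\vg_t)_i^2] - (\vg_t)_i^2\bigr)}{\sqrt{(\vv_t)_i+\epsilon^2}\,\sqrt{(\tilde{\vv}_t)_i+\epsilon^2}\,\bigl(\sqrt{(\vv_t)_i+\epsilon^2}+\sqrt{(\tilde{\vv}_t)_i+\epsilon^2}\bigr)}
\end{align}
and keeps the adaptive denominators, so that the high powers of $(\vg_t)_i$ in the numerator are cancelled using $(\vv_t)_i \geq (1-\beta_2)(\vg_t)_i^2$ and $(\tilde{\vv}_t)_i \geq (1-\beta_2)\mathbb{E}[(\vg_t)_i^2]$; Young's inequality $xy \leq \tfrac{\lambda}{2}x^2 + \tfrac{1}{2\lambda}y^2$ with $\lambda$ of order $1/(1-\beta_2)$ then splits the result into a $\tfrac{1}{4}$-fraction of the cross term (absorbed by halving its coefficient) plus $\tfrac{G}{\sqrt{1-\beta_2}}\cdot\tfrac{(\vg_t)_i^2}{(\vv_t)_i+\epsilon^2}$ (absorbed into the quadratic term). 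This cancellation, not a flat Lipschitz bound, is what makes the argument close under only a second-moment assumption; the rest of your outline is sound.
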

A detailed proof is provided in the appendix.
When the learning rate $\alpha$ is chosen so that $\alpha = \Theta ( 1 / \sqrt{T} )$, the first and second terms on the right hand side of Eq. (\ref{eq:bound_rmsprop}) converge with $\mathcal{O} ( 1 / \sqrt{T} )$ and $\mathcal{O} ( 1 / T )$ rates, respectively.
However, the last term includes a constant factor in terms of $T$, which represents the non-convergent behavior of \rmsprop{} in the smooth nonconvex setting.
More precisely, \rmsprop{} is guaranteed to converge only to a bounded region around a stationary point, and the size of the bounded region depends on the hyperparameter $\beta_2$ and the problem-dependent factors $D$, $G$, and $L$.
Therefore, we need to choose $\beta_2$ dependently on each problem to make the bounded region adequately small.
Since $\lim_{\beta_2 \to 1} \log \beta_2 / \sqrt{1 - \beta_2} = 0$, the size of the bounded region can be made small by setting $\beta_2$ to a value close to 1, which aligns with practical observations. 
However, how close to $1$ it should be relies on the problem-dependent factors, which cannot be observed in advance.
This result is consistent with recent results of convergence analyses of \adam{} and \rmsprop{}~\citep{shi2020rmsprop,zhang2022adam}.

As can be seen from Eqs. (\ref{eq:smooth_expectation}) and (\ref{eq:decouple_v}), the constant term in Eq. (\ref{eq:bound_rmsprop}) is derived from the last term of Eq. (\ref{eq:smooth_expectation}).
Because $\vg_t$ and $\vv_t$ are not statistically independent, this term is first decomposed as in Eq. (\ref{eq:decouple_v}).
After the decomposition, $\vg_t$ and $\tilde{\vv}_{t}$ is now conditionally independent given $\vg_0, \ldots, \vg_{t-1}$, so Eq. (\ref{eq:make_norm_expectation}) is derived using the following fact:
\begin{align}
    \mathbb{E} \left[ \frac{ \vg_t } { \sqrt{ \tilde{\vv}_{t} + \epsilon^2 } } \right] = \mathbb{E} \left[ \frac{ \nabla f \left( \vtheta_{t-1} \right) } { \sqrt{ \tilde{\vv}_{t} + \epsilon^2 } } \right].
\end{align}
This indicates that, if the second moment estimate $\vv_t$ is designed to be conditionally independent to $\vg_t$, the constant term in the convergence bound will be removed, because the second term of Eq. (\ref{eq:smooth_expectation}) can be directly lower-bounded without the decomposition.
A simple way to achieve the conditional independence is to substitute $\vv_{t-1}$ for $\vv_t$ as a second moment estimate, because $\vv_{t-1}$ does not have information about $\vg_t$.
This solution is similar to AdaShift, in which $\vv_{t-n}$ is substituted for $\vv_t$ as described in Eq. (\ref{eq:adashift}).
In fact, the modified version of \rmsprop{} is identical to AdaShift with $n=1$ and $\beta_1 = 0$ except for the additional techniques (e.g., the block-wise adaptive learning rate).

\subsection{Case with Momentum}
As we have described, \rmsprop{} can be modified to be convergent by removing the current gradient $\vg_t$ from the second moment estimate $\vv_t$.
However, when we combine adaptive gradient methods with momentum like \adam, the convergence analysis becomes more complicated.
Unfortunately, when \adam{}-style momentum in Eq. (\ref{eq:adam_momentum}) is applied, the algorithm does not converge in general even when using $\vv_{t-1}$ as a second moment estimate instead of $\vv_t$.
This is because the momentum $\vm_t$ contains all history of the past gradients $\vg_0 , \ldots, \vg_t$; hence the second moment estimate always correlates with $\vm_t$.
\adashift{} prevents this problem by calculating the momentum $\vm_t$ only using the latest $n$ gradients as described in Eq. (\ref{eq:adashift}).
In that case, the momentum $\vm_t$ and the second moment estimate $\vv_{t-n}$ are conditionally independent, so the convergence can be retained.
However, this approach has a trade-off in the choice of $n$.
When $n$ is small, $\vm_t$ has little information about the past gradients; when $n$ is large, $\vv_{t-n}$ only has access to the gradient information in the distant past.

To remove this trade-off, instead of truncating the momentum to the latest $n$ steps, we propose to use momentum of the following form:
\begin{align}
    &\vm_t = \beta_1 \vm_{t-1} + \left( 1 - \beta_1 \right) \frac{ \vg_{t} }{ \sqrt{ {\vv}_{t-1} + \epsilon^2 } }, \label{eq:adopt_momentum} \\
    &\vtheta_{t} = \vtheta_{t-1} - \alpha_{t} \vm_t . \label{eq:adopt_param}
\end{align}
The main difference to the \adam{}-style momentum in Eq. (\ref{eq:adam_momentum}) is the order of update of $\vm_t$ and the normalization by $\sqrt{ {\vv}_{t-1} + \epsilon^2 }$.
In Eq. (\ref{eq:adam_momentum}), the normalization is performed after the update of $\vm_t$, whereas in Eq. (\ref{eq:adopt_momentum}), the normalization is first applied to the current gradient $\vg_t$ in advance to the update of $\vm_t$.
In this case, the second moment estimate $\vv_{t-1}$ is only used to normalize the current gradient $\vg_t$, so the convergence can be guaranteed.
A more detailed convergence analysis is provided in Section \ref{sec:adopt}.
\section{Method: Adaptive Gradient Method with the Optimal Convergence Rate}
\label{sec:adopt}

\begin{algorithm}[t]
\caption{\adopt{} algorithm}
\label{alg:adopt}
\begin{algorithmic}
    \REQUIRE Learning rate $\left\{ \alpha_t \right\}$, initial parameter $\vtheta_0$
    \REQUIRE Exponential decay rate $0 \leq \beta_1, \beta_2, \beta_3 < 1$, small constant $\epsilon > 0$
    \STATE {$\vv_0 \leftarrow \vg_0 \odot \vg_0, \vm_1 \leftarrow \vg_1 / \max \left\{ \sqrt{ \vv_0 } , \epsilon \right\}$}
    \FOR {$t = 1$ to $T$}
        \STATE {$\vtheta_{t} \leftarrow \vtheta_{t-1} - \alpha_{t} \vm_t $}
        \STATE {$\vv_{t} \leftarrow \beta_2 \cdot \vv_{t-1} + \left( 1 - \beta_2 \right) \vg_{t} \odot \vg_{t}$}
        \STATE {{\color[HTML]{007AFF} $\vm_{t+1} \leftarrow \beta_1 \cdot \vm_{t} + \left( 1 - \beta_1 \right) \frac{ \vg_{t+1} }{ \max \left\{ \sqrt{ \vv_{t} } , \epsilon \right\} }$}}
    \ENDFOR
    \STATE {{\bf return} $\{ \vtheta_t \}_{t=1}^{T}$}
\end{algorithmic}
\end{algorithm}

Based on the analysis in the previous section, we propose a new adaptive gradient method named \adopt{} ({\it ADaptive gradient method with the OPTimal convergence rate}).
The entire procedure is summarized in Algorithm \ref{alg:adopt}.
For a simple discription, we place the update of $\vm$ after the parameter update in Algorithm \ref{alg:adopt}, but it is equivalent to Eqs. (\ref{eq:adopt_momentum}) and (\ref{eq:adopt_param}) except that \(\max \left\{ \sqrt{ \vv } , \epsilon \right\}\) is substitued for \( \sqrt{ \vv + \epsilon^2 } \).
The substitition is applied because we find that it contributes to slightly better performance in practice.
We provide an equivalent expression of Algorithm \ref{alg:adopt} in Algorithm \ref{alg:adopt_new} in the appendix, which is closer to a practical implementation.
By this modification, \adopt{} can converge with the optimal rate for smooth nonconvex optimization as follows:

\begin{algorithm}[t]
\caption{Clipped \adopt{} algorithm}
\label{alg:clipped_adopt}
\begin{algorithmic}
    \REQUIRE Learning rate $\left\{ \alpha_t \right\}$, clipping value $\left\{ c_t \right\}$, initial parameter $\vtheta_0$
    \REQUIRE Exponential decay rate $0 \leq \beta_1, \beta_2 < 1$, small constant $\epsilon > 0$
    \STATE {$\vm_0 \leftarrow \vzero, \vv_0 \leftarrow \vg_0 \odot \vg_0$}
    \FOR {$t = 1$ to $T$}
        \STATE {{\color[HTML]{007AFF} $\vm_{t} \leftarrow \beta_1 \cdot \vm_{t-1} + \left( 1 - \beta_1 \right) \mathrm{Clip} \left( \frac{ \vg_{t} }{ \max \left\{ \sqrt{ \vv_{t-1} } , \epsilon \right\} } , c_t \right)$}}
        \STATE {$\vtheta_{t} \leftarrow \vtheta_{t-1} - \alpha_{t} \vm_t $}
        \STATE {$\vv_{t} \leftarrow \beta_2 \cdot \vv_{t-1} + \left( 1 - \beta_2 \right) \vg_{t} \odot \vg_{t}$}
    \ENDFOR
    \STATE {{\bf return} $\{ \vtheta_t \}_{t=1}^{T}$}
\end{algorithmic}
\end{algorithm}

\begin{theorem}
\label{thm:adopt_constant}
Under Assumptions \ref{ass:bounded_objective}-\ref{ass:smooth} and \ref{ass:second_moment}, the following holds for the \adopt{} algorithm with a constant learning rate $\alpha_t=\alpha = \Theta \left( 1 / \sqrt{T} \right)$:
\begin{align}
    &\min_{t=1, \ldots, T} \left\{ \mathbb{E} \left[ \left\| \nabla f ( \vtheta_{t-1} ) ) \right\|^{4/3} \right]^{3/2} \right\}
    \leq \mathcal{O} \left( 1 / \sqrt{T} \right) ,  \label{eq:adopt_bound}
\end{align}
\end{theorem}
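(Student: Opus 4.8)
The plan is to reduce the \adopt{} momentum recursion to a plain normalized-SGD step via an auxiliary iterate, and then run the \rmsprop{} argument of Theorem \ref{thm:rmsprop} while exploiting the one-step delay of the second moment estimate to avoid the spurious constant term. Writing the normalized gradient as $\vphi_t := \vg_t / \max\{\sqrt{\vv_{t-1}}, \epsilon\}$ (element-wise), the update reads $\vm_t = \beta_1 \vm_{t-1} + (1-\beta_1)\vphi_t$ and $\vtheta_t = \vtheta_{t-1} - \alpha \vm_t$. I would introduce the shifted sequence
\begin{align}
  \vz_t := \vtheta_t + \frac{\beta_1}{1-\beta_1}\left( \vtheta_t - \vtheta_{t-1} \right),
\end{align}
with the convention $\vtheta_{-1} = \vtheta_0$ (i.e.\ $\vm_0 = \vzero$), so that $\vz_0 = \vtheta_0$. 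A direct computation cancels the $\vm_{t-1}$ term and yields the clean recursion $\vz_t = \vz_{t-1} - \alpha \vphi_t$. Applying the descent lemma from Assumption \ref{ass:smooth} to $f(\vz_t)$ then gives
\begin{align}
  f(\vz_t) \le f(\vz_{t-1}) - \alpha \nabla f(\vz_{t-1})^\top \vphi_t + \frac{\alpha^2 L}{2}\left\| \vphi_t \right\|^2 .
\end{align}

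Next I would extract the descent. The crucial point is that $\vv_{t-1}$ is measurable with respect to $\mathcal{F}_{t-1} := \sigma(\vg_0, \ldots, \vg_{t-1})$, so $\vphi_t$ is a fixed rescaling of $\vg_t$ and Assumption \ref{ass:unbiased} gives $\mathbb{E}[\vphi_t \mid \mathcal{F}_{t-1}] = \nabla f(\vtheta_{t-1}) / \max\{\sqrt{\vv_{t-1}}, \epsilon\}$; this is exactly the conditional-independence property that, in the no-momentum case, removed the $-\log\beta_2$ term. To use it I would split $\nabla f(\vz_{t-1}) = \nabla f(\vtheta_{t-1}) + (\nabla f(\vz_{t-1}) - \nabla f(\vtheta_{t-1}))$. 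The first piece produces the genuine descent $\mathbb{E}[\sum_i (\nabla f(\vtheta_{t-1}))_i^2 / \max\{\sqrt{(\vv_{t-1})_i}, \epsilon\}]$, while the discrepancy is controlled by smoothness: since $\vz_{t-1} - \vtheta_{t-1} = -\frac{\alpha\beta_1}{1-\beta_1}\vm_{t-1}$, we have $\|\nabla f(\vz_{t-1}) - \nabla f(\vtheta_{t-1})\| \le \frac{\alpha\beta_1 L}{1-\beta_1}\|\vm_{t-1}\|$, and Cauchy--Schwarz together with Young's inequality turns the cross term into $\mathcal{O}(\alpha^2)$ contributions in $\|\vm_{t-1}\|^2$ and $\|\vphi_t\|^2$.

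I would then take expectations, telescope over $t = 1, \ldots, T$, and invoke Assumption \ref{ass:bounded_objective}. All second-order terms reduce to $\sum_t \mathbb{E}[\|\vphi_t\|^2]$: the momentum norms are absorbed using $\|\vm_t\|^2 \le \beta_1\|\vm_{t-1}\|^2 + (1-\beta_1)\|\vphi_t\|^2$, which sums to $\sum_t \|\vm_t\|^2 \le \sum_t \|\vphi_t\|^2$, and the $\epsilon$ floor with Assumption \ref{ass:second_moment} bounds $\mathbb{E}[\|\vphi_t\|^2] \le G^2/\epsilon^2$, so the whole second-order mass is $\mathcal{O}(\alpha^2 T)$. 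For the descent term I would apply Hölder's inequality as in Theorem \ref{thm:rmsprop}: using $\max\{\sqrt{(\vv_{t-1})_i}, \epsilon\} \le (\sum_j \max\{(\vv_{t-1})_j, \epsilon^2\})^{1/2}$ followed by Hölder across the expectation lower-bounds each descent term by $\mathbb{E}[\|\nabla f(\vtheta_{t-1})\|^{4/3}]^{3/2} / \sqrt{\mathbb{E}[\sum_i \max\{(\vv_{t-1})_i, \epsilon^2\}]}$, whose denominator is a deterministic $\mathcal{O}(\sqrt{G^2 + D\epsilon^2})$ by Assumption \ref{ass:second_moment}. Collecting everything and using $\min_t \le \frac{1}{T}\sum_t$ yields a bound of the form $\mathcal{O}(1/(\alpha T)) + \mathcal{O}(\alpha)$; with $\alpha = \Theta(1/\sqrt{T})$ both terms are $\mathcal{O}(1/\sqrt{T})$, as claimed.

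The main obstacle is the momentum error term. Unlike \rmsprop{}, here the gradient is evaluated at $\vtheta_{t-1}$ but the descent is natural at the shifted point $\vz_{t-1}$, and the gap between them is proportional to $\|\vm_{t-1}\|$, which is itself built from the (possibly large, since we do not assume bounded noise) normalized gradients $\vphi_k$. The delicate part is therefore to show that feeding $\sum_t \mathbb{E}[\|\vm_{t-1}\|^2]$ back through the $\mathcal{O}(\alpha^2)$ prefactor does not reintroduce a non-vanishing constant; this works precisely because the extra error is second order in $\alpha$ while the delayed estimate $\vv_{t-1}$ keeps the leading first-order term an exact descent.
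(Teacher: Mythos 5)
Your proposal follows essentially the same route as the paper's proof: your auxiliary iterate $\vz_t$ is exactly the paper's $\vphi_t = \frac{1}{1-\beta_1}\vtheta_t - \frac{\beta_1}{1-\beta_1}\vtheta_{t-1}$, and the subsequent steps (descent lemma at the shifted point, splitting $\nabla f(\vz_{t-1})$ into $\nabla f(\vtheta_{t-1})$ plus an $\mathcal{O}(\alpha\|\vm_{t-1}\|)$ discrepancy, extracting the descent via conditional measurability of $\vv_{t-1}$ and H\"older, bounding the momentum and normalized-gradient norms by $\mathcal{O}(G^2/\epsilon^2)$, and telescoping) mirror the paper's Lemmas \ref{lem:phi}, \ref{lem:phi_theta}, \ref{lem:bound_dot_prod}, and \ref{lem:bound_exp_sq_norm_m}. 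The only cosmetic differences are that the paper carries out the argument for the clipped variant with general $\alpha_t$ and specializes, and bounds $\mathbb{E}[\|\vm_t\|^2]$ pointwise rather than via your summed recursion; both yield the same $\mathcal{O}(1/(\alpha T)) + \mathcal{O}(\alpha)$ bound.
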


The detailed proof and related lemmas are provided in the appendix.
We also provide the convergence bound for the case of diminishing learning rate (i.e., $\alpha_t = \Theta ( 1 / \sqrt{t} )$) in the appendix, which is closer to practical situations.
In that case, \adopt{} also converges with the optimal rate.

In practice, however, the ADOPT algorithm sometimes becomes unstable especially when near-zero gradients are observed in the early phase of optimization.
For example, if some elements of \(\vv_0\) are almost zero, the corresponding elements of \(\vg_1 / \max \{ \sqrt{ \vv_0 } , \epsilon \}\) take very large values due to the near-zero division, leading to an unstable parameter update.
Such a phenomenon is typically observed when, for example, some parts of parameters (e.g., the last layer of a neural net) are initialized with zero, which is a commonly-used technique in deep learning.
To avoid the near-zero division, we also propose a clipped version of ADOPT in Algorithm \ref{alg:clipped_adopt}.
Note that the clipping operation is applied in an element-wise manner:
\begin{align}
    \mathrm{Clip} \left( \va , c \right)_i = \min \left\{ \max \left\{ \eva_i , - c \right\} , c \right\} ,
\end{align}
where \(c \geq 0\).
Even when the clipping is incorporated, the same convergence rate can be guaranteed by properly scheduling the clipping value $c_t$ (see Theorems \ref{thm:clipped_adopt_constant} and \ref{thm:clipped_adopt_deminishing} in the appendix).
Specifically, we can guarantee the convergence by setting \(c_t = c = \Theta \left( {T}^{1/4} \right) \) or \(c_t = \Theta \left( {t}^{1/4} \right) \).
\section{Experiments}
\label{sec:experiment}
\begin{figure*}[tbp]
    \centering
    \begin{minipage}{\linewidth}
        \centering
        $k = 10$
    \end{minipage}\\
    \begin{minipage}{0.325\linewidth}
        \includegraphics[width=\linewidth]{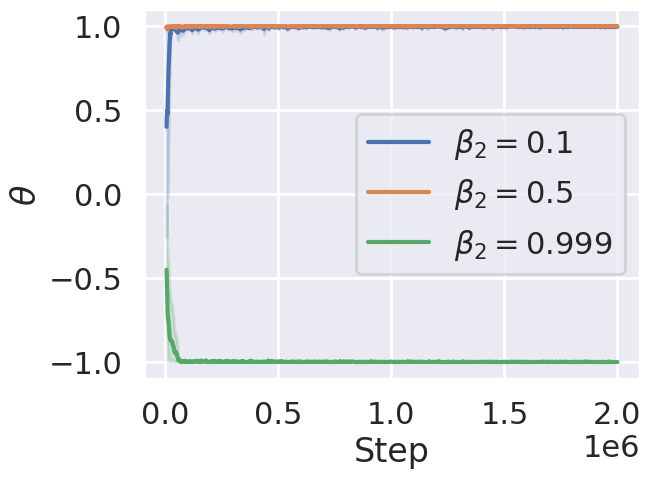}
    \end{minipage}
    \begin{minipage}{0.325\linewidth}
        \includegraphics[width=\linewidth]{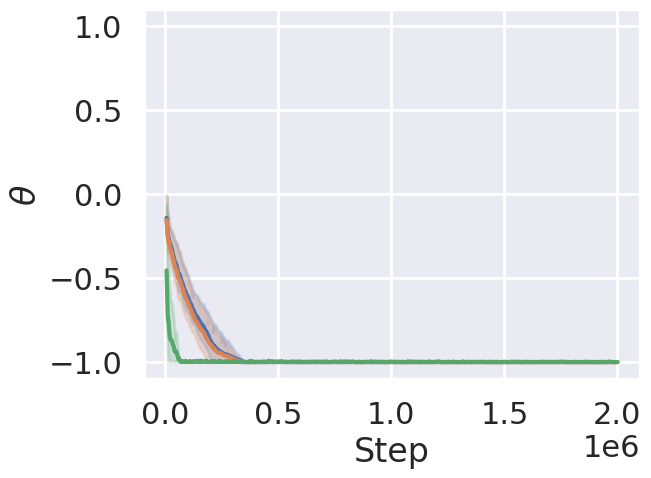}
    \end{minipage} 
    \begin{minipage}{0.325\linewidth}
        \includegraphics[width=\linewidth]{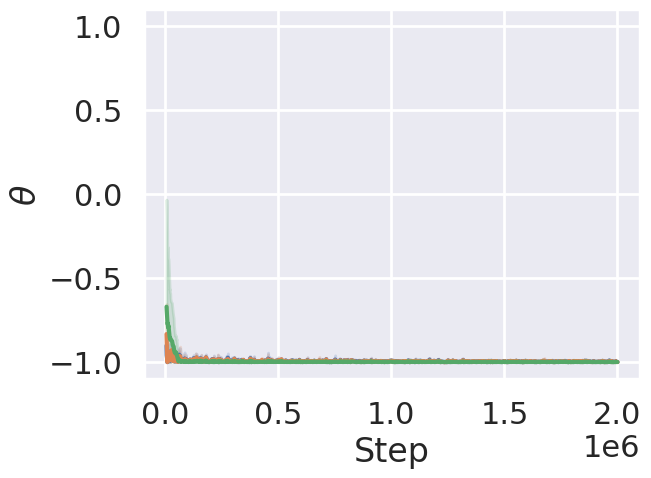}
    \end{minipage} \\
    \begin{minipage}{\linewidth}
    \end{minipage}\\
    \begin{minipage}{\linewidth}
        \centering
        $k = 50$
    \end{minipage}\\
    \begin{minipage}{0.325\linewidth}
        \includegraphics[width=\linewidth]{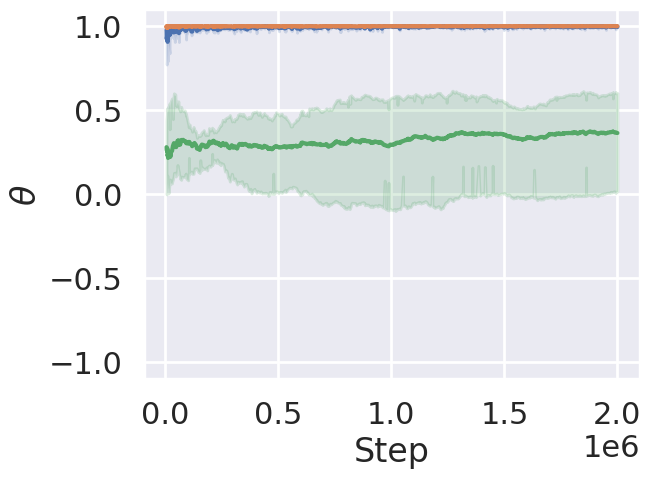}
    \end{minipage} 
    \begin{minipage}{0.325\linewidth}
        \includegraphics[width=\linewidth]{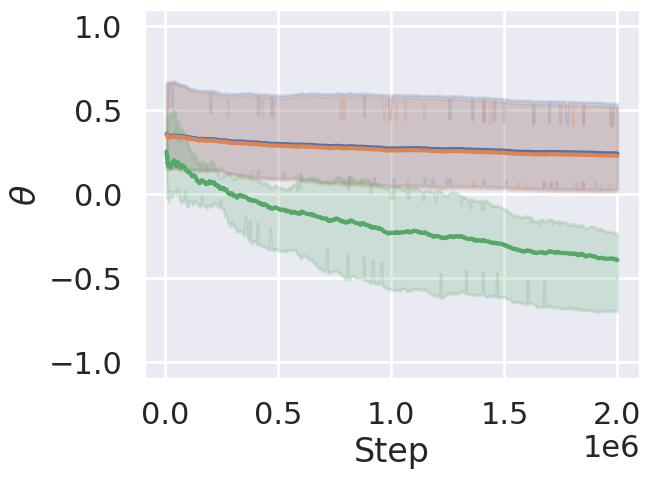}
    \end{minipage}
    \begin{minipage}{0.325\linewidth}
        \includegraphics[width=\linewidth]{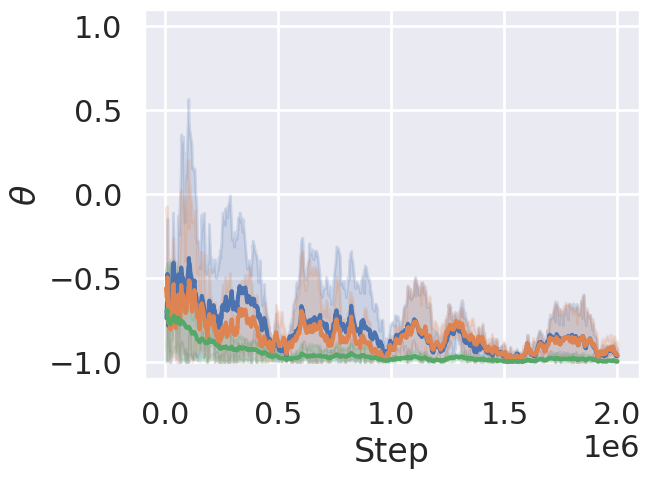}
    \end{minipage} \\
    \begin{minipage}{0.325\linewidth}
        \centering
        \adam{}
    \end{minipage} 
    \begin{minipage}{0.325\linewidth}
        \centering
        \amsgrad{}
    \end{minipage}
    \begin{minipage}{0.325\linewidth}
        \centering
        \adopt{}
    \end{minipage}
    \caption{Performance comparison between \adam, \amsgrad{} and \adopt{} in a simple univariate convex optimization problem. The plots show transitions of the parameter value, which should converge to the solution $\theta = -1$.}
    \label{fig:toy}
\end{figure*}
In the experiments, we first validate our \adopt{} algorithm using a simple toy example in which \adam{} is known to fail to converge, and confirm our theoretical findings through numerical simulation.
Secondly, we run an experiment of training a simple multi-layer perceptron (MLP) for the MNIST dataset to verify the effectiveness of our \adopt{} for nonconvex optimization problems.
Finally, we evaluate our \adopt{} in a wide range of practical applications, including image classification, natural language processing (NLP) tasks, generative modeling, and deep reinforcement learning.
Detailed experimental settings are described in the appendix.
In these experiments, we use the vanilla ADOPT algorithm in Algorithm \ref{alg:adopt} and do not use the clipped version in Algorithm \ref{alg:clipped_adopt}.

{\bf Toy problem:} We consider a convex optimization problem with an objective $f ( \theta ) = \theta$ for $\theta \in [ -1, 1 ]$.
It is obvious that a solution for the problem is $\theta = -1$.
Through the optimization, we only have access to the stochastic objective $f_t$ as follows:
\begin{align}
    &f_t \left( \theta \right) = 
    \begin{cases} 
    k^2 \theta, & \text { with probability } 1/k \\ 
    -k \theta, & \text { with probability } 1 - 1 / k
    \end{cases}, 
\end{align}
where $k \geq 1$.
Because $\mathbb{E} [ f_t ( \theta ) ] = f( \theta )$ holds, the stochastic gradient $g_t = \nabla f_t ( \theta )$ is an unbiased estimator of the true gradient $\nabla f$ regardless of the choice of $k$, satisfying Assumption \ref{ass:unbiased}.
This problem is equivalent, except for scaling, to the stochastic optimization version of Eq. (\ref{eq:online_toy}) provided by \citet{j.2018on} as a case where \adam{} fails to converge.
In this setting, the constant $k$ controls the magnitude of gradient noise.
When $k = 1$, it corresponds to the noiseless case where $f_t = f$ with probability $1$.
As $k$ gets large, stochastic gradient becomes noisy, making $G$ in Assumptions \ref{ass:second_moment} and \ref{ass:bounded_stochastic_gradient} large.
Therefore, the optimization will be more difficult when $k$ becomes larger.
In the experiment, we set $k = 10$ or $50$, and compare the robustness of \adam{}, \amsgrad{}, and \adopt{} for various hyperparameter settings by changing $\beta_2$ from $0.1 \sim 0.999$.
We set $\beta_1 = 0.9$ for all the algorithms, which is a common choice in practice.
We set the learning rate to $\alpha_t = 0.01 / \sqrt{1 + 0.01 t}$.

The result is shown in Figure \ref{fig:toy}.
It can be seen that, when $k = 10$, \adam{} fails to converge except for $\beta_2 = 0.999$ while \amsgrad{} and \adopt{} rapidly converge to the correct solution, i.e., $\theta = -1$, with any $\beta_2$.
In a more extreme case where $k = 50$, \adam{} fails to converge even with $\beta_2 = 0.999$.
This aligns with Theorem \ref{thm:rmsprop}, since, when the gradient noise is large (i.e., $G$ is large), the bounded region of the convergence bound also gets large, leading to divergence of \adam{}.
Moreover, when $k = 50$, it is observed that the convergence of \amsgrad{} also becomes much slower than \adopt{}.
In fact, this phenomenon is also consistent with theory.
In this problem setting, the second moment $\mathbb{E} [ g_t^2 ]$ is $\mathcal{O} ( k^3 )$, while the squared norm of the stochastic gradient $g_t^2$ is $\mathcal{O} ( k^4 )$.
Since the convergence bound of AMSGrad depends on the uniform bound of the stochastic gradient in Assumption \ref{ass:bounded_stochastic_gradient}, instead of the second moment in Assumption \ref{ass:second_moment}, its convergence also deteriorates with the order of $g_t^2$.
Compared to AMSGrad, ADOPT only depends on the second moment bound for its convergence, so it converges much faster than AMSGrad even in such an extreme setting.

We also perform ablation study on how the two algorithmic changes from \adam{} to \adopt{} affect the convergence.
The differences between \adam{} and \adopt{} are (1) decorrelation between the second moment estimate and the current gradient, and (2) change of order of momentum update and normalization by the second moment estimate.
In this experiment, we remove each algorithmic change from \adopt{}, and compare the result in the toy example.
We set $k = 50$, and $( \beta_1, \beta_2 ) = ( 0.9, 0.999 )$, since it is a common hyperparameter choice.
The result is shown in Figure \ref{fig:ablation}.
It can be observed that \adopt{} fails to converge with the exception of either algorithmic change.
Therefore, applying both changes is essential to overcome the non-convergence of \adam{}, which also aligns with theory.
These results correspond to the theoretical findings, showing the superiority of \adopt{} to \adam{} and \amsgrad{} in terms of the convergence speed and its robustness to hyperparameter choices.

{\bf MNIST classification:}
To investigate the effectiveness of \adopt{} on nonconvex optimization, we train nonlinear neural networks for MNIST classification tasks, and compare the performance between \adopt{} and existing optimization algorithms, such as \adam{}, \amsgrad{} and \adashift{}.
In this experiment, we use a simple MLP with a single hidden layer, and the number of hidden units is set to 784.
We set the learning rate to $\alpha_t = \alpha / \sqrt{t}$, and $\alpha$ is tuned in the range of $\{ 1, 10^{-1}, 10^{-2}, 10^{-3} \}$.
We apply weight decay of $1 \times 10^{-4}$ to prevent over-fitting, and run 10K iterations of parameter updates.
Figure \ref{fig:mnist} shows the learning curves of training and test accuracy.
We observe our \adopt{} performs slightly better than the others in terms of the convergence speed and the final performance.

{\bf Image classification:}
As a more practical application, we conduct experiments of image classification using real-world image datasets.
We first compare \adopt{} and \adam{} in the classification task of the CIFAR-10 dataset using ResNet-18~\citep{He_2016_CVPR}, a widely-used convolutional neural network.
We conduct a similar hyperparameter search to the case of MNIST classification.
A detailed experimental setting is provided in the appendix.
The learning curves of test accuracy are visualized in Figure \ref{fig:cifar10}.
It can be observed that \adopt{} converges a little faster than \adam{}.

\begin{figure}[tbp]
\label{}
    \centering
    \begin{minipage}{0.4\linewidth}
        \includegraphics[width=\linewidth]{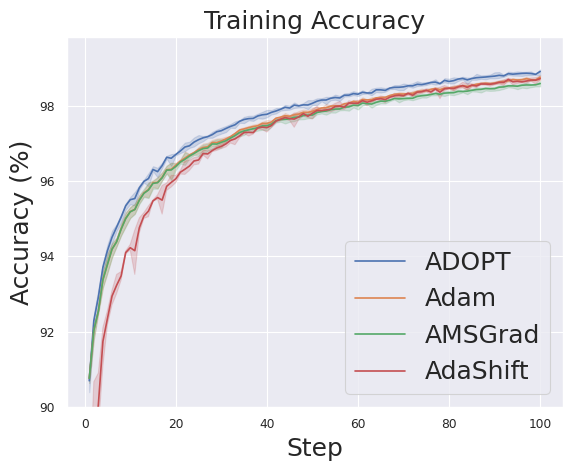}
    \end{minipage}
    \hspace{0.05\linewidth}
    \begin{minipage}{0.4\linewidth}
        \includegraphics[width=\linewidth]{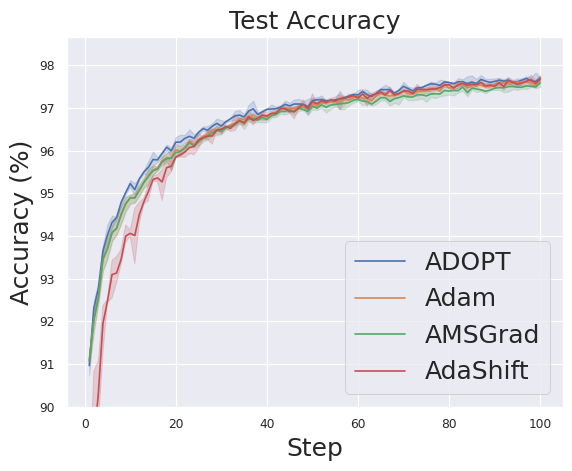}
    \end{minipage}
    \caption{Accuracy for training data (left) and test data(right) in MNIST classification. The error bars show the 95\% confidence intervals of three trials.}
    \label{fig:mnist}
\end{figure}

\begin{figure}[tbp]
    \centering
    \begin{minipage}{0.45\linewidth}
        \centering
        \vspace{\baselineskip}
        \includegraphics[width=\linewidth]{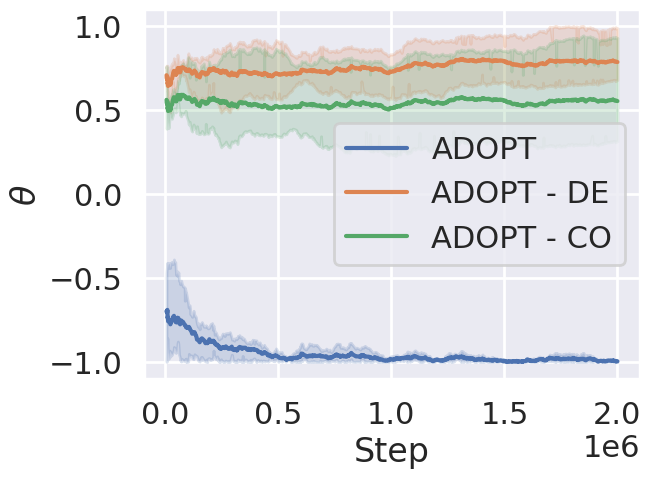}
        \vspace{-\baselineskip}
        \caption{Ablation study of algorithmic changes between \adam{} and \adopt{}. "DE" and CO denote "decorrelation" and "change of order", respectively.}
        \label{fig:ablation}
    \end{minipage}
    \hspace{0.05\linewidth}
    \begin{minipage}{0.45\linewidth}
        \centering
        \includegraphics[width=\linewidth]{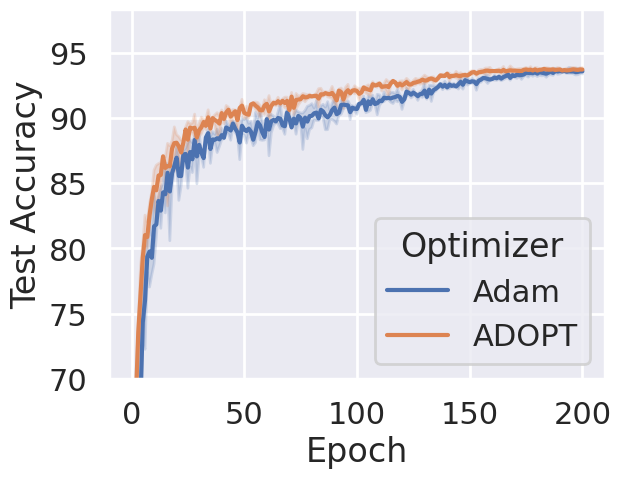}
        \vspace{-1.5\baselineskip}
        \caption{Learning curves of test accuracy for CIFAR-10 classification by ResNet-18 trained with \adam{} and \adopt{}.}
        \label{fig:cifar10}
    \end{minipage}
\end{figure}

To confirm that our \adopt{} works well for modern neural network architectures based on Transformers~\citep{NIPS2017_3f5ee243}, we perform an experiment of ImageNet classification using SwinTransformer~\citep{liu2021swin}.
We follow the official training recipe of Swin Transformer-tiny provided by Torchvision~\citep{paszke2019pytorch}, and fix the training settings except for the optimizer choice.
We use \adamw{}~\citep{loshchilov2018decoupled} as a baseline because it is set as the default official optimizer.
We also compare with \amsgrad{} as another way to fix the non-convergence issue of \adam{}.
Since \adamw{} uses decoupled weight decay, we also apply it to the other optimizers for fair comparison.
We report the top-1 accuracy at $200$ and $300$ epochs in Tables \ref{tab:imagenet}.
We observe that \adopt{} outperforms \adamw{} and \amsgrad{} throughout the training in terms of the test accuracy, demonstrating the effectiveness of \adopt{} for this setting.

\begin{table}[tb]
    \begin{minipage}{0.48\linewidth}
        \centering
        \caption{Top-1 accuracy (\%) for ImageNet classification by SwinTransformer.}
        \begin{tabular}{l|cc}
            \toprule
            Epoch       & $200$                     & $300$ \\ \midrule
            \adamw{}    & $79.29 \pm 0.05$          & $81.26 \pm 0.04$ \\
            \amsgrad{}  & $78.91 \pm 0.03$          & $81.17 \pm 0.03$    \\
            \adopt{}    & $\mathbf{79.62} \pm 0.03$    & $\mathbf{81.50} \pm 0.04$    \\
            \bottomrule
        \end{tabular}
        \label{tab:imagenet}
    \end{minipage}
    \hspace{0.02\linewidth}
    \begin{minipage}{0.48\linewidth}
        \centering
        \caption{Negative log-likelihood of NVAEs for MNIST density estimation. Lower is better.}
        \begin{tabular}{l|cc}
            \toprule
            Epoch     & $200$                       & $300$       \\ \midrule
            \adamax{} & $80.19 \pm 0.08$            & $79.41 \pm 0.07$   \\
            \adopt{}  & $\mathbf{79.02} \pm 0.10$   & $\mathbf{78.88} \pm 0.09$  \\
            \bottomrule
        \end{tabular}
        \label{tab:nvae}
    \end{minipage}
\end{table}

{\bf Generative modeling:}
We train NVAE~\citep{vahdat2020nvae} for MNIST using our \adopt{}.
In the official implementation of NVAE, \adamax{}~\citep{kingma2014adam}, an infinite-norm variant of \adam, is used as an optimizer, so we use \adamax{} as a baseline method.
We use the exactly the same setting of the official implementation except that the learning rate for \adopt{} is set to $2 \times 10^{-4}$ since the default value $0.01$ is too large for \adopt{}.
We report the negative log-likelihood for test data on Table \ref{tab:nvae}.
It is observed that the model trained with \adopt{} shows the better likelihood.

\begin{figure}[tbp]
    \begin{minipage}{0.49\linewidth}
        \centering
        \includegraphics[width=\linewidth]{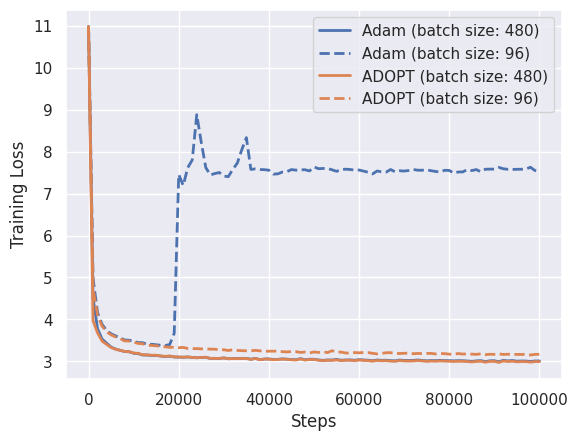}
    \end{minipage}
    \begin{minipage}{0.49\linewidth}
        \centering
        \includegraphics[width=\linewidth]{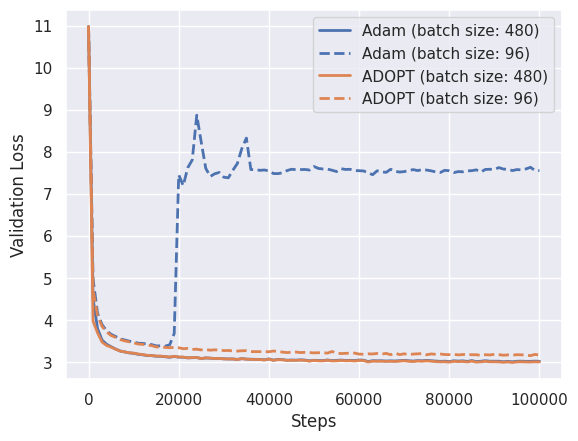}
    \end{minipage}
    \caption{Learning curves of GPT-2 pretraining for training set (left) and validation set (right).}
    \label{fig:gpt}
\end{figure}

{\bf Pretraining of large language models:}
We run a pre-training of GPT-2~\citep{radford2019language} using the nanoGPT~\citep{nanogpt} code base to compare Adam and ADOPT.
We use OpenWebText~\citep{Gokaslan2019OpenWeb} as the training data. 
Experimental setup conforms to the default settings of nanoGPT except for the selection of the optimizer. 
We also test a case in which the total batch size was changed from 480 to 96, as a setting where the gradient noise becomes larger. 
The results are summarized in Figure~\ref{fig:gpt}.
The most notable finding is that in the small batch size case, Adam causes loss spikes in the early stages of training and fails to converge, while ADOPT is always able to train stably.
This is consistent with Adam's theory of non-convergence.
As the gradient noise increases, \(G\) in Theorem 3.1 also increases, and the constant term in Adam's convergence bounds becomes non-negligible especially when using a large-scale dataset like OpenWebText.
As a result, Adam is more likely to fail to train in such cases.
Our ADOPT, on the other hand, does not suffer from this problem because it can always guarantee convergence.
We also observed that both Adam and ADOPT work well when the batch size is large, but even in this case, ADOPT performs slightly better.

{\bf Finetuning of large language models:}
We finetune the pretrained LLaMA-7B on 52K instruction-following data provided by Stanford Alpaca and compare the performance between the default optimizer (\adam{}) and our \adopt{} under the exactly same experimental setting.
For evaluation, we use Multi-task Language Understanding (MMLU) Benchmark~\citep{hendrycks2021measuring}, which is widely used to assess the performance of large language models.
The MMLU score for LLaMA-7B without finetuning is $35.1$.
After fine-tuned via instruction-following using the baseline implementation with \adam{}, the score improves to $41.2$.
When we substitute \adopt{} for \adam{}, the score even improves to $42.13$. 
The detailed score comparison for each task is summarized in Figure \ref{fig:alpaca} in the appendix. 
Other experimental results, including deep RL experiments, and detailed experimental settings are also provided in the appendix.
\section{Conclusion}
\label{sec:conclusion}
In this paper, we demystified the fundamental cause of divergence of adaptive gradient methods based on the exponential moving average, such as \adam{} and \rmsprop{}, in general smooth nonconvex optimization problems, and demonstrate a way to fix the issue, proposing a new optimizer named \adopt{}.
Not only does \adopt{} converge with the optimal rate without depending on a hyperparameter choice in theory, but \adopt{} demonstrates better performance in a wide range of pracital applications.

We expect that this work will serve as a bridge between theory and practice in the research of adaptive gradient methods.
Since \adopt{} can be safely applied to many machine learning problems without careful tuning of hyperparameters, it can be expected to improve the training stability and the model performance in practice by substituting it for the existing adaptive gradient methods (e.g., \adam{}).

One of the limitations of our analysis is that it still relies on the assumption that the second moment of stochastic gradient is uniformly bounded (i.e., Assumption \ref{ass:second_moment}).
Although this assumption is weaker than the bounded stochastic gradient assumption (i.e., Assumption \ref{ass:bounded_stochastic_gradient}), it would be more desirable to relax it to the bounded {\it variance} assumption (i.e., Assumption \ref{ass:bounded_variance}), which is often adopted in the analysis of the vanilla SGD~\citep{ghadimi2013stochastic}.
For \adam{}, a recent work by \citet{wang2023closing} have derived a problem-dependent convergence bound which achieves the $\mathcal{O} ( 1 / \sqrt{T} )$ rate without Assumption \ref{ass:second_moment}.
Their proof techniques may help to relax our assumptions in the proof of Theorem \ref{thm:adopt_constant}, which we leave as future work.

From a broader perspective, adaptive gradient methods like \adam{} have been widely used even for the training of large-scale foundation models (e.g., large language models).
Although such models can be useful for people, their negative aspects, such as concerns about copyright infringement, are not negligible.
Researchers needs to deeply recognize and understand such social impacts of machine learning algorithms.

\bibliographystyle{unsrtnat}
\bibliography{neurips_2024}

\newpage
\appendix

\section{Detailed Relationships to Existing Analyses}
\label{sec:detailed_related}

In this section, we discuss the relationships between our analysis and existing ones on the convergence of \adam{}-like optimizers in smooth nonconvex optimization problems.
Tables \ref{tab:problem_setting_comparison} and \ref{tab:rate_comparison} are a summary of comparisons between them in terms of their problem settings and derived convergence rates.

{\bf \citet{zhang2022adam}} focus on convergence of \adam{} in the finite sum problem, where the objective has a following form:
\begin{align}
    f \left( \vtheta \right) = \sum_{i=1}^n f_i \left( \vtheta \right).
\end{align}
$f_i$ is, for example, a loss function for $i$-th training sample.
Although many deep learning problems can be formulated as a finite sum problem, training of the variational autoencoders (VAEs) or diffusion models is out of the finite-sum problem, since their objective is formulated as an infinite sum (i.e., an expectation over continuous variables).
Moreover, they assume the stochastic gradient $\vg$ is $L$-Lipschitz, whereas we only assume true gradient $\nabla f$ is $L$-Lipschitz.
They also assume a growth condition as follows:
\begin{align}
    \mathbb{E} \left[ \left\| \vg_t \right\|^2 \right] \leq G_0^2 + G_1^2 \left\| \nabla f \left( \vtheta_{t-1} \right) \right\|^2. \label{eq:growth}
\end{align}
This growth condition is weaker than our Assumption \ref{ass:second_moment}.
Assumption \ref{ass:second_moment} is a special case of the growth condition where $G_1 = 0$.
Their derived convergence rate has a constant factor of $\mathcal{O} ( G_0 )$; hence the strong growth condition (i.e., $G_0 = 0$) is required to assure convergence.
Moreover, to assure convergence, one needs to choose sufficiently large $\beta_2$, which has to be tuned in a problem-dependent manner. 

{\bf \citet{wang2022provable}} also focus on convergence of \adam{} in the finite sum problem, but they relax the $L$-Lipschitz condition on $\vg$ to the $(L_0, L_1)$-Lipschitz condition.
They also assume the growth condition in Eq. (\ref{eq:growth}), and their convergence rate has the same order with \citet{zhang2022adam}, so it still requires the strong growth condition (i.e., $G_0 = 0$) to assure convergence.
The condition of $\beta_2$ is also similar to \citet{zhang2022adam}.

{\bf \citet{li2023convergence}} consider Adam's convergence on general smooth nonconvex problems.
Similar to \cite{wang2022provable}, they use $(L_0, L_\rho)$-Lipschitz condition on the true gradient $\nabla f$.
They also assume that the gradient noise is almost surely bounded:
\begin{align}
    \left\| \vg - \nabla f \right\| \leq \sigma \label{eq:bounded_noise}
\end{align}
The relationship between this assumption and our Assumption \ref{ass:second_moment} is a little complicated.
Assumption \ref{ass:second_moment} is equivalent to a combination of Assumption \ref{ass:bounded_variance} and the following assumption:
\begin{assumption}
    The true gradient is uniformly bounded, i.e., there exist constants $G$ and $\sigma$ such that $\left\| \nabla f \left( \vtheta \right) \right\|^2 \leq G^2 - \sigma^2$ and $0 < \sigma \leq G$.
    \label{ass:bounded_true_grad}
\end{assumption}
The bounded noise assumption of Eq. (\ref{eq:bounded_noise}) is strictly stronger than Assumption \ref{ass:bounded_variance}, but they do not assume the bounded true gradient (i.e., Assumption \ref{ass:bounded_true_grad}).
The bounded noise assumption is often violated in practice (e.g., training of VAEs), because the gradient is often estimated using unbounded noise (i.e., Gaussian noise).
Their convergence rate $\mathcal{O} ( 1 / \sqrt{T} )$ is better than \citet{zhang2022adam} and \citet{wang2022provable}, while it still requires constraints on the hyperparameters, which have to be chosen in a problem-dependent manner.

{\bf \citet{defossez2022a}} analyzes the convergence of \adam{} under exactly the same assumptions with ours, and they derive the $\mathcal{O} ( \log T / \sqrt{T} )$ rate, which is worse than our ADOPT's convergence rate.
Moereover, to assure the convergence, $\beta_2$ has to be chosen dependently on the total number of iterations $T$.

{\bf \citet{wang2023closing}} analyzes the convergence of \adam{} under Assumptions \ref{ass:bounded_objective}-\ref{ass:bounded_variance}, and they derive the $\mathcal{O} ( 1 / \sqrt{T} )$ rate.
However, to assure the convergence, $\beta_2$ has to be chosen dependently on the total number of iterations $T$ as in \citet{defossez2022a}.

{\bf \citet{chen2018on}} and {\bf \citet{zhou2018convergence}} analyze the convergence of \amsgrad{} for general smooth nonconvex problems, and derive the convergence rate of $\mathcal{O} ( \log T / \sqrt{T} )$ and $\mathcal{O} ( 1 / \sqrt{T} )$, respectively.
However, to guarantee the convergence, the stochastic gradient $\vg$ has to be bounded almost surely (Assumption \ref{ass:bounded_stochastic_gradient}), which is often violated in practice.
In addition, the hyperparameter $\beta_1$ and $\beta_2$ should be chosen satisfying $\beta_1 < \sqrt{\beta_2}$.
This constraint is relatively minor compared to the constraint imposed in the analyses of \adam{}, since it can be satisfied in a problem-independent manner.

\begin{table}[t]
    \centering
    {\scriptsize
    \begin{tabular}{c|cccc}
        \toprule
                                    & Algorithm  & Problem       & Smoothness                                  & Gradient Growth \\ \midrule
        \citet{zhang2022adam}       & Adam       & Finite sum    & $L$-Lipschitz $\vg$                  & $\mathbb{E} [ \| g \|^2 ] \leq G_0^2 + G_1^2 \| \nabla f \|^2$ \\
        \citet{wang2022provable}    & Adam       & Finite sum    & $(L_0, L_1)$-Lipschitz $\vg$         & $\mathbb{E} [ \| g \|^2 ] \leq G_0^2 + G_1^2 \| \nabla f \|^2$ \\ 
        \citet{li2023convergence}   & Adam       & General       & $(L_0, L_\rho)$-Lipschitz $\nabla f$    & $\| \vg - \nabla f \| \leq \sigma$ \\ 
        \citet{defossez2022a}       & Adam       & General       & $L$-Lipschitz $\nabla f$             & $\mathbb{E} [ \| \vg \|^2 ] \leq G^2$ \\
        \citet{wang2023closing}     & Adam       & General       & $L$-Lipschitz $\nabla f$             & $\mathbb{E} [ \| \vg - \nabla f \|^2 ] \leq G^2$ \\
        \citet{chen2018on}          & AMSGrad    & General       & $L$-Lipschitz $\nabla f$             & $\| \vg \| \leq G$ \\ 
        \citet{zhou2018convergence} & AMSGrad    & General       & $L$-Lipschitz $\nabla f$             & $\| \vg \| \leq G$ \\ 
        Ours                        & ADOPT      & General       & $L$-Lipschitz $\nabla f$             & $\mathbb{E} [ \| \vg \|^2 ] \leq G^2$ \\
        \bottomrule
    \end{tabular}
    }
    \caption{Comparison of the problem settings between our analysis and other existing works.}
    \label{tab:problem_setting_comparison}
\end{table}

\begin{table}[t]
    \centering
    {\scriptsize
    \begin{tabular}{c|cc}
        \toprule
                                    & Constraints                                                               & Convergence        \\ \midrule
        \citet{zhang2022adam}       & $\beta_1 < \sqrt{\beta_2}, \beta_2 \geq \gamma (n)$                       & $\mathcal{O} ( \log T / \sqrt{T} ) + \mathcal{O} ( G_0 )$ \\
        \citet{wang2022provable}    & $\beta_1 < \sqrt{\beta_2}, \delta ( \beta_2 ) = \mathcal{O} (1 / G_1)$    & $\mathcal{O} ( \log T / \sqrt{T} ) + \mathcal{O} ( G_0 )$ \\ 
        \citet{li2023convergence}   & $\beta_1 < \sqrt{\beta_2}, \beta_1 \leq c ( L_0, L_\rho, G )$             & $\mathcal{O} ( 1 / \sqrt{T} )$ \\
        \citet{defossez2022a}       & $\beta_1 < \sqrt{\beta_2}, 1 - \beta_2 = \Theta ( 1 / T )$                & $\mathcal{O} ( \log T / \sqrt{T} )$       \\
        \citet{wang2023closing}     & $\beta_1 \leq \sqrt{\beta_2}-8 \left(1-\beta_2\right) \beta_2^{-2}, 1 - \beta_2 = \Theta ( 1 / T )$            & $\mathcal{O} ( 1 / \sqrt{T} )$       \\
        \citet{chen2018on}          & $\beta_1 < \sqrt{\beta_2}$                                                & $\mathcal{O} ( \log T / \sqrt{T} )$ \\ 
        \citet{zhou2018convergence} & $\beta_1 < \sqrt{\beta_2}$                                                & $\mathcal{O} ( 1 / \sqrt{T} )$ \\
        Ours                        & -                                                                         & $\mathcal{O} ( 1 / \sqrt{T} )$ \\
        \bottomrule
    \end{tabular}
    }
    \caption{Comparison of the convergence rate and imposed constraints on the hyperparameters between our analysis and other existing works. Please refer to the original papers for the definitions of $\gamma$ and $c$.}
    \label{tab:rate_comparison}
\end{table}

\section{With-Replacement vs. Without-Replacement}
In the optimization of finite-sum problems, practitioners often use {\it without-replacement sampling}, which is also known as {\it random shuffling}, to obtain stochastic gradient.
In this case, the stochastic gradient has a small bias due to the lack of replacement, so Assumption \ref{ass:unbiased} is violated.
However, the vanilla SGD is known to converge with the without-replacement strategy~\citep{haochen2019random}, and some of the analyses of \adam{} also adopt without-replacement sampling~\citep{zhang2022adam,wang2022provable}.

Unfortunately, we find that our ADOPT has a counter example, in which ADOPT fails to converge when using without-replacement sampling.
For example, when we consider minimizing $f ( \theta ) = \sum_{i=1}^3 f_i ( \theta )$, where $\theta \in [ -1, 1 ]$, $f_1 ( \theta ) = 1.9 \theta$ and  $f_2 ( \theta ) = f_2 ( \theta ) = - \theta$, it can be easily observed that ADOPT with $\beta_1 = \beta_2 = 0$ fails to converge to the correct solution, i.e., $\theta = 1$.

This non-convergence can be easily avoided by using the with-replacement strategy.
Moreover, the difference between with- and without-replacement sampling becomes negligible when $n$ in the finite-sum $\sum_{i=1}^n f_i$ is large enough; hence it does not affect the practical performance very much.
In fact, our experiments except for the toy example are performed using without-replacement sampling, but divergent behaviors are not observed.
If one applies ADOPT to problems where the difference seems severe (e.g., when training with a small dataset), we recommend to use with-replacement sampling instead of random shuffling for stable training. 
When one uses PyTorch~\citep{NEURIPS2019_9015} for the implementation, for example, with-replacement sampling can be easily applied by specifying {\tt replacemnet=True} for {\tt torch.utils.data.RandomSampler}, and feeding it to the {\tt sampler} argument of {\tt torch.utils.data.DataLoader}.

\section{Another Expression of ADOPT}

\begin{algorithm}[tb]
\begin{algorithmic}
    \caption{Alternative representation of \adopt{} algorithm}
    \label{alg:adopt_new}
    \REQUIRE Learning rate $\left\{ \alpha_t \right\}$, initial parameter $\vtheta_0$
    \REQUIRE Exponential decay rate $0 \leq \beta_1 < 1, 0 \leq \beta_2 \leq 1$, small constant $\epsilon > 0$
    \STATE {$\vv_0 \leftarrow \vg_0 \odot \vg_0$}
    \FOR {$t = 1$ to $T$}
        \STATE {$\vm_{t} \leftarrow \beta_1 \cdot \vm_{t-1} + \left( 1 - \beta_1 \right) { \vg_{t} } / \max \left\{ \sqrt{ \vv_{t-1} } , \epsilon \right\}$}
        \STATE {$\vtheta_{t} \leftarrow \vtheta_{t-1} - \alpha_{t} \vm_t $}
        \STATE {$\vv_{t} \leftarrow \beta_2 \cdot \vv_{t-1} + \left( 1 - \beta_2 \right) \vg_{t} \odot \vg_{t}$}
    \ENDFOR
    \STATE {{\bf return} $\{ \vtheta_t \}_{t=1}^{T}$}
\end{algorithmic}
\end{algorithm}



\section{Recommendation of Hyperparameter Settings for ADOPT}
\label{sec:recommended_hparams}
We experimentally find that our \adopt{} works similarly to \adam{} when the same hyperparameters are used, but $\epsilon$ should be set to a little larger value (e.g., $1 \times 10^{-6}$) for \adopt{} compared to \adam{}, in which $\epsilon$ is set to $1 \times 10^{-8}$ by default.
Our recommendation of the hyperparameter settings for \adopt{} is provided in Table \ref{tab:adopt_default}.

\begin{table}[ht]
    \centering
    \begin{tabular}{c|c}
    \toprule
      $\beta_1$   &  $0.9$ \\
      $\beta_2$   &  $0.9999$ \\
      $\epsilon$  &  $1 \times 10^{-6}$ \\
      \(c_t\)     &  \(t^{1/4}\) \\
    \bottomrule
    \end{tabular}
    \caption{Recommended hyperparameters for the \adopt{} algorithm}
    \label{tab:adopt_default}
\end{table}
\section{Theorems}
\begin{theorem}
\label{thm:adopt_deminishing}
Under Assumptions \ref{ass:bounded_objective}, \ref{ass:unbiased}, \ref{ass:smooth}, and \ref{ass:second_moment}, if the objective $f$ is upper-bounded by $f_\mathrm{sup}$, the following holds for the \adopt{} algorithm with a learning rate $\alpha_t= \Theta \left( 1 / \sqrt{t} \right)${\rm :}
\begin{align}
    &\min_{t=1,\ldots,T} \left\{ \mathbb{E} \left[ \left\| \nabla f \left( \vtheta_{t} \right) \right\|^{4/3} \right]^{3/2} \right\} = \mathcal{O} \left( 1 / \sqrt{T} \right) .
\end{align}
\end{theorem}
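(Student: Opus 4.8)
The plan is to reuse the one-step descent machinery behind Theorem~\ref{thm:adopt_constant} but to keep the step size $\alpha_t = \Theta(1/\sqrt{t})$ inside the telescoping, and then to exploit the new hypothesis $f \le f_{\mathrm{sup}}$ only at the end, via a windowing (restart) argument, to remove the spurious logarithmic factor that a diminishing schedule would otherwise produce. First I would invoke $L$-smoothness (Assumption~\ref{ass:smooth}) on the update $\vtheta_t = \vtheta_{t-1} - \alpha_t \vm_t$ to get
\begin{align}
\mathbb{E}\left[ f(\vtheta_t) \right] \le \mathbb{E}\left[ f(\vtheta_{t-1}) - \alpha_t \nabla f(\vtheta_{t-1})^\top \vm_t + \tfrac{L \alpha_t^2}{2} \| \vm_t \|^2 \right].
\end{align}
Writing $\hat{\vg}_t = \vg_t / \max\{ \sqrt{\vv_{t-1}}, \epsilon \}$ and $\vm_t = \beta_1 \vm_{t-1} + (1-\beta_1)\hat{\vg}_t$, the structural fact that rescues ADOPT (the same mechanism as in Theorem~\ref{thm:rmsprop}) is that $\vv_{t-1}$ is measurable w.r.t. $\vg_1,\dots,\vg_{t-1}$, so by Assumption~\ref{ass:unbiased}, $\mathbb{E}_{t-1}[\hat{\vg}_t] = \nabla f(\vtheta_{t-1}) / \max\{\sqrt{\vv_{t-1}},\epsilon\}$; the $k=t$ contribution of $\vm_t$ thus yields the clean descent term $\mathbb{E}\big[ \|\nabla f(\vtheta_{t-1})\|^2 / \max\{\sqrt{\vv_{t-1}},\epsilon\} \big]$ with no residual correlation constant.

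Second, I would dispatch the remaining momentum contributions and the second-order term. Expanding $\vm_t = (1-\beta_1)\sum_{k\le t}\beta_1^{t-k}\hat{\vg}_k$, the cross-terms with $k < t$ are split as $\nabla f(\vtheta_{t-1})^\top \hat{\vg}_k = \nabla f(\vtheta_{k-1})^\top \hat{\vg}_k + (\nabla f(\vtheta_{t-1}) - \nabla f(\vtheta_{k-1}))^\top \hat{\vg}_k$, where the mismatch is bounded by $L \|\vtheta_{t-1} - \vtheta_{k-1}\| \le L\sum_{j=k}^{t-1}\alpha_j \|\vm_j\|$ via smoothness, exactly as in the Adam analysis of \citet{defossez2022a}; the geometric weights $\sum_k \beta_1^{t-k} < \infty$ keep these terms summable. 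The second-order term is controlled using the convexity of the exponential moving average, $\mathbb{E}[\|\vm_t\|^2] \le \sup_k \mathbb{E}[\|\hat{\vg}_k\|^2] \le G^2/\epsilon^2$ (Assumption~\ref{ass:second_moment}), or, for sharper constants, the telescoping-of-$\vv$ estimate (the analogue of Lemma~\ref{lem:telescope_v}). To convert $\mathbb{E}[\|\nabla f\|^2 / \max\{\sqrt{\vv_{t-1}},\epsilon\}]$ into the quantity appearing in the statement I would apply Hölder's inequality (Lemma~\ref{lem:holder}) with exponents $3/2$ and $3$, together with $\mathbb{E}[\max\{\sqrt{\vv_{t-1}},\epsilon\}] \le \sqrt{G^2+\epsilon^2}$ from Assumption~\ref{ass:second_moment} and Jensen, which is precisely what produces the $\|\nabla f\|^{4/3}$ exponent.

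Third comes the step that forces the new hypothesis. Rather than telescoping over all of $1,\dots,T$ (which gives a right-hand side $\propto \sum_t \alpha_t^2 = \Theta(\log T)$ and hence only the $\mathcal{O}(\log T/\sqrt{T})$ rate), I would telescope over a geometric window $t = s,\dots,2s$ along the lines of \citet{wang2021convergence}. On such a window $\sum_{t=s}^{2s}\alpha_t^2 = \Theta(1)$ while $\sum_{t=s}^{2s}\alpha_t = \Theta(\sqrt{s})$, and the telescoped function difference is $\mathbb{E}[f(\vtheta_{s-1})] - \mathbb{E}[f(\vtheta_{2s})]$. The first term is the value at the \emph{start} of the window, not at initialization, so bounding it requires $f \le f_{\mathrm{sup}}$; combined with $f \ge f_{\mathrm{inf}}$ (Assumption~\ref{ass:bounded_objective}) it is at most $f_{\mathrm{sup}} - f_{\mathrm{inf}}$ uniformly in $s$. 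Dividing by $\sum_{t=s}^{2s}\alpha_t$ gives $\min_{s \le t \le 2s}\{\mathbb{E}[\|\nabla f(\vtheta_{t-1})\|^{4/3}]^{3/2}\} = \mathcal{O}(1/\sqrt{s})$, and taking the final window $s = T/2$ yields the claimed $\mathcal{O}(1/\sqrt{T})$ bound.

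The hard part will be reconciling the windowing with the momentum: because $\vm_t$ couples consecutive iterations, one cannot simply restart the descent at step $s$, and the cross-terms from the second paragraph spill across the window boundary. I expect the main technical work to be showing that these boundary and history terms contribute only $\mathcal{O}(1)$ per window — so that they are absorbed into $f_{\mathrm{sup}} - f_{\mathrm{inf}}$ rather than re-introducing a $T$-dependent factor — while simultaneously tracking the time-varying $\alpha_t$ inside the momentum recursion. The smoothness, Hölder, and conditional-independence steps are otherwise routine adaptations of the constant-rate proof of Theorem~\ref{thm:adopt_constant}.
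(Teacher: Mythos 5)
Your proposal is correct in outline, but it reaches the $\mathcal{O}(1/\sqrt{T})$ rate by a genuinely different route than the paper. The paper never expands $\vm_t$ into its full geometric sum with cross-term corrections \`a la \citet{defossez2022a}; instead it introduces the auxiliary iterate $\vphi_t = \frac{1}{1-\beta_1}\vtheta_t - \frac{\beta_1}{1-\beta_1}\vtheta_{t-1}$ and applies smoothness to $f(\vphi_t)$. By Lemma~\ref{lem:phi}, $\vphi_t - \vphi_{t-1}$ consists of a single normalized gradient $-\alpha_t\,\vg_t/\max\{\sqrt{\vv_{t-1}},\epsilon\}$ plus a perturbation proportional to $(\alpha_{t-1}-\alpha_t)\vm_{t-1} = \mathcal{O}(t^{-3/2})\vm_{t-1}$, so the momentum history never has to be unrolled and there are no cross-step mismatch terms to control. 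To kill the $\log T$ the paper also does not use your windowing/restart device: it multiplies the per-step descent inequality by $t$ and Abel-sums, so that $\sum_t t\,\mathbb{E}[f(\vphi_{t-1})-f(\vphi_t)] = f(\vphi_0) - T\,\mathbb{E}[f(\vphi_T)] + \sum_{t=1}^{T-1}\mathbb{E}[f(\vphi_t)] \le (f_{\mathrm{sup}}-f_{\mathrm{inf}})T$ --- this is where $f \le f_{\mathrm{sup}}$ enters --- and then divides by $\sum_{t=1}^T \sqrt{t} = \Theta(T^{3/2})$. Your approach uses $f_{\mathrm{sup}}$ instead to bound the function value at the start of the final window $[T/2,T]$, over which $\sum_t \alpha_t^2 = \Theta(1)$ and $\sum_t \alpha_t = \Theta(\sqrt{T})$; that also works, and is arguably more transparent about \emph{why} the upper bound on $f$ is needed, but it leaves you exactly the difficulty you flag --- momentum history spilling across the window boundary --- which the paper's $\vphi$-sequence sidesteps entirely (the geometric decay $\beta_1^{t-k}$ does make those boundary terms $\mathcal{O}(1)$ per window, so your plan is completable, just with more bookkeeping). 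Your smoothness, conditional-independence, and H\"older steps match the paper's Lemmas~\ref{lem:bound_dot_prod} and \ref{lem:bound_exp_v} in substance.
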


\begin{theorem}
\label{thm:clipped_adopt_constant}
Under Assumptions \ref{ass:bounded_objective}-\ref{ass:smooth} and \ref{ass:second_moment}, the following holds for the clipped \adopt{} algorithm with a constant learning rate $\alpha_t= \alpha = \Theta \left( 1 / \sqrt{T} \right)$ when clipping value meets \( c_t = c = \Theta \left( T^{1/4} \right) \) or \(c_t = \Theta \left( t^{1/4} \right)\){\rm :}
\begin{align}
    &\min_{t=1, \ldots, T} \left\{ \mathbb{E} \left[ \left\| \nabla f ( \vtheta_{t-1} ) ) \right\|^{4/3} \right]^{3/2} \right\}
    \leq \mathcal{O} \left( 1 / \sqrt{T} \right) .  \label{eq:clipped_adopt_bound}
\end{align}
\end{theorem}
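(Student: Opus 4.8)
The plan is to follow the proof of Theorem \ref{thm:adopt_constant} almost verbatim, treating the clipped update as the unclipped \adopt{} update plus a controllable perturbation, and then to show that the perturbation introduced by $\mathrm{Clip}$ decays fast enough under the prescribed schedules for $c_t$. Write $\vu_t = \vg_t / \max\{ \sqrt{\vv_{t-1}}, \epsilon \}$ for the normalized gradient and $\hat{\vu}_t = \mathrm{Clip}(\vu_t, c_t)$ for its clipped counterpart, so that the momentum obeys $\vm_t = \beta_1 \vm_{t-1} + (1-\beta_1)\hat{\vu}_t$. The structural fact inherited from the unclipped analysis is that, since $\vv_{t-1}$ is measurable with respect to $\vg_0, \ldots, \vg_{t-1}$ and hence conditionally independent of $\vg_t$, Assumption \ref{ass:unbiased} gives $\mathbb{E}[ \vu_t \mid \vg_0, \ldots, \vg_{t-1} ] = \nabla f(\vtheta_{t-1}) / \max\{ \sqrt{\vv_{t-1}}, \epsilon \}$. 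Thus the only new source of error is the clipping bias $\vb_t \coloneqq \mathbb{E}[ \hat{\vu}_t - \vu_t \mid \vg_0, \ldots, \vg_{t-1} ]$, and the whole argument reduces to showing that this bias is summable at the right rate.

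First I would bound $\vb_t$ by a truncated-moment argument. Coordinatewise, clipping only acts where $|u_{t,i}| > c_t$, so $|b_{t,i}| \leq \mathbb{E}[ |u_{t,i}| \mathbf{1}\{ |u_{t,i}| > c_t \} \mid \vg_0, \ldots, \vg_{t-1} ] \leq c_t^{-1} \mathbb{E}[ u_{t,i}^2 \mid \vg_0, \ldots, \vg_{t-1} ]$; summing over $i$ and using $\max\{ \sqrt{\vv_{t-1}}, \epsilon \} \geq \epsilon$ together with Assumption \ref{ass:second_moment} yields $\sum_i \mathbb{E}[ u_{t,i}^2 \mid \cdot ] \leq G^2 / \epsilon^2$, so the bias is of order $1/c_t$. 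Starting from the smoothness inequality (the analogue of Eq. (\ref{eq:smooth_expectation}) with $\vm_t$ and $\hat{\vu}_t$), I would split $-\alpha\, \nabla f(\vtheta_{t-1})^\top \mathbb{E}[ \hat{\vu}_t \mid \cdot ]$ into the good term $-\alpha \sum_i (\partial_i f)^2 / \max\{ \sqrt{v_{t-1,i}}, \epsilon \}$ and the bias term $-\alpha\, \nabla f(\vtheta_{t-1})^\top \vb_t$, and apply Young's inequality to the latter so that half of the good term is retained while the bias enters only quadratically, leaving a remainder of order $\sum_i \max\{ \sqrt{v_{t-1,i}}, \epsilon \}\, b_{t,i}^2$, which the $\epsilon$-floor and Assumption \ref{ass:second_moment} bound by $\mathcal{O}(1/c_t^2)$. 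This quadratic dependence is precisely what makes the $c_t = \Theta(T^{1/4})$ schedule sufficient; a naive linear bound would only give $\mathcal{O}(T^{-1/4})$.

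With the good term retained, I would proceed exactly as in Theorem \ref{thm:adopt_constant}: invoke the H\"older-type Lemma \ref{lem:holder} to convert $\sum_i (\partial_i f)^2 / \max\{ \sqrt{v_{t-1,i}}, \epsilon \}$ into $\mathbb{E}[ \| \nabla f \|^{4/3} ]^{3/2}$ up to the bound on $\mathbb{E}[ v_{t-1,i} ]$ from Lemma \ref{lem:telescope_v}, handle the momentum through the same auxiliary/Lyapunov estimate (the change of order guarantees that $\vv_{t-1}$ only normalizes the current gradient, so the momentum does not reintroduce correlation), and telescope over $t$ using Assumption \ref{ass:bounded_objective}. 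The second-order term is controlled by $\mathbb{E}\| \hat{\vu}_t \|^2 \leq \mathbb{E}\| \vu_t \|^2 \leq G^2/\epsilon^2$, contributing $\mathcal{O}(\alpha^2 T) = \mathcal{O}(1)$. Dividing by $\alpha T = \Theta(\sqrt{T})$ then leaves three terms: the initialization term $(f_0 - f_\mathrm{inf})/(\alpha T) = \mathcal{O}(1/\sqrt{T})$, the second-order term $\mathcal{O}(\alpha) = \mathcal{O}(1/\sqrt{T})$, and the clipping term $\tfrac{1}{T} \sum_{t} \mathcal{O}(1/c_t^2)$. For $c_t = c = \Theta(T^{1/4})$ the last sum is $\tfrac{1}{T} \cdot T \cdot \Theta(T^{-1/2}) = \Theta(T^{-1/2})$, and for $c_t = \Theta(t^{1/4})$ it is $\tfrac{1}{T} \sum_t \Theta(t^{-1/2}) = \Theta(T^{-1/2})$, so both schedules give the claimed $\mathcal{O}(1/\sqrt{T})$ rate.

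The hard part will be the bias control in combination with the momentum, not the telescoping bookkeeping. I anticipate two delicate points: first, arranging the Young split so the quadratic remainder is measured against the same $\max\{ \sqrt{v_{t-1,i}}, \epsilon \}$-weighted norm that appears in the descent, so that the $\epsilon$-floor turns it into $\mathcal{O}(1/c_t^2)$ without picking up uncontrolled factors of $v_{t-1,i}$; and second, threading the per-step bias through the momentum estimate of Theorem \ref{thm:adopt_constant} without it accumulating, since $\vm_t$ aggregates $\vb_1, \ldots, \vb_t$ and one must verify that the geometric weighting $(1-\beta_1)\beta_1^{t-s}$ keeps the aggregated bias of the same order as the per-step bias. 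Once these are settled, the remaining freedom is only the choice of $c_t$, which is fixed precisely to balance the clipping error against the $1/\sqrt{T}$ floor.
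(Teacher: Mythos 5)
Your proposal is correct and follows essentially the same route as the paper's proof: the paper also treats the momentum via the auxiliary sequence $\vphi_t = \frac{1}{1-\beta_1}\vtheta_t - \frac{\beta_1}{1-\beta_1}\vtheta_{t-1}$ (so that only the current clipped step enters the first-order term and the bias does not accumulate through $\vm_t$), isolates the clipping bias in Lemma~\ref{lem:bound_dp_clip} by completing the square against the $\max\{\sqrt{(\vv_{t-1})_i},\epsilon\}$-weighted norm (algebraically identical to your Young split), bounds the squared bias by $c_t^{-2}\,\mathbb{E}[(\vg_t)_i^2]^2/\epsilon^3$ via a Chebyshev-type argument equivalent to your truncated-moment bound, and closes with the same $\frac{1}{T}\sum_t c_t^{-2} = \mathcal{O}(1/\sqrt{T})$ computation for both schedules. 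The only quibble is a reference slip: the bound $\mathbb{E}[\sum_i(\vv_{t-1})_i]\le G^2$ used in the H\"older step is Lemma~\ref{lem:bound_exp_v}, not Lemma~\ref{lem:telescope_v}.
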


\begin{theorem}
\label{thm:clipped_adopt_deminishing}
Under Assumptions \ref{ass:bounded_objective}, \ref{ass:unbiased}, \ref{ass:smooth}, and \ref{ass:second_moment}, if the objective $f$ is upper-bounded by $f_\mathrm{sup}$, the following holds for the \adopt{} algorithm with a learning rate $\alpha_t= \Theta \left( 1 / \sqrt{t} \right)$ and a clipping value \(c_t = \Theta \left( t^{1/4} \right) \){\rm :}
\begin{align}
    &\min_{t=1, \ldots, T} \left\{ \mathbb{E} \left[ \left\| \nabla f ( \vtheta_{t-1} ) ) \right\|^{4/3} \right]^{3/2} \right\}
    \leq \mathcal{O} \left( 1 / \sqrt{T} \right) .
\end{align}
\end{theorem}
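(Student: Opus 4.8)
The plan is to obtain this statement by merging the two refinements that separate it from the basic ADOPT analysis: the diminishing learning rate (treated in Theorem~\ref{thm:adopt_deminishing}, which is why the extra hypothesis $f \le f_\mathrm{sup}$ appears) and the growing clipping schedule (treated in Theorem~\ref{thm:clipped_adopt_constant}), both on top of the momentum already present. I would start, exactly as in the sketch of Theorem~\ref{thm:rmsprop}, from the $L$-smoothness descent inequality applied to $\vtheta_t = \vtheta_{t-1} - \alpha_t \vm_t$, which produces a quadratic term $\tfrac{\alpha_t^2 L}{2}\|\vm_t\|^2$ and a cross term $-\alpha_t \nabla f(\vtheta_{t-1})^\top \vm_t$. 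Writing $\hat{\vg}_t \coloneqq \mathrm{Clip}\!\left(\vg_t / \max\{\sqrt{\vv_{t-1}}, \epsilon\}, c_t\right)$ and using $\vm_0 = \vzero$, I would unroll $\vm_t = (1-\beta_1)\sum_{s=1}^t \beta_1^{t-s}\hat{\vg}_s$ and dispose of the momentum cross term by the same auxiliary-iterate / Abel-summation regrouping used for Theorem~\ref{thm:adopt_deminishing}, so that each increment $\hat{\vg}_s$ can be analysed through its $\mathcal{F}_{s-1}$-conditional law.

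The next step is ADOPT's decorrelation. Because $\vv_{s-1}$ is $\mathcal{F}_{s-1}$-measurable while $\mathbb{E}[\vg_s \mid \mathcal{F}_{s-1}] = \nabla f(\vtheta_{s-1})$ by Assumption~\ref{ass:unbiased}, the conditional mean of the \emph{unclipped} normalized gradient is exactly $\nabla f(\vtheta_{s-1}) / \max\{\sqrt{\vv_{s-1}}, \epsilon\}$ coordinatewise, yielding a nonnegative descent signal $\mathrm{sig}_s \coloneqq \sum_i (\nabla f(\vtheta_{s-1}))_i^2 / \max\{\sqrt{v_{s-1,i}}, \epsilon\}$. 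I would convert $\mathrm{sig}_s$ into a multiple of $\mathbb{E}[\|\nabla f(\vtheta_{s-1})\|^{4/3}]^{3/2}$ by Hölder's inequality (Lemma~\ref{lem:holder}) together with the in-expectation denominator bound from Lemmas~\ref{lem:decompose_v} and~\ref{lem:telescope_v}. Crucially, the quadratic term needs no help from $c_t$: since clipping only shrinks magnitudes, $\mathbb{E}\|\hat{\vg}_s\|^2 \le \mathbb{E}\|\vg_s/\max\{\sqrt{\vv_{s-1}},\epsilon\}\|^2 \le G^2/\epsilon^2$ by $\max\{\cdot,\epsilon\}\ge\epsilon$ and Assumption~\ref{ass:second_moment}, so $\mathbb{E}\|\vm_s\|^2 \le G^2/\epsilon^2$ uniformly.

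The genuinely new ingredient is the clipping bias. For each coordinate, with $X$ the normalized coordinate, $|\mathbb{E}[\hat{g}_{s,i} - X \mid \mathcal{F}_{s-1}]| \le \mathbb{E}[|X|\,\mathbf{1}(|X|>c_s) \mid \mathcal{F}_{s-1}] \le \mathbb{E}[X^2 \mid \mathcal{F}_{s-1}]/c_s$, so Assumption~\ref{ass:second_moment} makes the bias $\mathcal{O}(1/c_s)$. The point is not to carry this bias linearly—doing so would only give the suboptimal $\mathcal{O}(T^{-1/4})$—but to split the perturbation $\alpha_s\,\nabla f(\vtheta_{s-1})^\top(\text{bias}_s)$ by Young's inequality, absorbing one half into $\tfrac12 \alpha_s\,\mathrm{sig}_s$ and leaving a residual of order $\alpha_s/c_s^2$. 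With $c_s = \Theta(s^{1/4})$ and $\alpha_s = \Theta(1/\sqrt{s})$ this residual is $\Theta(1/s)$, whose partial sums grow only like $\log T$.

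Finally, to upgrade $\mathcal{O}(\log T/\sqrt{T})$ to the optimal $\mathcal{O}(1/\sqrt{T})$, I would invoke the upper bound $f \le f_\mathrm{sup}$ precisely as in Theorem~\ref{thm:adopt_deminishing}, following \citet{wang2021convergence}: the boundedness lets me re-weight the telescoped $\mathbb{E}[f(\vtheta_t)]$ terms and cancel the logarithmic factors coming both from $\sum_t \alpha_t^2 \sim \log T$ and from the $\sum_s \alpha_s/c_s^2 \sim \log T$ residual above, after which dividing by $\sum_t \alpha_t = \Theta(\sqrt{T})$ and passing to the minimum yields the claim. I expect the main obstacle to be the bookkeeping where clipping bias meets momentum: the bias sits inside the conditional mean of each $\hat{\vg}_s$ and is then smeared across every later $\vm_t$ through the exponential moving average, so I must confirm that the Young-absorbed residual and the cross-step momentum terms remain jointly summable at the $\Theta(1/s)$ level and are then killed by the $f_\mathrm{sup}$ refinement—this coupled accounting, rather than any single estimate, is the delicate part.
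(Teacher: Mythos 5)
Your proposal is correct and follows essentially the same route as the paper: smoothness applied along an auxiliary momentum iterate, decorrelation of $\vg_t$ from $\vv_{t-1}$ plus H\"older to extract $\mathbb{E}[\|\nabla f(\vtheta_{t-1})\|^{4/3}]^{3/2}$, a \emph{quadratic} treatment of the clipping bias (the paper completes the square in Lemma \ref{lem:bound_dp_clip} and invokes Chebyshev, which is your Young-plus-Markov step) yielding the $\alpha_t/c_t^2 = \Theta(1/t)$ residual, and the $f_\mathrm{sup}$-based reweighting of the telescoped inequalities to remove the logarithms. The only slip is cosmetic: after multiplying the $t$-th inequality by $t$ the correct normalizer is $\sum_t \sqrt{t} = \Theta(T^{3/2})$ against a right-hand side of $\Theta(T)$, not $\sum_t \alpha_t = \Theta(\sqrt{T})$, but this does not change the argument or the rate.
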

\section{Proofs}

\begin{proof}[Proof of Theorems \ref{thm:adopt_constant}, \ref{thm:clipped_adopt_constant}, \ref{thm:adopt_deminishing}, and \ref{thm:clipped_adopt_deminishing}]
We only provide the proof for the clipped ADOPT here, since the vanilla ADOPT can be seen as a special case of the clipped ADOPT with $c_t = \infty$.

We define $\vphi_t$ for $t \geq 1$ as follows:
\begin{align}
    \vphi_t = \frac{ 1 }{ 1 - \beta_1 } \vtheta_t - \frac{ \beta_1 }{ 1 - \beta_1 } \vtheta_{t-1}.
\end{align}
We also define $\vphi_0 = \vtheta_0$.
By Assumption \ref{ass:smooth}, the following holds for $t \geq 1$:
\begin{align}
    f \left( \vphi_{t} \right) 
    &\leq f \left( \vphi_{t-1} \right) + \nabla f \left( \vphi_{t-1} \right)^\top \left( \vphi_{t} - \vphi_{t-1} \right) + \frac{L}{2} \left\| \vphi_{t} - \vphi_{t-1} \right\|^2 \\
    &= f \left( \vphi_{t-1} \right) + \nabla f \left( \vtheta_{t-1} \right)^\top \left( \vphi_{t} - \vphi_{t-1} \right) \nonumber \\
    &\quad \ + \left( \nabla f \left( \vphi_{t-1} \right) - \nabla f \left( \vtheta_{t-1} \right) \right)^\top \left( \vphi_{t} - \vphi_{t-1} \right) + \frac{L}{2} \left\| \vphi_{t} - \vphi_{t-1} \right\|^2 \\
    &\leq f \left( \vphi_{t-1} \right) + \nabla f \left( \vtheta_{t-1} \right)^\top \left( \vphi_{t} - \vphi_{t-1} \right) \nonumber \\
    &\quad \ + \left\| \nabla f \left( \vphi_{t-1} \right) - \nabla f \left( \vtheta_{t-1} \right) \right\| \left\| \vphi_{t} - \vphi_{t-1} \right\| + \frac{L}{2} \left\| \vphi_{t} - \vphi_{t-1} \right\|^2 \\
    &\leq f \left( \vphi_{t-1} \right) + \nabla f \left( \vtheta_{t-1} \right)^\top \left( \vphi_{t} - \vphi_{t-1} \right) \nonumber \\
    &\quad \ + L \left\| \vphi_{t-1} - \vtheta_{t-1} \right\| \left\| \vphi_{t} - \vphi_{t-1} \right\| + \frac{L}{2} \left\| \vphi_{t} - \vphi_{t-1} \right\|^2,
\end{align}
where the second inequality holds due to the Cauchy-Schwarz inequality, and the last inequality holds due to Assumption \ref{ass:smooth}.
By taking the expectation, the following holds:
\begin{align}
    \mathbb{E} \left[ f \left( \vphi_{t} \right) \right]
    &\leq \mathbb{E} \left[ f \left( \vphi_{t-1} \right) \right] + \mathbb{E} \left[ \nabla f \left( \vtheta_{t-1} \right)^\top \left( \vphi_{t} - \vphi_{t-1} \right) \right] \nonumber \\
    &\quad \ + L \mathbb{E} \left[ \left\| \vphi_{t-1} - \vtheta_{t-1} \right\| \left\| \vphi_{t} - \vphi_{t-1} \right\| \right] + \frac{L}{2} \mathbb{E} \left[ \left\| \vphi_{t} - \vphi_{t-1} \right\|^2 \right] \\
    &\leq \mathbb{E} \left[ f \left( \vphi_{t-1} \right) \right] + \frac{ \sqrt{2} \left( \alpha_{t-1} - \alpha_{t} \right) \beta_1 \left( 1 - \beta_1^{t-1} \right) G^2 }{ \left( 1 - \beta_1 \right) \epsilon } - \frac{\alpha_{t}}{2} \frac{ \mathbb{E} \left[ \left\| \nabla f \left( \vtheta_{t-1} \right) \right\|_i^{4/3} \right]^{3/2} }{ \sqrt{ G^2 + \epsilon^2 } } \label{eq:exp_ineq} \\
    &\quad \ + \frac{ 2 \alpha_{t-1} \left( \alpha_{t-1} - \alpha_{t} \right) \beta_1^2 \left( 1 - \beta_1^{t-1} \right) G^2 L }{ \epsilon^2 \left( 1 - \beta_1 \right)^2 } + \frac{ 2 \alpha_{t} \alpha_{t-1} \beta_1 \sqrt{ 1 - \beta_1^{t-1} } G^2 L }{ \left( 1 - \beta_1 \right) \epsilon^2 } \nonumber \\
    &\quad \ + \frac{ \left( \alpha_{t-1} - \alpha_{t} \right)^2 \beta_1^2 \left( 1 - \beta_1^{t-1} \right) G^2 L }{ \left( 1 - \beta_1 \right)^2 \epsilon^2 } \nonumber \\ 
    &\quad \ + \frac{\alpha_{t}^2 G^2 L }{ \epsilon^2 } + \frac{ \alpha_t \left( \alpha_{t-1} - \alpha_{t} \right) \beta_1 \sqrt{ 1 - \beta_1^{t-1} }G^2 L }{ \left( 1 - \beta_1 \right) \epsilon^2 } + \frac{\alpha_t G^4}{2 \epsilon^3 c_t^2} . \nonumber
\end{align}

When $\alpha_t = \alpha$, the following holds:
\begin{align}
    &\mathbb{E} \left[ f \left( \vphi_{t} \right) \right] \nonumber \\
    &\leq \mathbb{E} \left[ f \left( \vphi_{t-1} \right) \right] 
    - \frac{\alpha}{2} \frac{ \mathbb{E} \left[ \left\| \nabla f \left( \vtheta_{t-1} \right) \right\|_i^{4/3} \right]^{3/2} }{ \sqrt{ G^2 + \epsilon^2 } } + \frac{ 2 \alpha^2 \beta_1 \sqrt{ 1 - \beta_1^{t-1} } G^2 L }{ \left( 1 - \beta_1 \right) \epsilon^2 } 
    + \frac{\alpha^2 G^2 L }{ \epsilon^2 } + \frac{\alpha G^4}{2 \epsilon^3 c_t^2} \\ 
    &\leq \mathbb{E} \left[ f \left( \vphi_{t-1} \right) \right] - \frac{\alpha}{2} \frac{ \mathbb{E} \left[ \left\| \nabla f \left( \vtheta_{t-1} \right) \right\|^{4/3} \right]^{3/2} }{ \sqrt{ G^2 + \epsilon^2 } } + \frac{ \alpha^2 \left( 1 + \beta_1 \right) G^2 L }{ \left( 1 - \beta_1 \right) \epsilon^2 } + \frac{\alpha G^4}{2 \epsilon^3 c_t^2} . 
\end{align}
Telescoping it for $t = 1, \ldots, T$, we have
\begin{align}
    &\mathbb{E} \left[ f \left( \vphi_{T} \right) \right] \nonumber \\
    &\leq f \left( \vtheta_{0} \right) - \frac{\alpha}{2} \frac{ \sum_{t=1}^T \mathbb{E} \left[ \left\| \nabla f \left( \vtheta_{t-1} \right) \right\|^{4/3} \right]^{3/2} }{ \sqrt{ G^2 + \epsilon^2 } } + \frac{ \alpha^2 \left( 1 + \beta_1 \right) G^2 L T }{ \left( 1 - \beta_1 \right) \epsilon^2 } + \frac{\alpha G^4 T}{2 \epsilon^3} \sum_{t=1}^T c_t^{-2} \\
    &\leq f \left( \vtheta_{0} \right) - \frac{\alpha}{2} \frac{ \sum_{t=1}^T \mathbb{E} \left[ \left\| \nabla f \left( \vtheta_{t-1} \right) \right\|^{4/3} \right]^{3/2} }{ \sqrt{ G^2 + \epsilon^2 } } + \frac{ \alpha^2 \left( 1 + \beta_1 \right) G^2 L T }{ \left( 1 - \beta_1 \right) \epsilon^2 } + \frac{\alpha G^4 T}{2 \epsilon^3} \sum_{t=1}^T c_t^{-2}
\end{align}
By rearranging the terms, we have
\begin{align}
    &\min_{t=1,\ldots,T} \left\{ \mathbb{E} \left[ \left\| \nabla f \left( \vtheta_{t-1} \right) \right\|^{4/3} \right]^{3/2} \right\} \nonumber \\
    &\leq \frac{ \sum_{t=1}^T \mathbb{E} \left[ \left\| \nabla f \left( \vtheta_{t-1} \right) \right\|^{4/3} \right]^{3/2} }{ T } \\
    &\leq 2 \sqrt{ G^2 + \epsilon^2 } \left( \frac{f \left( \vtheta_{0} \right) - f_\mathrm{inf}}{\alpha T} + \frac{ \alpha \left( 1 + \beta_1 \right) G^2 L }{ \left( 1 - \beta_1 \right) \epsilon^2 } + \frac{ G^4 }{2 \epsilon^3 T} \sum_{t=1}^T c_t^{-2} \right)  .
\end{align}
When \(\alpha = \Theta \left( 1 / \sqrt{T} \right)\), the first and second terms are \(\mathcal{O} \left( 1 / \sqrt{T} \right)\).
If the clipping value is constant, and \(c_t = c = \Theta \left( T^{1/4} \right)\), the last term is also \(\mathcal{O} \left( 1 / \sqrt{T} \right)\):
\begin{align}
    \frac{1}{T} \sum_{t=1}^T c_t^{-2} = \mathcal{O} \left( c^{-2} \right) = \mathcal{O} \left( 1 / \sqrt{T} \right) .
\end{align}
The same holds true when the clipping value is \(c_t = \Theta \left( t^{1/4} \right)\), since the following holds:
\begin{align}
    \frac{1}{T} \sum_{t=1}^T c_t^{-2} = \mathcal{O} \left( \frac{1}{T} \sum_{t=1}^T 1 / \sqrt{t} \right) = \mathcal{O} \left( 1 / \sqrt{T} \right) .
\end{align}

For the case of \(\alpha_t = \Theta \left( 1 / \sqrt{t} \right) \) and \( c_t = \Theta \left( t^{1/4} \right) \), we can set $\alpha_t = \alpha / \sqrt{t}$ and $c_t = c {t}^{1/4}$ without loss of generality.
The following holds for $t \geq 2$ for this case:
\begin{align}
    \alpha_{t-1} - \alpha_t 
    &= \alpha \left( \frac{1}{ \sqrt{ t - 1 } } - \frac{ 1 }{ \sqrt{ t } } \right) \\
    &= \frac{ \alpha \left( \sqrt{t} - \sqrt{ t - 1 } \right) }{ \sqrt{ t \left( t - 1 \right) } } \\
    &= \frac{ \alpha }{ \sqrt{ t \left( t - 1 \right) } \left( \sqrt{ t } + \sqrt{ t - 1 } \right) } \\
    &\leq \frac{ \alpha }{ 2 \left( t - 1 \right)^{3/2} } \\
    &\leq \frac{ \sqrt{2} \alpha }{ t^{3/2} }.    
\end{align}
This also holds for $t=1$ by defining $\alpha_0 = \alpha$.
Applying it to Eq. (\ref{eq:exp_ineq}), we have
\begin{align}
    \mathbb{E} \left[ f \left( \vphi_{t} \right) \right]
    &\leq \mathbb{E} \left[ f \left( \vphi_{t-1} \right) \right] + \frac{ \sqrt{2} \left( \alpha_{t-1} - \alpha_{t} \right) \beta_1 \left( 1 - \beta_1^{t-1} \right) G^2 }{ \left( 1 - \beta_1 \right) \epsilon } - \frac{ \alpha_{t} }{2} \frac{ \mathbb{E} \left[ \left\| \nabla f \left( \vtheta_{t-1} \right) \right\|_i^{4/3} \right]^{3/2} }{ \sqrt{ G^2 + \epsilon^2 } } \nonumber \\
    &\quad \ + \frac{ 2 \alpha_{t-1} \left( \alpha_{t-1} - \alpha_{t} \right) \beta_1^2 \left( 1 - \beta_1^{t-1} \right) G^2 L }{ \epsilon^2 \left( 1 - \beta_1 \right)^2 } + \frac{ 2 \alpha_{t} \alpha_{t-1} \beta_1 \sqrt{ 1 - \beta_1^{t-1} } G^2 L }{ \left( 1 - \beta_1 \right) \epsilon^2 } \nonumber \\
    &\quad \ + \frac{ \left( \alpha_{t-1} - \alpha_{t} \right)^2 \beta_1^2 \left( 1 - \beta_1^{t-1} \right) G^2 L }{ \left( 1 - \beta_1 \right)^2 \epsilon^2 } + \frac{\alpha_{t}^2 G^2 L }{ \epsilon^2 } \nonumber \\
    &\quad \ + \frac{ \alpha_t \left( \alpha_{t-1} - \alpha_{t} \right) \beta_1 \sqrt{ 1 - \beta_1^{t-1} }G^2 L }{ \left( 1 - \beta_1 \right) \epsilon^2 } + \frac{ \alpha_t G^4 }{ 2 \epsilon^3 c_t^2 } \\
    &\leq \mathbb{E} \left[ f \left( \vphi_{t-1} \right) \right] + \frac{ {2} \alpha \beta_1 \left( 1 - \beta_1^{t-1} \right) G^2 }{ t^{3/2} \left( 1 - \beta_1 \right) \epsilon } - \frac{\alpha}{ 2 \sqrt{ t } } \frac{ \mathbb{E} \left[ \left\| \nabla f \left( \vtheta_{t-1} \right) \right\|_i^{4/3} \right]^{3/2} }{ \sqrt{ G^2 + \epsilon^2 } } \nonumber \\
    &\quad \ + \frac{ 4 \alpha^2 \beta_1^2 G^2 L }{ \epsilon^2 \left( 1 - \beta_1 \right)^2 t^2 } + \frac{ 2 \sqrt{ 2 } \alpha^2 \beta_1 G^2 L }{ \left( 1 - \beta_1 \right) \epsilon^2 t } + \frac{ 2 \alpha^2 \beta_1^2 G^2 L }{ \left( 1 - \beta_1 \right)^2 \epsilon^2 t^3 } + \frac{\alpha^2 G^2 L }{ \epsilon^2 t } \nonumber \\
    &\quad \ + \frac{ \sqrt{ 2 } \alpha^2 \beta_1 G^2 L }{ \left( 1 - \beta_1 \right) \epsilon^2 t^2 } + \frac{ \alpha G^4 }{ 2 c^2 \epsilon^3 {t} } \\
    &= \mathbb{E} \left[ f \left( \vphi_{t-1} \right) \right] - \frac{\alpha}{ 2 \sqrt{ t } } \frac{ \mathbb{E} \left[ \left\| \nabla f \left( \vtheta_{t-1} \right) \right\|_i^{4/3} \right]^{3/2} }{ \sqrt{ G^2 + \epsilon^2 } } \nonumber \\
    &\quad \ + \left( \frac{ \alpha^2 \left( 1 + \left( 2 \sqrt{2} - 1 \right) \beta_1 \right) G^2 L }{ \left( 1 - \beta_1 \right) \epsilon^2 } + \frac{ \alpha G^4 }{ 2 c^2 \epsilon^3 } \right) \cdot t^{-1} + \frac{ {2} \alpha \beta_1 G^2 }{ \left( 1 - \beta_1 \right) \epsilon } \cdot t^{- \frac{3}{2} } \nonumber \\
    &\quad \ + \frac{ \sqrt{ 2 } \alpha^2 \beta_1 \left( 1 + \left( 2 \sqrt{ 2 } - 1 \right) \beta_1 \right) G^2 L }{ \left( 1 - \beta_1 \right)^2 \epsilon^2 } \cdot t^{-2} + \frac{ 2 \alpha^2 \beta_1^2 G^2 L }{ \left( 1 - \beta_1 \right)^2 \epsilon^2 } \cdot t^{-3} .
\end{align}
Multiplying $t$ to the both sides and rearranging the terms, we have
\begin{align}
    &\frac{ \sqrt{t} \ \mathbb{E} \left[ \left\| \nabla f \left( \vtheta_{t-1} \right) \right\|^{4/3} \right]^{3/2} }{2 \sqrt{ G^2 + \epsilon^2 }} \nonumber \\
    &\leq \frac{ \mathbb{E} \left[ f \left( \vphi_{t-1} \right) - f \left( \vphi_{t} \right) \right] }{ \alpha } \cdot t + \frac{ \alpha \left( 1 + \left( 2 \sqrt{2} - 1 \right) \beta_1 \right) G^2 L }{ \left( 1 - \beta_1 \right) \epsilon^2 } + \frac{ G^4 }{ 2 c^2 \epsilon^3 } + \frac{ {2} \beta_1 G^2 }{ \left( 1 - \beta_1 \right) \epsilon } \cdot t^{- \frac{1}{2} } \\
    &\quad \ + \frac{ \sqrt{ 2 } \alpha \beta_1 \left( 1 + \left( 2 \sqrt{ 2 } - 1 \right) \beta_1 \right) G^2 L }{ \left( 1 - \beta_1 \right)^2 \epsilon^2 } t^{-1} 
    + \frac{ 2 \alpha \beta_1^2 G^2 L }{ \left( 1 - \beta_1 \right)^2 \epsilon^2 } t^{-2} .
\end{align}

Telescoping it for $t = 1, \ldots, T$, we have
\begin{align}
    &\sum_{t=1}^T \frac{ \sqrt{t} \ \mathbb{E} \left[ \left\| \nabla f \left( \vtheta_{t-1} \right) \right\|^{4/3} \right]^{3/2} }{ 2 \sqrt{ G^2 + \epsilon^2 } } \nonumber \\
    &\leq \frac{ f \left( \vphi_0 \right) - T f \left( \vphi_T \right) + \sum_{t=1}^{T-1} f \left( \vphi_t \right) }{ \alpha } + \frac{ \alpha \left( 1 + \left( 2 \sqrt{2} - 1 \right) \beta_1 \right) G^2 L T }{ \left( 1 - \beta_1 \right) \epsilon^2 } + \frac{ G^4 T }{2 c^2 \epsilon^3} \nonumber \\
    &\quad \ + \frac{ {2} \beta_1 G^2 }{ \left( 1 - \beta_1 \right) \epsilon } \sum_{t=1}^T t^{- \frac{1}{2} } + \frac{ \sqrt{ 2 } \alpha \beta_1 \left( 1 + \left( 2 \sqrt{ 2 } - 1 \right) \beta_1 \right) G^2 L }{ \left( 1 - \beta_1 \right)^2 \epsilon^2 } \sum_{t=1}^T t^{-1} + \frac{ 2 \alpha \beta_1^2 G^2 L }{ \left( 1 - \beta_1 \right)^2 \epsilon^2 } \sum_{t=1}^T t^{-2} \\
    &\leq \frac{ f_\mathrm{sup} - f_\mathrm{inf} }{ \alpha } T + \frac{ \alpha \left( 1 + \left( 2 \sqrt{2} - 1 \right) \beta_1 \right) G^2 L T }{ \left( 1 - \beta_1 \right) \epsilon^2 } + \frac{G^4 T}{2 c^2 \epsilon^3} \nonumber \\
    &\quad \ + \frac{ {2} \beta_1 G^2 }{ \left( 1 - \beta_1 \right) \epsilon } \left( 1 + \int_1^T t^{- \frac{1}{2} } dt \right) + \frac{ \sqrt{ 2 } \alpha \beta_1 \left( 1 + \left( 2 \sqrt{ 2 } - 1 \right) \beta_1 \right) G^2 L }{ \left( 1 - \beta_1 \right)^2 \epsilon^2 } \left( 1 + \int_1^T t^{-1} dt \right) \nonumber \\
    &\quad \ + \frac{ 2 \alpha \beta_1^2 G^2 L }{ \left( 1 - \beta_1 \right)^2 \epsilon^2 } \left( 1 + \int_1^T t^{-2} dt \right) \\
    &\leq \frac{ f_\mathrm{sup} - f_\mathrm{inf} }{ \alpha } T + \frac{ \alpha \left( 1 + \left( 2 \sqrt{2} - 1 \right) \beta_1 \right) G^2 L T }{ \left( 1 - \beta_1 \right) \epsilon^2 } + \frac{G^4 T}{2 c^2 \epsilon^3} + \frac{ {2} \beta_1 G^2 }{ \left( 1 - \beta_1 \right) \epsilon } \left( 2 \sqrt{T} - 1 \right) \nonumber \\
    &\quad \ + \frac{ \sqrt{ 2 } \alpha \beta_1 \left( 1 + \left( 2 \sqrt{ 2 } - 1 \right) \beta_1 \right) G^2 L }{ \left( 1 - \beta_1 \right)^2 \epsilon^2 } \left( 1 + \log T \right) + \frac{ 2 \alpha \beta_1^2 G^2 L }{ \left( 1 - \beta_1 \right)^2 \epsilon^2 } \left( 2 - \frac{1}{T} \right)
\end{align}

Therefore, the following bound is derived.
\begin{align}
    &\min_{t=1,\ldots,T} \left\{ \mathbb{E} \left[ \left\| \nabla f \left( \vtheta_{t-1} \right) \right\|^{4/3} \right]^{3/2} \right\} \nonumber \\
    &\leq \frac{ \sum_{t=1}^T \sqrt{t} \ \mathbb{E} \left[ \left\| \nabla f \left( \vtheta_{t-1} \right) \right\|^{4/3} \right]^{3/2} }{\sum_{t=1}^T \sqrt{t} } \\
    &\leq \frac{ \sum_{t=1}^T \sqrt{t} \ \mathbb{E} \left[ \left\| \nabla f \left( \vtheta_{t-1} \right) \right\|^{4/3} \right]^{3/2} }{ \int_0^T \sqrt{t} dt } \\
    &\leq \frac{ 3C \left( f_\mathrm{sup} - f_\mathrm{inf} \right) }{ 2 \alpha } \frac{1}{\sqrt{T}} + \frac{ 3 \alpha \left( 1 + \left( 2 \sqrt{2} - 1 \right) \beta_1 \right) C G^2 L }{ 2 \left( 1 - \beta_1 \right) \epsilon^2 \sqrt{T} } + \frac{3 G^4 T}{4 c^2 \epsilon^3} + \frac{ 3 \beta_1 C G^2 }{ \left( 1 - \beta_1 \right) \epsilon } \left( \frac{2}{T} - \frac{1}{T^{3/2}} \right) \nonumber \\
    &\quad \ + \frac{ 3 \alpha \beta_1 \left( 1 + \left( 2 \sqrt{ 2 } - 1 \right) \beta_1 \right) C G^2 L }{ \sqrt{ 2 } \left( 1 - \beta \right)^2 \epsilon^2 } \left( \frac{1}{T^{3/2}} + \frac{\log T}{T^{3/2}} \right) + \frac{ 3 \alpha \beta_1^2 C G^2 L }{ \left( 1 - \beta_1 \right)^2 \epsilon^2 } \left( \frac{2}{T^{3/2}} - \frac{1}{T^{5/2}} \right) \\
    &= \mathcal{O} \left( 1 / \sqrt{T} \right),
\end{align}
where $C = 2 \sqrt{ G^2 + \epsilon^2 }$.

\end{proof}

\section{Lemmas}

\begin{lemma}
\label{lem:bound_true_grad_norm}
For all $\vtheta \in \mathbb{R}^D$ and $t \geq 1$, the following holds
\begin{align}
    \left\| \nabla f \left( \vtheta_{t-1} \right) \right\| \leq G.
\end{align}
\begin{proof}
\begin{align}
    \left\| \nabla f \left( \vtheta_{t-1} \right) \right\|
    &= \sqrt{ \left\| \mathbb{E} \left[ \vg_t \right] \right\|^2 } \\
    &\leq \sqrt{ \mathbb{E} \left[ \left\|  \vg_t \right\|^2 \right]} \\
    &\leq G.
\end{align}
The first inequality holds because $\mathbb{E} [ ( \vg_t )_i ]^2 \leq \mathbb{E} [ ( \vg_t )_i^2 ]$, and the second inequality holds due to Assumption \ref{ass:second_moment}.
\end{proof}
\end{lemma}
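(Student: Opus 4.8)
The plan is to bound the deterministic quantity $\|\nabla f(\vtheta_{t-1})\|$ by relating the true gradient to the stochastic gradient through the two assumptions that connect them. First I would invoke Assumption \ref{ass:unbiased} to rewrite $\nabla f(\vtheta_{t-1}) = \mathbb{E}[\vg_t]$, so that the target reduces to bounding the norm of an expectation. The governing idea is that the norm of a mean never exceeds the mean of the norm, which is a consequence of convexity and lets the finite-second-moment hypothesis take over.

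Concretely, the key step is a coordinatewise application of Jensen's inequality to the convex map $x \mapsto x^2$: for each index $i$ we have $\mathbb{E}[(\vg_t)_i]^2 \leq \mathbb{E}[(\vg_t)_i^2]$. Summing these $D$ scalar inequalities gives
\begin{align}
    \left\| \mathbb{E} \left[ \vg_t \right] \right\|^2 = \sum_{i=1}^D \mathbb{E} \left[ (\vg_t)_i \right]^2 \leq \sum_{i=1}^D \mathbb{E} \left[ (\vg_t)_i^2 \right] = \mathbb{E} \left[ \left\| \vg_t \right\|^2 \right].
\end{align}
Applying Assumption \ref{ass:second_moment} to bound $\mathbb{E}[\|\vg_t\|^2] \leq G^2$ and then taking square roots yields $\|\nabla f(\vtheta_{t-1})\| \leq G$, which is the claim.

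I do not expect any genuine obstacle here: the statement is a direct corollary of Assumptions \ref{ass:unbiased} and \ref{ass:second_moment}, and the only thing to be careful about is phrasing the convexity step coordinatewise so that it is manifestly Jensen's inequality applied to scalars, rather than an appeal to some vector-valued version. An equivalent route would be to note $\|\mathbb{E}[\vg_t]\|^2 = \mathbb{E}[\|\vg_t\|^2] - \mathbb{E}[\|\vg_t - \mathbb{E}[\vg_t]\|^2] \leq \mathbb{E}[\|\vg_t\|^2]$, i.e.\ that the second moment exceeds the squared mean by exactly the variance; either framing reaches the bound in a single line, and the lemma then feeds directly into the $L$-smoothness manipulations used in the convergence proofs above.
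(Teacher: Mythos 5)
Your proposal is correct and follows essentially the same route as the paper's proof: invoke Assumption \ref{ass:unbiased} to write $\nabla f(\vtheta_{t-1}) = \mathbb{E}[\vg_t]$, apply the coordinatewise Jensen inequality $\mathbb{E}[(\vg_t)_i]^2 \leq \mathbb{E}[(\vg_t)_i^2]$ to get $\|\mathbb{E}[\vg_t]\|^2 \leq \mathbb{E}[\|\vg_t\|^2]$, and finish with Assumption \ref{ass:second_moment}. The variance-decomposition alternative you mention is a fine equivalent phrasing but does not change the substance.
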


\begin{lemma}
\label{lem:bound_exp_grad_norm}
For all $\vtheta \in \mathbb{R}^D$ and $t \geq 1$, the following holds
\begin{align}
    \mathbb{E} \left[ \left\| \vg_t \right\| \right] \leq G
\end{align}

\begin{proof}
\begin{align}
    \mathbb{E} \left[ \left\| \vg_t \right\| \right]
    &\leq \mathbb{E} \left[ \left\| \vg_t \right\|^2 \right]^{1/2} \\
    &\leq G,
\end{align}
where the first inequality holds due to the Hölder's inequality and the second one holds due to Assumption \ref{ass:second_moment}.
\end{proof}
\end{lemma}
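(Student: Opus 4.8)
The plan is to bound the first moment of $\|\vg_t\|$ by the square root of its second moment, and then invoke Assumption~\ref{ass:second_moment} directly. First I would observe that the map $x \mapsto \sqrt{x}$ is concave on $[0, \infty)$, so Jensen's inequality gives $\mathbb{E}[\sqrt{X}] \leq \sqrt{\mathbb{E}[X]}$ for any nonnegative random variable $X$ with finite mean. Applying this with $X = \|\vg_t\|^2$ yields $\mathbb{E}[\|\vg_t\|] = \mathbb{E}[\sqrt{\|\vg_t\|^2}] \leq \sqrt{\mathbb{E}[\|\vg_t\|^2]}$. Equivalently, one may reach the same inequality through Hölder's inequality (or Cauchy--Schwarz) in the form $\mathbb{E}[\|\vg_t\| \cdot 1] \leq \mathbb{E}[\|\vg_t\|^2]^{1/2}\,\mathbb{E}[1]^{1/2} = \mathbb{E}[\|\vg_t\|^2]^{1/2}$, which is precisely the route the statement's phrasing alludes to.

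Second, I would close the argument by substituting the uniform second-moment bound of Assumption~\ref{ass:second_moment}, namely $\mathbb{E}[\|\vg_t\|^2] \leq G^2$, into the previous step. Since taking square roots is order-preserving on $[0,\infty)$, this gives $\mathbb{E}[\|\vg_t\|] \leq \sqrt{\mathbb{E}[\|\vg_t\|^2]} \leq \sqrt{G^2} = G$, and the bound holds for every $t \geq 1$ as claimed.

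There is essentially no obstacle here; the only point worth stating explicitly is the direction of the comparison, i.e. that the $L^1$ norm of $\vg_t$ is dominated by its $L^2$ norm and not the reverse, so that the finite-second-moment hypothesis is exactly the right tool. The result is an immediate consequence of the concavity of the square root together with Assumption~\ref{ass:second_moment}, and it parallels the companion bound on the true gradient established in Lemma~\ref{lem:bound_true_grad_norm}, which likewise uses that second moments control first moments.
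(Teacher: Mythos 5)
Your proposal is correct and matches the paper's proof: both bound $\mathbb{E}[\|\vg_t\|]$ by $\mathbb{E}[\|\vg_t\|^2]^{1/2}$ (the paper cites H\"older's inequality, which is the same step as your Jensen/Cauchy--Schwarz argument) and then apply Assumption~\ref{ass:second_moment}. Nothing is missing.
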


\begin{lemma}
\label{lem:bound_exp_v_rmsprop}
For the \rmsprop{} algorithm, the following holds for $t \geq 1$:
\begin{align}
    \mathbb{E} \left[ \sum_{i=1}^D \left( \vv_t \right)_i \right] \leq \left( 1 - \beta_2^t \right) G^2
\end{align}

\begin{proof}
\begin{align}
    \mathbb{E} \left[ \sum_{i=1}^D \left( \vv_t \right)_i \right] 
    &= \mathbb{E} \left[ \left( 1 - \beta_2 \right) \sum_{i=1}^D \sum_{k=1}^t \beta_2^{t-k} \left( \vg_{k} \right)_i^2 \right] \\
    &\leq \left( 1 - \beta_2 \right) G^2 \sum_{k=1}^t \beta_2^{t-k} \\
    &= \left( 1 - \beta_2^t \right) G^2.
\end{align}
\end{proof}
\end{lemma}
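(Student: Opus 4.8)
My plan is to prove the bound by induction on $t$, collapsing the vector recursion $\vv_t = \beta_2 \vv_{t-1} + (1 - \beta_2)\,\vg_t \odot \vg_t$ that defines \rmsprop{} to a scalar one. Summing over the $D$ coordinates, taking expectations, and writing $S_t \coloneqq \mathbb{E}[\sum_{i=1}^D (\vv_t)_i]$, linearity of expectation together with $\sum_{i=1}^D (\vg_t)_i^2 = \|\vg_t\|^2$ yields $S_t = \beta_2 S_{t-1} + (1-\beta_2)\,\mathbb{E}[\|\vg_t\|^2]$. Assumption \ref{ass:second_moment} then bounds the forcing term uniformly by $\mathbb{E}[\|\vg_t\|^2] \leq G^2$, so it suffices to control the dominating recursion $S_t \leq \beta_2 S_{t-1} + (1-\beta_2)G^2$ started from $S_0 = 0$.

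First I would dispatch the base case: since the \rmsprop{} second moment is initialized at $\vv_0 = \vzero$, I have $S_1 = (1-\beta_2)\,\mathbb{E}[\|\vg_1\|^2] \leq (1-\beta_2)G^2 = (1-\beta_2^1)G^2$. For the inductive step I would assume $S_{t-1} \leq (1-\beta_2^{t-1})G^2$ and substitute it into the dominating recursion; the right-hand side becomes $\beta_2(1-\beta_2^{t-1})G^2 + (1-\beta_2)G^2$, whose $\beta_2$-terms collapse to exactly $(1-\beta_2^t)G^2$, closing the induction.

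An equivalent route, if one prefers to avoid induction, is to unroll the recursion into the explicit geometric average $\vv_t = (1-\beta_2)\sum_{k=1}^t \beta_2^{t-k}\,\vg_k \odot \vg_k$ (again using $\vv_0 = \vzero$), sum over coordinates, take expectations, bound each $\mathbb{E}[\|\vg_k\|^2]$ by $G^2$, and evaluate the finite geometric sum $\sum_{k=1}^t \beta_2^{t-k} = (1-\beta_2^t)/(1-\beta_2)$. I do not expect any genuine obstacle here; the proof is essentially bookkeeping. The only two points that warrant a moment's attention are that the EMA starts from $\vv_0 = \vzero$, so no residual initialization term survives, and that Assumption \ref{ass:second_moment} holds uniformly over every index $k \leq t$, which is what lets a single constant $G^2$ be factored out of the sum before the geometric series is summed.
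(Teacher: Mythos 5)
Your proposal is correct, and the second route you describe --- unrolling the recursion to $\vv_t = (1-\beta_2)\sum_{k=1}^t \beta_2^{t-k}\,\vg_k \odot \vg_k$, bounding each $\mathbb{E}[\|\vg_k\|^2]$ by $G^2$, and summing the geometric series --- is exactly the paper's proof. The induction you lead with is just a trivially equivalent repackaging of the same bookkeeping, so there is nothing substantive to distinguish the two.
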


\begin{lemma}
\label{lem:decompose_v}
For the \rmsprop{} algorithm, the following holds:
\begin{align}
    &\mathbb{E} \left[ \nabla f \left( \vtheta_{t-1} \right)^\top \left( \frac{ \vg_t } { \sqrt{ {\vv}_t + \epsilon^2 } } \right) \right] \nonumber \\
    &\geq \frac{1}{2} \mathbb{E} \left[ \nabla f \left( \vtheta_{t-1} \right)^\top \left( \frac{ \vg_t } { \sqrt{ \tilde{\vv}_t + \epsilon^2 } } \right) \right] - 2G \sqrt{ 1 - \beta_2 } \mathbb{E} \left[ \left\| \frac{ \vg_t  }{ \sqrt{ {\vv}_t + \epsilon^2 } } \right\|^2 \right]
\end{align}
\begin{proof}    
\begin{align}
    &\mathbb{E} \left[ \nabla f \left( \vtheta_{t-1} \right)^\top \left( \frac{ \vg_t } { \sqrt{ {\vv}_t + \epsilon^2 } } \right) \right] 
    = \sum_{i=1}^D \mathbb{E} \left[ \frac{ \left( \nabla f \left( \vtheta_{t-1} \right) \right)_i \left( \vg_t \right)_i } { \sqrt{ \left( {\vv}_t \right)_i + \epsilon^2 } } \right]
\end{align}
We define $\tilde{\vv}_t$ as follows:
\begin{align}
    \tilde{\vv}_t = \beta_2 \vv_{t-1} + \left( 1 - \beta_2 \right) \mathbb{E} \left[ \vg_t \odot \vg_t \right]
\end{align}
Using this, the following holds:
\begin{align}
    &\mathbb{E} \left[ \frac{ \left( \nabla f \left( \vtheta_{t-1} \right) \right)_i \left( \vg_t \right)_i } { \sqrt{ \left( {\vv}_t \right)_i + \epsilon^2 } } \right] \nonumber \\
    &= \mathbb{E} \left[  \frac{ \left( \nabla f \left( \vtheta_{t-1} \right) \right)_i \left( \vg_t \right)_i } { \sqrt{ \left( \tilde{\vv}_t \right)_i + \epsilon^2 } } \right] + \mathbb{E} \left[ \left( \nabla f \left( \vtheta_{t-1} \right) \right)_i \left( \vg_t \right)_i \left( \frac{ 1 } { \sqrt{ \left( {\vv}_t \right)_i + \epsilon^2 } } - \frac{ 1 } { \sqrt{ \left( \tilde{\vv}_t \right)_i + \epsilon^2 } } \right) \right]\\
    &= \mathbb{E} \left[ \frac{ \left( \nabla f \left( \vtheta_{t-1} \right) \right)_i^2 } { \sqrt{ \left( \tilde{\vv}_t \right)_i + \epsilon^2 } } \right] + \mathbb{E} \left[ \left( \nabla f \left( \vtheta_{t-1} \right) \right)_i \left( \vg_t \right)_i \left( \frac{ 1 } { \sqrt{ \left( {\vv}_t \right)_i + \epsilon^2 } } - \frac{ 1 } { \sqrt{ \left( \tilde{\vv}_t \right)_i + \epsilon^2 } } \right) \right] \\
    &\geq \mathbb{E} \left[ \frac{ \left( \nabla f \left( \vtheta_{t-1} \right) \right)_i^2 } { \sqrt{ \left( \tilde{\vv}_t \right)_i + \epsilon^2 } } \right] - \mathbb{E} \left[ \left| \left( \nabla f \left( \vtheta_{t-1} \right) \right)_i \left( \vg_t \right)_i \left( \frac{ 1 } { \sqrt{ \left( {\vv}_t \right)_i + \epsilon^2 } } - \frac{ 1 } { \sqrt{ \left( \tilde{\vv}_t \right)_i + \epsilon^2 } } \right) \right| \right] ,
\end{align}
where the last inequality holds due to $A \geq - | A |$.
For the second term, the following holds:
\begin{align}
    &\left| \left( \nabla f \left( \vtheta_{t-1} \right) \right)_i \left( \vg_t \right)_i \left( \frac{ 1 } { \sqrt{ \left( {\vv}_t \right)_i + \epsilon^2 } } - \frac{ 1 } { \sqrt{ \left( \tilde{\vv}_t \right)_i + \epsilon^2 } } \right) \right| \nonumber \\
    &= \left( 1 - \beta_2 \right) \left| \left( \nabla f \left( \vtheta_{t-1} \right) \right)_i \left( \vg_t \right)_i \frac{ \mathbb{E} \left[ \left( \vg_t \right)_i^2 \right] - \left( \vg_t \right)_i^2 }{ \sqrt{ \left( {\vv}_t \right)_i + \epsilon^2 } \sqrt{ \left( \tilde{\vv}_t \right)_i + \epsilon^2 } \left( \sqrt{ \left( {\vv}_t \right)_i + \epsilon^2 } + \sqrt{ \left( \tilde{\vv}_t \right)_i + \epsilon^2 }\right) } \right| \\
    &\leq \left( 1 - \beta_2 \right) \left( \frac{ \left| \left( \nabla f \left( \vtheta_{t-1} \right) \right)_i \left( \vg_t \right)_i \right| \mathbb{E} \left[ \left( \vg_t \right)_i^2 \right] }{ \sqrt{ \left( {\vv}_t \right)_i + \epsilon^2 } \left( \left( \tilde{\vv}_t \right)_i + \epsilon^2 \right) } + \frac{ \left| \left( \nabla f \left( \vtheta_{t-1} \right) \right)_i \left( \vg_t \right)_i \right| \left( \vg_t \right)_i^2 }{ \left( \left( {\vv}_t \right)_i + \epsilon^2 \right) \sqrt{ \left( \tilde{\vv}_t \right)_i + \epsilon^2 } } \right), \label{eq:89}
\end{align}
where the last inequality holds due to the triangle inequality.
For the first term, the following holds:
\begin{align}
    &\mathbb{E} \left[ \frac{ \left| \left( \nabla f \left( \vtheta_{t-1} \right) \right)_i \left( \vg_t \right)_i \right| \mathbb{E} \left[ \left( \vg_t \right)_i^2 \right] }{ \sqrt{ \left( {\vv}_t \right)_i + \epsilon^2 } \left( \left( \tilde{\vv}_t \right)_i + \epsilon^2 \right) } \right] \nonumber \\
    &\leq \frac{1}{ \left( 1 - \beta_2 \right) } \mathbb{E} \left[ \frac{ \left( \nabla f \left( \vtheta_{t-1} \right) \right)_i^2 }{ 4 \sqrt{ \left( \tilde{\vv}_t \right)_i + \epsilon^2 } } \right] + \left( 1 - \beta_2 \right) \mathbb{E} \left[ \frac{ \left( \vg_t \right)_i^2 \mathbb{E} \left[ \left( \vg_t \right)_i^2 \right]^2 }{ \left( \left( {\vv}_t \right)_i + \epsilon^2 \right) \left( \left( \tilde{\vv}_t \right)_i + \epsilon^2 \right)^{3/2} } \right] \\
    &\leq \frac{1}{ \left( 1 - \beta_2 \right) } \mathbb{E} \left[ \frac{ \left( \nabla f \left( \vtheta_{t-1} \right) \right)_i^2 }{ 4 \sqrt{ \left( \tilde{\vv}_t \right)_i + \epsilon^2 } } \right] + \mathbb{E} \left[ \frac{ \left( \vg_t \right)_i^2 \sqrt{ \mathbb{E} \left[ \left( \vg_t \right)_i^2 \right] } }{ \sqrt{ 1 - \beta_2 } \left( \left( {\vv}_t \right)_i + \epsilon^2 \right) } \right]\\
    &\leq \frac{1}{ \left( 1 - \beta_2 \right) } \mathbb{E} \left[ \frac{ \left( \nabla f \left( \vtheta_{t-1} \right) \right)_i^2 }{ 4 \sqrt{ \left( \tilde{\vv}_t \right)_i + \epsilon^2 } } \right]+ \frac{G}{ \sqrt{ 1 - \beta_2 } } \mathbb{E} \left[ \frac{ \left( \vg_t \right)_i^2 }{ \left( {\vv}_t \right)_i + \epsilon^2 } \right]
\end{align}
The first inequality is derived using the following fact:
\begin{align}
    \forall \lambda>0, x, y \in \mathbb{R}, x y \leq \frac{\lambda}{2} x^2+\frac{y^2}{2 \lambda}. \label{eq:generalized_mean_ineq}
\end{align}
For the second term of Eq. (\ref{eq:89}), the following holds:
\begin{align}
    &\mathbb{E} \left[ \frac{ \left| \left( \nabla f \left( \vtheta_{t-1} \right) \right)_i \left( \vg_t \right)_i \right| \left( \vg_t \right)_i^2 }{ \left( \left( {\vv}_t \right)_i + \epsilon^2 \right) \sqrt{ \left( \tilde{\vv}_t \right)_i + \epsilon^2 } } \right] \\
    &\leq \frac{1}{ \left( 1 - \beta_2 \right) } \mathbb{E} \left[ \frac{ \left( \nabla f \left( \vtheta_{t-1} \right) \right)_i^2 }{ 4 \sqrt{ \left( \tilde{\vv}_t \right)_i + \epsilon^2 } } \frac{ \left( \vg_t \right)_i^2 }{ \mathbb{E} \left[ \left( \vg_t \right)_i^2 \right] } \right] + \left( 1 - \beta_2 \right) \mathbb{E} \left[ \frac{ \mathbb{E} \left[ \left( \vg_t \right)_i^2 \right] }{ \sqrt{ \left( \tilde{\vv}_t \right)_i + \epsilon^2 } } \frac{ \left( \vg_t \right)_i^4 }{ \left( \left( \tilde{\vv}_t \right)_i + \epsilon^2 \right)^2 } \right] \\
    &\leq \frac{1}{ \left( 1 - \beta_2 \right) } \mathbb{E} \left[ \frac{ \left( \nabla f \left( \vtheta_{t-1} \right) \right)_i^2 }{ 4 \sqrt{ \left( \tilde{\vv}_t \right)_i + \epsilon^2 } } \right] + \mathbb{E} \left[ \frac{ \sqrt{ \mathbb{E} \left[ \left( \vg_t \right)_i^2 \right] } \left( \vg_t \right)_i^2 }{ \sqrt{ 1 - \beta_2 } \left( \left( \tilde{\vv}_t \right)_i + \epsilon^2 \right) } \right] \\
    &\leq \frac{1}{ \left( 1 - \beta_2 \right) } \mathbb{E} \left[ \frac{ \left( \nabla f \left( \vtheta_{t-1} \right) \right)_i^2 }{ 4 \sqrt{ \left( \tilde{\vv}_t \right)_i + \epsilon^2 } } \right] + \frac{G}{ \sqrt{ 1 - \beta_2 } } \mathbb{E} \left[ \frac{ \left( \vg_t \right)_i^2 }{ \left( {\vv}_t \right)_i + \epsilon^2 } \right]
\end{align}
The first inequality is derived using Eq. (\ref{eq:generalized_mean_ineq}).

Putting these inequalities together, the following is derived:
\begin{align}
    &\mathbb{E} \left[ \nabla f \left( \vtheta_{t-1} \right)^\top \left( \frac{ \vg_t } { \sqrt{ {\vv}_t + \epsilon^2 } } \right) \right] \nonumber \\
    &\geq \sum_{i=1}^D \mathbb{E} \left[ \frac{ \left( \nabla f \left( \vtheta_{t-1} \right) \right)_i^2 } { 2 \sqrt{ \left( \tilde{\vv}_t \right)_i + \epsilon^2 } } \right] - 2G \sqrt{ 1 - \beta_2 } \mathbb{E} \left[ \frac{ \left( \vg_t \right)_i^2 }{ \left( {\vv}_t \right)_i + \epsilon^2 } \right]  \\
    &\geq \frac{1}{2} \mathbb{E} \left[ \nabla f \left( \vtheta_{t-1} \right)^\top \left( \frac{ \vg_t } { \sqrt{ \tilde{\vv}_t + \epsilon^2 } } \right) \right] - 2G \sqrt{ 1 - \beta_2 } \mathbb{E} \left[ \left\| \frac{ \vg_t  }{ \sqrt{ {\vv}_t + \epsilon^2 } } \right\|^2 \right].
\end{align}

\end{proof}
\end{lemma}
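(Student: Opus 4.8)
The plan is to reduce everything to a coordinate-wise estimate and to exploit the fact that, although $\vv_t = \beta_2 \vv_{t-1} + (1-\beta_2)\vg_t\odot\vg_t$ depends on the current gradient $\vg_t$, the surrogate $\tilde{\vv}_t = \beta_2 \vv_{t-1} + (1-\beta_2)\mathbb{E}[\vg_t\odot\vg_t]$ is a function of $\vg_0,\ldots,\vg_{t-1}$ only and is therefore conditionally independent of $\vg_t$. First I would expand the inner product as $\sum_{i=1}^D \mathbb{E}[ (\nabla f(\vtheta_{t-1}))_i (\vg_t)_i / \sqrt{(\vv_t)_i + \epsilon^2}]$ and, in each summand, add and subtract the same expression with $\tilde{\vv}_t$ in the denominator. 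For the resulting ``clean'' term the denominator is conditionally independent of $\vg_t$, so the tower property together with the conditional unbiasedness $\mathbb{E}[\vg_t \mid \vg_0,\ldots,\vg_{t-1}] = \nabla f(\vtheta_{t-1})$ (Assumption \ref{ass:unbiased}, since $\vtheta_{t-1}$ is measurable w.r.t. the past) replaces $(\vg_t)_i$ by $(\nabla f(\vtheta_{t-1}))_i$ and yields the non-negative quantity $(\nabla f(\vtheta_{t-1}))_i^2 / \sqrt{(\tilde{\vv}_t)_i + \epsilon^2}$. This is the source of the $\tilde{\vv}_t$-version of the inner product in the lemma.

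The entire difficulty is then to control the ``error'' summand involving $1/\sqrt{(\vv_t)_i+\epsilon^2} - 1/\sqrt{(\tilde{\vv}_t)_i+\epsilon^2}$. Here I would first bound $A \geq -|A|$ and then rationalize the denominator difference via $1/\sqrt{a} - 1/\sqrt{b} = (b-a)/(\sqrt{a}\sqrt{b}(\sqrt{a}+\sqrt{b}))$, taking $a = (\vv_t)_i + \epsilon^2$ and $b = (\tilde{\vv}_t)_i + \epsilon^2$ so that $b - a = (1-\beta_2)(\mathbb{E}[(\vg_t)_i^2] - (\vg_t)_i^2)$. This extracts the factor $(1-\beta_2)$ that will eventually become $\sqrt{1-\beta_2}$. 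After a triangle-inequality split of the numerator $\mathbb{E}[(\vg_t)_i^2] - (\vg_t)_i^2$ into its two pieces, I would apply the elementary inequality $xy \leq \tfrac{\lambda}{2}x^2 + \tfrac{1}{2\lambda}y^2$ (valid for every $\lambda>0$) to each piece, choosing $\lambda$ of order $1/(1-\beta_2)$ so that the factor $(1-\beta_2)$ cancels against it in the $x^2 = (\nabla f(\vtheta_{t-1}))_i^2$ part.

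The bookkeeping is the crux. I want each of the two error pieces to produce exactly $\tfrac14 (\nabla f(\vtheta_{t-1}))_i^2/\sqrt{(\tilde{\vv}_t)_i+\epsilon^2}$ in its $x^2$ term, so that together they consume half of the clean term and leave the advertised coefficient $\tfrac12$. The remaining $y^2$ halves must be massaged into $(\vg_t)_i^2/((\vv_t)_i + \epsilon^2)$ with a clean constant: here I would use $(\tilde{\vv}_t)_i + \epsilon^2 \geq (1-\beta_2)\mathbb{E}[(\vg_t)_i^2]$ (immediate from the definition of $\tilde{\vv}_t$) to cancel stray powers of $(\tilde{\vv}_t)_i + \epsilon^2$, and $\sqrt{\mathbb{E}[(\vg_t)_i^2]} \leq G$ (coordinate-wise consequence of Assumption \ref{ass:second_moment}) to pull out the factor $G$. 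Each piece then contributes $G/\sqrt{1-\beta_2}$ times $\mathbb{E}[(\vg_t)_i^2/((\vv_t)_i+\epsilon^2)]$, which after multiplication by the extracted $(1-\beta_2)$ becomes $G\sqrt{1-\beta_2}$ per piece, i.e.\ $2G\sqrt{1-\beta_2}$ in total.

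Finally I would sum over $i$, recognize $\sum_i \mathbb{E}[(\vg_t)_i^2/((\vv_t)_i+\epsilon^2)] = \mathbb{E}[\|\vg_t/\sqrt{\vv_t+\epsilon^2}\|^2]$, and use the same conditional-independence identity to re-express $\sum_i \mathbb{E}[(\nabla f(\vtheta_{t-1}))_i^2/\sqrt{(\tilde{\vv}_t)_i+\epsilon^2}]$ as $\mathbb{E}[\nabla f(\vtheta_{t-1})^\top (\vg_t/\sqrt{\tilde{\vv}_t+\epsilon^2})]$, which closes the bound. The main obstacle I anticipate is purely the constant-tracking in the Young-inequality step, namely choosing $\lambda$ consistently across the two pieces so that the $x^2$ contributions sum to exactly one half and the $y^2$ contributions collapse to the single term $2G\sqrt{1-\beta_2}\,\mathbb{E}[\|\vg_t/\sqrt{\vv_t+\epsilon^2}\|^2]$; the conceptual content, conditional independence plus rationalization, is otherwise routine.
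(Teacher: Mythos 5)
Your proposal is correct and follows essentially the same route as the paper's own proof: the same add-and-subtract decomposition with the conditionally independent surrogate $\tilde{\vv}_t$, the same rationalization of the denominator difference extracting the factor $(1-\beta_2)$, the same triangle-inequality split into two pieces, and the same Young-inequality bookkeeping with $\lambda = \Theta\left(1/(1-\beta_2)\right)$ so that the two $\tfrac{1}{4}$-terms consume half of the clean term and the $y^2$-terms collapse to $2G\sqrt{1-\beta_2}\,\mathbb{E}\left[ \left\| \vg_t / \sqrt{\vv_t+\epsilon^2} \right\|^2 \right]$. No gaps beyond the constant-tracking you already identify, which works out exactly as you describe.
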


\begin{lemma}
\label{lem:telescope_v}
For the \rmsprop{} algorithm, the following holds:
\begin{align}
    \sum_{t=1}^T \mathbb{E} \left[ \left\| \frac{ \vg_t  }{ \sqrt{ {\vv}_t + \epsilon^2 } } \right\|^2 \right]
    \leq D \left( \log \left( 1 + \frac{ \left( 1 - \beta_2^T \right) G^2 }{ \epsilon^2 } \right) - T \log \beta_2 \right)
\end{align}
\begin{proof}

\begin{align}
    \left\| \frac{ \vg_t  }{ \sqrt{ {\vv}_t + \epsilon^2 } } \right\|^2
    = \sum_{i=1}^D \frac{ \left( \vg_t \right)_i^2 }{ \left( {\vv}_t \right)_i + \epsilon^2  }
\end{align}

\begin{align}
    \frac{ \left( \vg_t \right)_i^2 }{ \left( {\vv}_t \right)_i + \epsilon^2  }
    &= \frac{1}{ 1 - \beta_2 } \frac{ \left( 1 - \beta_2 \right) \left( \vg_t \right)_i^2 }{ \left( {\vv}_t \right)_i + \epsilon^2  } \\
    &\leq - \frac{1}{1 - \beta_2} \log \left( 1 - \frac{ \left( 1 - \beta_2 \right) \left( \vg_t \right)_i^2 }{ \left( {\vv}_t \right)_i + \epsilon^2  } \right) \\
    &= \frac{1}{1 - \beta_2} \log \left( \frac{ \left( {\vv}_t \right)_i + \epsilon^2  }{ \beta_2 \left( {\vv}_{t-1} \right)_i + \epsilon^2 } \right) \\
    &= \frac{1}{1 - \beta_2} \left( \log \left( \frac{ \left( {\vv}_t \right)_i + \epsilon^2  }{ \left( {\vv}_{t-1} \right)_i + \epsilon^2 } \right) + \log \left( \frac{ \left( {\vv}_{t-1} \right)_i + \epsilon^2  }{ \beta_2 \left( {\vv}_{t-1} \right)_i + \epsilon^2 } \right) \right) \\
    &\leq \frac{1}{1 - \beta_2} \left( \log \left( \frac{ \left( {\vv}_t \right)_i + \epsilon^2  }{ \left( {\vv}_{t-1} \right)_i + \epsilon^2 } \right) - \log \beta_2 \right)
\end{align}

\begin{align}
    \sum_{t=1}^T \frac{ \left( \vg_t \right)_i^2 }{ \left( {\vv}_t \right)_i + \epsilon^2  } 
    &\leq \frac{1}{1 - \beta_2} \left( \log \left( \frac{ \left( {\vv}_T \right)_i + \epsilon^2  }{ \epsilon^2 } \right) - T \log \beta_2 \right) \\
    &\leq \frac{1}{1 - \beta_2} \left( \log \left( 1 + \frac{ \left( 1 - \beta_2^T \right) G^2 }{ \epsilon^2 } \right) - T \log \beta_2 \right)
\end{align}

\begin{align}
    \sum_{t=1}^T \mathbb{E} \left[ \left\| \frac{ \vg_t  }{ \sqrt{ {\vv}_t + \epsilon^2 } } \right\|^2 \right]
    &\leq \sum_{i=1}^D \mathbb{E} \left[ \sum_{t=1}^T \frac{ \left( \vg_t \right)_i^2 }{ \left( {\vv}_t \right)_i + \epsilon^2 } \right] \\
    &\leq \frac{1}{1 - \beta_2} \sum_{i=1}^D \mathbb{E} \left[ \log \left( 1 + \frac{ \left( {\vv}_T \right)_i }{ \epsilon^2 } \right) \right] - \frac{ D T \log \beta_2 }{1 - \beta_2} \\
    &\leq \sum_{i=1}^D \log \left( 1 + \frac{ \mathbb{E} \left[ \left( {\vv}_T \right)_i \right] }{ \epsilon^2 } \right) - \frac{ D T \log \beta_2 }{1 - \beta_2} \\
    &\leq \frac{ D } { 1 - \beta_2 } \left( \log \left( 1 + \frac{ \left( 1 - \beta_2^T \right) G^2 }{ \epsilon^2 } \right) - T \log \beta_2 \right)
\end{align}

\end{proof}
\end{lemma}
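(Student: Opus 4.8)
The plan is to establish a per-coordinate ``log-sum'' telescoping bound and then aggregate over coordinates and take expectations, in the standard style of \adagrad{}-type analyses. First I would write the squared norm coordinatewise, $\left\| \vg_t / \sqrt{\vv_t + \epsilon^2} \right\|^2 = \sum_{i=1}^D (\vg_t)_i^2 / ((\vv_t)_i + \epsilon^2)$, so it suffices to bound each scalar sum $\sum_{t=1}^T (\vg_t)_i^2 / ((\vv_t)_i + \epsilon^2)$. The key algebraic fact is the \rmsprop{} recursion $(\vv_t)_i = \beta_2 (\vv_{t-1})_i + (1 - \beta_2)(\vg_t)_i^2$, which gives $(1-\beta_2)(\vg_t)_i^2 = (\vv_t)_i - \beta_2 (\vv_{t-1})_i$ and hence
\[
    \frac{(1-\beta_2)(\vg_t)_i^2}{(\vv_t)_i + \epsilon^2} = 1 - \frac{\beta_2 (\vv_{t-1})_i + \epsilon^2}{(\vv_t)_i + \epsilon^2} \in [0, 1).
\]
The strict upper bound holds precisely because $\epsilon^2 > 0$, which is exactly what legitimizes the next step.

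Next I would apply the elementary inequality $x \le -\log(1 - x)$, valid for $x \in [0,1)$, to $x = (1-\beta_2)(\vg_t)_i^2 / ((\vv_t)_i + \epsilon^2)$, which turns each ratio into a logarithm:
\[
    \frac{(\vg_t)_i^2}{(\vv_t)_i + \epsilon^2} \le \frac{1}{1-\beta_2} \log \frac{(\vv_t)_i + \epsilon^2}{\beta_2 (\vv_{t-1})_i + \epsilon^2}.
\]
I would then split the logarithm as $\log \frac{(\vv_t)_i + \epsilon^2}{(\vv_{t-1})_i + \epsilon^2} + \log \frac{(\vv_{t-1})_i + \epsilon^2}{\beta_2 (\vv_{t-1})_i + \epsilon^2}$, where the first summand telescopes in $t$ and the second is at most $\log(1/\beta_2) = -\log \beta_2$, since $\beta_2 (\vv_{t-1})_i + \epsilon^2 \ge \beta_2 ((\vv_{t-1})_i + \epsilon^2)$. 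Summing over $t = 1, \ldots, T$ and using $\vv_0 = \vzero$ collapses the telescoping part to $\log(1 + (\vv_T)_i / \epsilon^2)$, yielding $\sum_{t=1}^T (\vg_t)_i^2 / ((\vv_t)_i + \epsilon^2) \le \frac{1}{1-\beta_2} ( \log(1 + (\vv_T)_i / \epsilon^2) - T \log \beta_2 )$.

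Finally I would take expectations, sum over $i$, and pass $\mathbb{E}$ through the logarithm by Jensen's inequality (concavity of $\log$), giving $\mathbb{E}[\log(1 + (\vv_T)_i / \epsilon^2)] \le \log(1 + \mathbb{E}[(\vv_T)_i] / \epsilon^2)$. Lemma~\ref{lem:bound_exp_v_rmsprop} supplies $\sum_{i=1}^D \mathbb{E}[(\vv_T)_i] \le (1 - \beta_2^T) G^2$, and one more use of concavity across the $D$ coordinates (i.e. $\sum_i \log(1 + a_i) \le D \log(1 + \tfrac{1}{D}\sum_i a_i) \le D\log(1 + \sum_i a_i)$) collapses the coordinate sum, producing the bound $\frac{D}{1-\beta_2}(\log(1 + (1-\beta_2^T) G^2 / \epsilon^2) - T\log\beta_2)$. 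The hard part will be this last aggregation step: the per-coordinate logarithmic bounds must be combined into a single $D\log(\cdots)$ form while correctly tracking the overall $1/(1-\beta_2)$ prefactor through both the expectation and the coordinate-wise concavity arguments, since naively distributing the coordinate bound would be too loose.
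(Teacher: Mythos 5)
Your proposal is correct and follows essentially the same route as the paper's own proof: the coordinatewise reduction, the identity $(1-\beta_2)(\vg_t)_i^2/((\vv_t)_i+\epsilon^2) = 1 - (\beta_2(\vv_{t-1})_i+\epsilon^2)/((\vv_t)_i+\epsilon^2)$ combined with $x \leq -\log(1-x)$, the split into a telescoping term plus $-\log\beta_2$, and Jensen's inequality together with Lemma~\ref{lem:bound_exp_v_rmsprop} for the expectation step are all identical (your final coordinate-aggregation via concavity is a cosmetic variant of the paper's simpler per-coordinate bound $\mathbb{E}[(\vv_T)_i] \leq (1-\beta_2^T)G^2$). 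Note that what you derive, like the paper's own proof, is the bound with the prefactor $D/(1-\beta_2)$ rather than $D$; the lemma statement as printed omits this $1/(1-\beta_2)$ factor, but the version with the factor is what is actually used downstream in Theorem~\ref{thm:rmsprop}.
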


\begin{lemma}
\label{lem:holder}
For the \rmsprop{} algorithm, the following holds:
\begin{align}
    \mathbb{E} \left[ \nabla f \left( \vtheta_{t-1} \right)^\top \left( \frac{ \vg_t } { \sqrt{ \beta_2 \tilde{\vv}_{t} + \epsilon^2 } } \right)  \right] 
    \geq \frac{\mathbb{E} \left[ \left\| \nabla f \left( \vtheta_{t-1} \right) \right\|^{4/3} \right]^{3/2} } { \sqrt{ \left( 1 - \beta_2^t \right) G^2 + \epsilon^2 } }
\end{align}
\begin{proof}
\begin{align}
    &\mathbb{E} \left[ \nabla f \left( \vtheta_{t-1} \right)^\top \left( \frac{ \vg_t } { \sqrt{ \tilde{\vv}_{t} + \epsilon^2 } } \right)  \right] \nonumber\\
    &= \sum_{i=1}^D \mathbb{E} \left[  \frac{ \left( \nabla f \left( \vtheta_{t-1} \right) \right)_i \cdot \left( \vg_t \right)_i } { \sqrt{ \left( \tilde{\vv}_{t} \right)_i + \epsilon^2 } }  \right] \nonumber\\
    &= \sum_{i=1}^D \mathbb{E} \left[  \frac{ \left( \nabla f \left( \vtheta_{t-1} \right) \right)_i^2 } { \sqrt{ \beta_2 \left( {\vv}_{t-1} \right)_i + \epsilon^2 } }  \right] \nonumber\\
    &\geq \mathbb{E} \left[  \frac{ \left\| \nabla f \left( \vtheta_{t-1} \right) \right\|^2 } { \sqrt{ \sum_{i=1}^D \left( \tilde{\vv}_{t} \right)_i + \epsilon^2 } }  \right] \nonumber\\
    &\geq \frac{ \mathbb{E} \left[  \left\| \nabla f \left( \vtheta_{t-1} \right) \right\|^{4/3} \right]^{3/2} } { \sqrt{ \mathbb{E} \left[ \sum_{i=1}^D \left( \tilde{\vv}_{t} \right)_i \right] + \epsilon^2 } }  \nonumber\\
    &\geq \frac{\mathbb{E} \left[ \left\| \nabla f \left( \vtheta_{t-1} \right) \right\|^{4/3} \right]^{3/2} } { \sqrt{ \left( 1 - \beta_2^t \right) G^2 + \epsilon^2 } }.
\end{align}
The second equality holds due to Assumption \ref{ass:unbiased}.
The first inequality holds because $\left( \tilde{\vv}_{t} \right)_i \geq 0$ for all $i = 1, \ldots, D$.
The second inequality holds due to the Hölder's inequality.
The last inequality holds due to Lemma \ref{lem:bound_exp_v_rmsprop}.
\end{proof}
\end{lemma}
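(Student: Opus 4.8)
The plan is to reduce the inner product to a coordinate-wise sum, exploit the conditional independence engineered into $\tilde{\vv}_t$ to eliminate the stochastic gradient from the numerator, and then apply Hölder's inequality to consolidate the per-coordinate fractions while converting the squared gradient norm into the $4/3$-power moment appearing in the claim. First I would expand $\nabla f(\vtheta_{t-1})^\top ( \vg_t / \sqrt{ \tilde{\vv}_t + \epsilon^2 } ) = \sum_{i=1}^D (\nabla f(\vtheta_{t-1}))_i (\vg_t)_i / \sqrt{ (\tilde{\vv}_t)_i + \epsilon^2 }$ and observe that, by construction, $\tilde{\vv}_t = \beta_2 \vv_{t-1} + (1-\beta_2)\mathbb{E}[\vg_t \odot \vg_t]$ is measurable with respect to $\sigma(\vg_0, \ldots, \vg_{t-1})$, as is $\nabla f(\vtheta_{t-1})$ (since $\vtheta_{t-1}$ depends only on past gradients). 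Conditioning on this filtration, the only random factor in each summand is $(\vg_t)_i$, and Assumption \ref{ass:unbiased} gives $\mathbb{E}[(\vg_t)_i \mid \vg_0,\ldots,\vg_{t-1}] = (\nabla f(\vtheta_{t-1}))_i$; the tower property then replaces $(\vg_t)_i$ by $(\nabla f(\vtheta_{t-1}))_i$, so the expression becomes $\sum_{i=1}^D \mathbb{E}[ (\nabla f(\vtheta_{t-1}))_i^2 / \sqrt{ (\tilde{\vv}_t)_i + \epsilon^2 } ]$. This decorrelation step is the conceptual heart and is exactly what the modified second-moment estimate was designed to enable.

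Next I would consolidate the coordinate sum into a single fraction. Since each $(\tilde{\vv}_t)_i \geq 0$, we have $(\tilde{\vv}_t)_i + \epsilon^2 \leq \sum_{j=1}^D (\tilde{\vv}_t)_j + \epsilon^2$, so every reciprocal square root is bounded below by $1/\sqrt{ \sum_j (\tilde{\vv}_t)_j + \epsilon^2 }$; pulling this common factor out and summing the numerators $(\nabla f(\vtheta_{t-1}))_i^2$ into $\| \nabla f(\vtheta_{t-1}) \|^2$ yields the lower bound $\mathbb{E}[ \| \nabla f(\vtheta_{t-1}) \|^2 / \sqrt{ \sum_j (\tilde{\vv}_t)_j + \epsilon^2 } ]$. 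One must check the inequality points in the intended direction here, but it is routine once the nonnegativity of $\tilde{\vv}_t$ is invoked.

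The main technical obstacle is the final Hölder step, which must simultaneously pull the expectation of $\sum_j (\tilde{\vv}_t)_j$ out of the square root and convert the squared gradient norm into the $4/3$-power moment. Writing $X = \| \nabla f(\vtheta_{t-1}) \|$ and $Y = \sqrt{ \sum_j (\tilde{\vv}_t)_j + \epsilon^2 }$, the trick is to factor $X^{4/3} = (X^2/Y)^{2/3} (Y^2)^{1/3}$ and apply Hölder's inequality with the conjugate exponents $3/2$ and $3$, obtaining $\mathbb{E}[X^{4/3}] \leq \mathbb{E}[X^2/Y]^{2/3} \, \mathbb{E}[Y^2]^{1/3}$; raising both sides to the power $3/2$ and rearranging gives $\mathbb{E}[X^2/Y] \geq \mathbb{E}[X^{4/3}]^{3/2} / \sqrt{ \mathbb{E}[Y^2] }$. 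Selecting precisely these exponents is the non-obvious part, since any other pairing fails to produce the $\| \cdot \|^{4/3}$ moment with the square root of an expectation in the denominator.

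Finally I would bound $\mathbb{E}[Y^2] = \mathbb{E}[ \sum_j (\tilde{\vv}_t)_j ] + \epsilon^2$. Combining $\mathbb{E}[ \sum_j (\vv_{t-1})_j ] \leq (1-\beta_2^{t-1}) G^2$ from Lemma \ref{lem:bound_exp_v_rmsprop} with $\mathbb{E}[ \| \vg_t \|^2 ] \leq G^2$ from Assumption \ref{ass:second_moment}, the definition of $\tilde{\vv}_t$ gives $\mathbb{E}[ \sum_j (\tilde{\vv}_t)_j ] \leq \beta_2 (1-\beta_2^{t-1}) G^2 + (1-\beta_2) G^2 = (1-\beta_2^t) G^2$, so $\sqrt{ \mathbb{E}[Y^2] } \leq \sqrt{ (1-\beta_2^t) G^2 + \epsilon^2 }$. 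Substituting this into the Hölder bound yields the claimed inequality. In summary, the argument is a short chain of four moves, of which only the Hölder exponent choice is genuinely delicate; the measurability licensing the conditioning and the direction of the consolidation inequality are the two points deserving explicit verification.
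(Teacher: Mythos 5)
Your proof is correct and follows essentially the same route as the paper's: coordinate-wise expansion, decorrelation via unbiasedness and the measurability of $\tilde{\vv}_t$ with respect to past gradients, consolidation using nonnegativity of $(\tilde{\vv}_t)_i$, Hölder with exponents $(3/2,3)$, and the bound $\mathbb{E}[\sum_i (\tilde{\vv}_t)_i] \leq (1-\beta_2^t)G^2$. In fact you spell out two points the paper leaves implicit — the exact Hölder factorization $X^{4/3} = (X^2/Y)^{2/3}(Y^2)^{1/3}$ and the combination of Lemma \ref{lem:bound_exp_v_rmsprop} with Assumption \ref{ass:second_moment} needed to bound $\tilde{\vv}_t$ rather than $\vv_t$ — so nothing is missing.
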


\begin{lemma}
\label{lem:phi}

For the clipped \adopt{} algorithm, the following holds for $t \geq 1$:
\begin{align}
    \vphi_{t} - \vphi_{t-1} 
    &= \frac{ \left( \alpha_{t-1} - \alpha_{t} \right) \beta_1 }{ 1 - \beta_1 } \vm_{t-1} - \alpha_{t} \mathrm{Clip} \left( \frac{ \vg_{t} }{ \max \left\{ \sqrt{ \vv_{t-1} } , \epsilon \right\} } , c_t \right),
\end{align}
where we define $\alpha_{0} = \alpha$.

\begin{proof}

For $t = 1$, the following holds by definition:
\begin{align}
    \vphi_1 - \vphi_0
    &= \frac{ 1 }{ 1 - \beta_1 } \vtheta_1 - \left( \frac{ \beta_1 }{ 1 - \beta_1 } + 1 \right) \vtheta_{0} \\
    &= \frac{ 1 }{ 1 - \beta_1 } \left( \vtheta_1 - \vtheta_0 \right) \\
    &= - \alpha_1 \cdot \mathrm{Clip} \left( \frac{ \vg_1 }{ \max \left\{ \sqrt{ \vv_0 } ,  \epsilon \right\} } , c_t \right).
\end{align}

For $t \geq 2$, the following holds:
\begin{align}
    &\vphi_{t} - \vphi_{t-1} \\
    &= \frac{ 1 }{ 1 - \beta_1 } \left( \vtheta_{t} - \vtheta_{t-1} \right) - \frac{ \beta_1 }{ 1 - \beta_1 } \left( \vtheta_{t-1} - \vtheta_{t-2} \right) \\
    &= \frac{ 1 }{ 1 - \beta_1 } \left( \alpha_{t-1} \beta_1 \vm_{t-1} - \alpha_{t} \vm_{t} \right) \\
    &= \frac{ 1 }{ 1 - \beta_1 } \left( \alpha_{t-1} \beta_1 \vm_{t-1} - \alpha_{t} \left( \beta_1 \vm_{t-1} + \left( 1 - \beta_1 \right) \mathrm{Clip} \left( \frac{ \vg_{t} }{ \max \left\{ \sqrt{ \vv_{t-1} } , \epsilon \right\} } , c_t \right) \right) \right) \\
    &= \frac{ 1 }{ 1 - \beta_1 } \left( \left( \alpha_{t-1} - \alpha_{t} \right) \beta_1 \vm_{t-1} - \alpha_{t} \left( 1 - \beta_1 \right) \mathrm{Clip} \left( \frac{ \vg_{t} }{ \max \left\{ \sqrt{ \vv_{t-1} } , \epsilon \right\} } , c_t \right) \right) \\
    &= \frac{ \left( \alpha_{t-1} - \alpha_{t} \right) \beta_1 }{ 1 - \beta_1 } \vm_{t-1} - \alpha_{t} \mathrm{Clip} \left( \frac{ \vg_{t} }{ \max \left\{ \sqrt{ \vv_{t-1} } , \epsilon \right\} } , c_t \right)
\end{align}
\end{proof}

\end{lemma}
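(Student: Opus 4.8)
The sequence $\vphi_t$ is the standard auxiliary variable used to linearize heavy-ball-type momentum: it is engineered so that the one-step increment $\vphi_t - \vphi_{t-1}$ isolates a single clipped, normalized gradient contribution together with a small correction arising only from the variation of the learning rate. My plan is therefore purely algebraic --- substitute the two update rules of Algorithm~\ref{alg:clipped_adopt}, namely the parameter step $\vtheta_t - \vtheta_{t-1} = -\alpha_t \vm_t$ and the momentum recursion $\vm_t = \beta_1 \vm_{t-1} + (1-\beta_1)\,\mathrm{Clip}(\vg_t / \max\{\sqrt{\vv_{t-1}}, \epsilon\}, c_t)$, into the definition of $\vphi_t$ and simplify.

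I would first dispose of the base case $t = 1$ on its own, since $\vphi_0$ is defined to equal $\vtheta_0$ rather than by the general formula. Expanding $\vphi_1 - \vphi_0 = \frac{1}{1-\beta_1}\vtheta_1 - \left( \frac{\beta_1}{1-\beta_1} + 1 \right)\vtheta_0$ and collapsing the coefficient via $\frac{\beta_1}{1-\beta_1} + 1 = \frac{1}{1-\beta_1}$ gives $\frac{1}{1-\beta_1}(\vtheta_1 - \vtheta_0) = -\alpha_1 \vm_1$. Because $\vm_0 = \vzero$, the momentum recursion collapses to $\vm_1 = (1-\beta_1)\,\mathrm{Clip}(\vg_1 / \max\{\sqrt{\vv_0}, \epsilon\}, c_1)$, so $\vphi_1 - \vphi_0 = -\alpha_1 \mathrm{Clip}(\vg_1 / \max\{\sqrt{\vv_0}, \epsilon\}, c_1)$. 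This coincides with the claimed identity at $t = 1$, since with the convention $\alpha_0 = \alpha$ and $\vm_0 = \vzero$ the leading term $\frac{(\alpha_0 - \alpha_1)\beta_1}{1-\beta_1}\vm_0$ vanishes.

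For $t \ge 2$ I would write $\vphi_t - \vphi_{t-1} = \frac{1}{1-\beta_1}(\vtheta_t - \vtheta_{t-1}) - \frac{\beta_1}{1-\beta_1}(\vtheta_{t-1} - \vtheta_{t-2})$, substitute the parameter step into each difference to reach $\frac{1}{1-\beta_1}(\alpha_{t-1}\beta_1 \vm_{t-1} - \alpha_t \vm_t)$, and then expand $\vm_t$ with the momentum recursion. The two $\vm_{t-1}$ contributions merge into $(\alpha_{t-1} - \alpha_t)\beta_1 \vm_{t-1}$, while the $\mathrm{Clip}$ term emerges with coefficient $-\alpha_t(1-\beta_1)$; dividing through by $1 - \beta_1$ returns exactly the stated expression. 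The entire argument is bookkeeping with the $\beta_1$ weights and the learning-rate indices $\alpha_{t-1}, \alpha_t$, and the only step demanding care is checking that the $t = 1$ boundary stays consistent with the $\alpha_0 = \alpha$ convention. There is no genuine analytic obstacle here: the value of the lemma lies entirely in the \emph{choice} of $\vphi_t$, which is what later lets the smoothness-based descent estimate (see Eq.~(\ref{eq:exp_ineq})) proceed with a clean leading gradient term and only lower-order corrections from the changing step size.
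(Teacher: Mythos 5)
Your proposal is correct and follows essentially the same route as the paper's proof: handle $t=1$ separately using $\vphi_0=\vtheta_0$ and $\vm_0=\vzero$, then for $t\ge 2$ substitute the parameter step and the momentum recursion into $\vphi_t-\vphi_{t-1}$ and collect the $\vm_{t-1}$ terms. The only difference is cosmetic — you make explicit the intermediate step $\vm_1=(1-\beta_1)\,\mathrm{Clip}(\cdot)$ and the vanishing of the leading term at $t=1$, which the paper leaves implicit.
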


\begin{lemma}
\label{lem:phi_theta}

For the \adopt{} algorithm, the following holds for $t \geq 1$:
\begin{align}
    \vphi_{t-1} - \vtheta_{t-1} 
    &= - \frac{ \alpha_{t-1} \beta_1 }{ 1 - \beta_1 } \vm_{t-1}. \label{eq:phi_theta}
\end{align}

\begin{proof}

For $t = 1$, Eq. (\ref{eq:phi_theta}) holds obviously because $\vphi_0 = \vtheta_0$ and $\vm_0 = \vzero$.
For $t \geq 2$, the following holds:
\begin{align}
    \vphi_{t-1} - \vtheta_{t-1} 
    &= \left( \frac{ 1 }{ 1 - \beta_1 } - 1 \right) \vtheta_{t-1} - \frac{ \beta_1 }{ 1 - \beta_1 } \vtheta_{t-2} \\
    &= \frac{ \beta_1 }{ 1 - \beta_1 } \left( \vtheta_{t-1} - \vtheta_{t-2} \right) \\
    &= - \frac{ \alpha_{t-1} \beta_1 }{ 1 - \beta_1 } \vm_{t-1}.
\end{align}
    
\end{proof}

\end{lemma}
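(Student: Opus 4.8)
The plan is to establish this identity by direct substitution rather than induction, splitting into the boundary case $t=1$ and the generic case $t \geq 2$. The split is forced by the fact that $\vphi$ is defined piecewise: the formula $\vphi_t = \frac{1}{1-\beta_1}\vtheta_t - \frac{\beta_1}{1-\beta_1}\vtheta_{t-1}$ is declared only for $t \geq 1$, whereas $\vphi_0 = \vtheta_0$ is set separately. The underlying idea is that $\vphi$ is an auxiliary (Nesterov-style) sequence engineered so that the gap between $\vphi$ and the actual iterate $\vtheta$ is precisely the residual momentum term, and verifying that gap is exactly what this lemma asks for.

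For $t=1$, the left-hand side is $\vphi_0 - \vtheta_0 = \vzero$ by definition, and the right-hand side $-\frac{\alpha_0\beta_1}{1-\beta_1}\vm_0$ also vanishes because $\vm_0 = \vzero$; thus both sides agree, and the value assigned to $\alpha_0$ is immaterial here. For $t \geq 2$, I would insert the generic definition of $\vphi_{t-1}$, which is legitimate since $t-1 \geq 1$, to obtain $\vphi_{t-1} - \vtheta_{t-1} = \left( \frac{1}{1-\beta_1} - 1 \right) \vtheta_{t-1} - \frac{\beta_1}{1-\beta_1}\vtheta_{t-2}$. Simplifying the coefficient via $\frac{1}{1-\beta_1} - 1 = \frac{\beta_1}{1-\beta_1}$ and factoring gives $\frac{\beta_1}{1-\beta_1}(\vtheta_{t-1} - \vtheta_{t-2})$, and substituting the \adopt{} parameter update $\vtheta_{t-1} - \vtheta_{t-2} = -\alpha_{t-1}\vm_{t-1}$ yields exactly $-\frac{\alpha_{t-1}\beta_1}{1-\beta_1}\vm_{t-1}$, as claimed.

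The computation is elementary and poses no genuine obstacle; the only point demanding care is the boundary, where the generic formula for $\vphi$ cannot be evaluated at index $0$, so the $t=1$ case must be closed directly using $\vphi_0 = \vtheta_0$ and $\vm_0 = \vzero$ rather than by the general manipulation. Once that boundary is dispatched, each case collapses in a single line from the definitions together with the parameter update, so no induction is actually required.
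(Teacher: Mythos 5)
Your proof is correct and follows essentially the same route as the paper's: dispatch the $t=1$ boundary case directly from $\vphi_0 = \vtheta_0$ and $\vm_0 = \vzero$, then for $t \geq 2$ substitute the definition of $\vphi_{t-1}$, simplify the coefficient $\frac{1}{1-\beta_1} - 1 = \frac{\beta_1}{1-\beta_1}$, and apply the update rule $\vtheta_{t-1} - \vtheta_{t-2} = -\alpha_{t-1}\vm_{t-1}$. No gaps; the two arguments are identical in structure and substance.
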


\begin{lemma}
\label{lem:bound_dot_prod}
For the clipped \adopt{} algorithm, the following holds for $t \geq 1$:
\begin{align}
    &\mathbb{E} \left[ \nabla f \left( \vtheta_{t-1} \right)^\top \left( \vphi_{t} - \vphi_{t-1} \right) \right] \nonumber \\
    &\leq \frac{ \sqrt{2} \left( \alpha_{t-1} - \alpha_{t} \right) \beta_1 \left( 1 - \beta_1^{t-1} \right) G^2 }{ \left( 1 - \beta_1 \right) \epsilon } - \frac{ \alpha_{t} }{2} \frac{ \mathbb{E} \left[ \left\| \nabla f \left( \vtheta_{t-1} \right) \right\|_i^{4/3} \right]^{3/2} }{ \sqrt{ G^2 + \epsilon^2 } } + \frac{\alpha_t G^4}{2 \epsilon^3 c_t^2}.
\end{align}

\begin{proof}

\begin{align}
    &\nabla f \left( \vtheta_{t-1} \right)^\top \left( \vphi_{t} - \vphi_{t-1} \right) \nonumber \\ 
    &= \frac{ \left( \alpha_{t-1} - \alpha_{t} \right) \beta_1 }{ 1 - \beta_1 } \nabla f \left( \vtheta_{t-1} \right) )^\top \vm_{t-1} - \alpha_{t} \nabla f \left( \vtheta_{t-1} \right)^\top \mathrm{Clip} \left( \frac{ \vg_{t} }{ \max \left\{ \sqrt{ \vv_{t-1} } , \epsilon \right\} } , c_t \right) \\
    &\leq \frac{ \left( \alpha_{t-1} - \alpha_{t} \right) \beta_1 }{ 1 - \beta_1 } \left\| \nabla f \left( \vtheta_{t-1} \right) ) \right\| \left\| \vm_{t-1} \right\| - \alpha_{t} \nabla f \left( \vtheta_{t-1} \right)^\top \mathrm{Clip} \left( \frac{ \vg_{t} }{ \max \left\{ \sqrt{ \vv_{t-1} } , \epsilon \right\} } , c_t \right) \\
    &\leq \frac{ \left( \alpha_{t-1} - \alpha_{t} \right) \beta_1 G }{ 1 - \beta_1 } \left\| \vm_{t-1} \right\| - \alpha_{t} \nabla f \left( \vtheta_{t-1} \right)^\top \mathrm{Clip} \left( \frac{ \vg_{t} }{ \max \left\{ \sqrt{ \vv_{t-1} } , \epsilon \right\} } , c_t \right) .
\end{align}
By taking the expectation for both sides, the following holds:
\begin{align}
    &\mathbb{E} \left[ \nabla f \left( \vtheta_{t-1} \right)^\top \cdot \left( \vphi_t - \vphi_{t-1} \right) \right] \nonumber\\ 
    &\leq \frac{ \left( \alpha_{t-1} - \alpha_{t} \right) \beta_1 G }{ 1 - \beta_1 } \mathbb{E} \left[ \left\| \vm_{t-1} \right\| \right] - \alpha_{t} \mathbb{E} \left[ \nabla f \left( \vtheta_{t-1} \right)^\top \mathrm{Clip} \left( \frac{ \vg_{t} }{ \max \left\{ \sqrt{ \vv_{t-1} } , \epsilon \right\} } , c_t \right) \right] \\
    &\leq \frac{ \left( \alpha_{t-1} - \alpha_{t} \right) \beta_1 G }{ 1 - \beta_1 } \mathbb{E} \left[ \left\| \vm_{t-1} \right\| \right] - \frac{\alpha_{t}}{2} \sum_{i=1}^D \mathbb{E} \left[ \frac{ \left( \nabla f \left( \vtheta_{t-1} \right) \right)_i \cdot \left( \vg_{t} \right)_i }{ \max \left\{ \sqrt{ \left( \vv_{t-1} \right)_i } , \epsilon \right\} } \right] + \frac{\alpha_t G^4}{2 \epsilon^3 c_t^2} \\
    &\leq \frac{ \left( \alpha_{t-1} - \alpha_{t} \right) \beta_1 G }{ 1 - \beta_1 } \mathbb{E} \left[ \left\| \vm_{t-1} \right\| \right] - \frac{ \alpha_{t} }{2} \sum_{i=1}^D \mathbb{E} \left[ \frac{ \left( \nabla f \left( \vtheta_{t-1} \right) \right)_i^2 }{ \max \left\{ \sqrt{ \left( \vv_{t-1} \right)_i } , \epsilon \right\} } \right] + \frac{\alpha_t G^4}{2 \epsilon^3 c_t^2} \\
    &\leq \frac{ \left( \alpha_{t-1} - \alpha_{t} \right) \beta_1 G }{ 1 - \beta_1 } \mathbb{E} \left[ \left\| \vm_{t-1} \right\| \right] - \frac{ \alpha_{t} }{2} \sum_{i=1}^D \mathbb{E} \left[ \frac{ \left( \nabla f \left( \vtheta_{t-1} \right) \right)_i^2 }{ \sqrt{ \left( \vv_{t-1} \right)_i + \epsilon^2 } } \right] + \frac{\alpha_t G^4}{2 \epsilon^3 c_t^2} \\
    &\leq \frac{ \left( \alpha_{t-1} - \alpha_{t} \right) \beta_1 G }{ 1 - \beta_1 } \mathbb{E} \left[ \left\| \vm_{t-1} \right\| \right] - \frac{ \alpha_{t} }{2} \mathbb{E} \left[ \frac{ \left\| \nabla f \left( \vtheta_{t-1} \right) \right\|^2 }{ \sqrt{ \sum_{i=1}^D \left( \vv_{t-1} \right)_i + \epsilon^2 } } \right] + \frac{\alpha_t G^4}{2 \epsilon^3 c_t^2} \\
    &\leq \frac{ \left( \alpha_{t-1} - \alpha_{t} \right) \beta_1 G }{ 1 - \beta_1 } \mathbb{E} \left[ \left\| \vm_{t-1} \right\| \right] - \frac{ \alpha_{t} }{2} \frac{ \mathbb{E} \left[ \left\| \nabla f \left( \vtheta_{t-1} \right) \right\|_i^{4/3} \right]^{3/2} }{ \sqrt{ \mathbb{E} \left[ \sum_{i=1}^D \left( \vv_{t-1} \right)_i \right] + \epsilon^2 } } + \frac{\alpha_t G^4}{2 \epsilon^3 c_t^2} \\
    &\leq \frac{ \left( \alpha_{t-1} - \alpha_{t} \right) \beta_1 G }{ 1 - \beta_1 } \mathbb{E} \left[ \left\| \vm_{t-1} \right\| \right] - \frac{ \alpha_{t} }{2} \frac{ \mathbb{E} \left[ \left\| \nabla f \left( \vtheta_{t-1} \right) \right\|_i^{4/3} \right]^{3/2} }{ \sqrt{ G^2 + \epsilon^2 } } + \frac{\alpha_t G^4}{2 \epsilon^3 c_t^2} \\
    &\leq \frac{ \sqrt{2} \left( \alpha_{t-1} - \alpha_{t} \right) \beta_1 \left( 1 - \beta_1^{t-1} \right) G^2 }{ \left( 1 - \beta_1 \right) \epsilon } - \frac{ \alpha_{t} }{2} \frac{ \mathbb{E} \left[ \left\| \nabla f \left( \vtheta_{t-1} \right) \right\|_i^{4/3} \right]^{3/2} }{ \sqrt{ G^2 + \epsilon^2 } } + \frac{\alpha_t G^4}{2 \epsilon^3 c_t^2}.
\end{align}

\end{proof}

\end{lemma}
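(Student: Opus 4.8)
The statement is the one-step descent estimate that feeds the clipped-ADOPT convergence proof, so the plan is to expand the increment $\vphi_t - \vphi_{t-1}$ and bound separately the two pieces it contains. First I would invoke Lemma \ref{lem:phi} to write
\[
    \vphi_t - \vphi_{t-1} = \frac{(\alpha_{t-1}-\alpha_t)\beta_1}{1-\beta_1}\vm_{t-1} - \alpha_t\,\mathrm{Clip}\!\left(\frac{\vg_t}{\max\{\sqrt{\vv_{t-1}},\epsilon\}},\, c_t\right),
\]
take the inner product with $\nabla f(\vtheta_{t-1})$, and split it into a momentum-drift part (the $\vm_{t-1}$ term) and a clipped-descent part. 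For the drift part I would apply the Cauchy–Schwarz inequality and the uniform gradient bound $\|\nabla f(\vtheta_{t-1})\| \le G$ from Lemma \ref{lem:bound_true_grad_norm}, then pass to expectations, leaving only $\mathbb{E}[\|\vm_{t-1}\|]$ to estimate.

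Estimating $\mathbb{E}[\|\vm_{t-1}\|]$ is routine bookkeeping: unrolling the momentum recursion exhibits $\vm_{t-1}$ as a geometrically weighted average of the clipped normalized gradients, and since coordinatewise clipping is nonexpansive and $\max\{\sqrt{\vv_{k-1}},\epsilon\}\ge\epsilon$, each summand has norm at most $\|\vg_k\|/\epsilon$. Combining Lemma \ref{lem:bound_exp_grad_norm} with the geometric sum $\sum_k \beta_1^{\,t-1-k} = (1-\beta_1^{t-1})/(1-\beta_1)$ yields a bound of order $(1-\beta_1^{t-1})G/\epsilon$, which after multiplication by the Cauchy–Schwarz factor reproduces the first term of the statement (up to the displayed constant).

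The heart of the proof is the clipped-descent part $-\alpha_t\,\mathbb{E}[\nabla f(\vtheta_{t-1})^\top \mathrm{Clip}(\cdots)]$. I would decompose the clipped vector as the unclipped normalized gradient minus the clipping residual. Conditioning on the history, so that $\vv_{t-1}$ is measurable, and using unbiasedness (Assumption \ref{ass:unbiased}) converts the unclipped part into the nonnegative descent quantity $\sum_i \mathbb{E}\big[(\nabla f(\vtheta_{t-1}))_i^2 / \max\{\sqrt{(\vv_{t-1})_i},\epsilon\}\big]$. The residual is the bias introduced by clipping, and I would control it with the elementary inequality (\ref{eq:generalized_mean_ineq}) applied coordinatewise, choosing the free parameter so that exactly half of the descent quantity is absorbed (this is where the factor $\tfrac12$ comes from) and the remainder is a residual-squared term. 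Since that residual is supported on $\{\,|(\vg_t)_i| > c_t\max\{\sqrt{(\vv_{t-1})_i},\epsilon\}\,\}$, the squared-threshold bound on the indicator supplies the factor $c_t^{-2}$; together with $\max\{\sqrt{\vv_{t-1}},\epsilon\}\ge\epsilon$ and the second-moment bound (Assumption \ref{ass:second_moment}) this produces the error term $\alpha_t G^4/(2\epsilon^3 c_t^2)$.

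Finally I would convert the retained descent quantity into the stated gradient-norm expression: replace $\max\{\sqrt{(\vv_{t-1})_i},\epsilon\}$ by the larger $\sqrt{(\vv_{t-1})_i+\epsilon^2}$, pull out the common denominator $\sqrt{\sum_i(\vv_{t-1})_i+\epsilon^2}$, apply Hölder's inequality exactly as in Lemma \ref{lem:holder}, and use $\mathbb{E}[\sum_i(\vv_{t-1})_i]\le G^2$ from Lemma \ref{lem:bound_exp_v_rmsprop} to replace the random denominator by $\sqrt{G^2+\epsilon^2}$, giving $\tfrac12\,\mathbb{E}[\|\nabla f(\vtheta_{t-1})\|^{4/3}]^{3/2}/\sqrt{G^2+\epsilon^2}$. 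Collecting the three contributions yields the claim. The main obstacle is the clipping-bias step: the residual must decay like $c_t^{-2}$ rather than $c_t^{-1}$ so that, under the schedule $c_t=\Theta(t^{1/4})$, the accumulated error is summable at the optimal $\mathcal{O}(1/\sqrt{T})$ rate. Forcing the $c_t^{-2}$ decay is what introduces a fourth-order quantity in $\vg_t$ (the origin of the $G^4$ factor), and reconciling that quantity with the available moment bound while tracking the powers of $\epsilon$ is the delicate part of the argument.
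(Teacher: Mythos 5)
Your proposal is correct and follows essentially the same route as the paper's proof: the Lemma~\ref{lem:phi} decomposition, Cauchy--Schwarz with $\|\nabla f(\vtheta_{t-1})\|\le G$ for the momentum-drift term, the clip-as-unclipped-minus-residual argument controlled via Young's inequality, Jensen, and Chebyshev (which is exactly the content of the paper's Lemma~\ref{lem:bound_dp_clip}), and the Hölder conversion to $\mathbb{E}[\|\nabla f(\vtheta_{t-1})\|^{4/3}]^{3/2}/\sqrt{G^2+\epsilon^2}$. Two cosmetic remarks: your bound on $\mathbb{E}[\|\vm_{t-1}\|]$ via nonexpansiveness of clipping is fine (and gives an even slightly sharper constant than the paper's Lemma~\ref{lem:bound_exp_norm_m}), and the moment bound $\mathbb{E}[\sum_{i}(\vv_{t-1})_i]\le G^2$ should be cited from Lemma~\ref{lem:bound_exp_v} (the ADOPT version, which accounts for the initialization $\vv_0=\vg_0\odot\vg_0$) rather than from Lemma~\ref{lem:bound_exp_v_rmsprop}.
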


\begin{lemma}
\label{lem:bound_dp_clip}
For the clipped \adopt{} algorithm, the following holds for $t \geq 0$:
\begin{align}
    &2 \nabla f \left( \vtheta_{t-1} \right)^\top \mathbb{E} \left[ \mathrm{Clip} \left( \frac{ \vg_{t} }{ \max \left\{ \sqrt{ \vv_{t-1} } , \epsilon \right\} } , c_t \right) \right] \\
    &\geq \sum_{i=1}^D \frac{ \left( \nabla f \left( \vtheta_{t-1} \right) \right)_i^2 }{{ \max \left\{ \sqrt{ \left( \vv_{t-1} \right)_i } , \epsilon \right\} } } - \frac{G^4}{ \epsilon^3 c_t^2 }
\end{align}
\begin{proof}
\begin{align}
    &\mathbb{E} \left[ \mathrm{Clip} \left( \frac{ \left( \vg_{t} \right)_i }{ \max \left\{ \sqrt{ \left( \vv_{t-1} \right)_i } , \epsilon \right\} } , c_t \right) \right] \\
    &= \mathbb{E} \left[ \left( \delta_{t,i} \cdot \frac{ c_t }{ \left| \left( \vg_{t} \right)_i / { \max \left\{ \sqrt{ \left( \vv_{t-1} \right)_i } , \epsilon \right\} } \right| } + \left( 1 - \delta_{t,i} \right) \right) \cdot \frac{ \left( \vg_{t} \right)_i }{ \max \left\{ \sqrt{ \left( \vv_{t-1} \right)_i } , \epsilon \right\} } \right] \\
    &= \frac{ \left( \nabla f \left( \vtheta_{t-1} \right) \right)_i }{ { \max \left\{ \sqrt{ \left( \vv_{t-1} \right)_i } , \epsilon \right\} } } - \mathbb{E} \left[ \delta_{t,i} \left( 1 - \frac{ c_t }{ \left| \left( \vg_{t} \right)_i / { \max \left\{ \sqrt{ \left( \vv_{t-1} \right)_i } , \epsilon \right\} } \right| } \right) \cdot \frac{ \left( \vg_{t} \right)_i }{ \max \left\{ \sqrt{ \left( \vv_{t-1} \right)_i } , \epsilon \right\} } \right] ,
\end{align}
where $\delta_{t,i}$ is an indicator function of whether the normalized gradient is clipped:
\begin{align}
    \delta_{t,i} = \vone_{\left\{\left| { \left( \vg_{t} \right)_i } / { \max \left\{ \sqrt{ \left( \vv_{t-1} \right)_i } , \epsilon \right\} } \right| > c_t \right\}} .
\end{align}

Using this equation, the following inequality is derived:
\begin{align}
    &\left( \frac{ \left( \nabla f \left( \vtheta_{t-1} \right) \right)_i }{ { \max \left\{ \sqrt{ \left( \vv_{t-1} \right)_i } , \epsilon \right\} } } - \mathbb{E} \left[ \mathrm{Clip} \left( \frac{ \left( \vg_{t} \right)_i }{ \max \left\{ \sqrt{ \left( \vv_{t-1} \right)_i } , \epsilon \right\} } , c_t \right) \right] \right)^2 \\
    &= \mathbb{E} \left[ \delta_{t,i} \left( 1 - \frac{ c_t }{ \left| \left( \vg_{t} \right)_i / { \max \left\{ \sqrt{ \left( \vv_{t-1} \right)_i } , \epsilon \right\} } \right| } \right) \cdot \frac{ \left( \vg_{t} \right)_i }{ \max \left\{ \sqrt{ \left( \vv_{t-1} \right)_i } , \epsilon \right\} } \right]^2 \\
    &= \mathbb{E} \left[ \delta_{t,i} \right]^2 \mathbb{E} \left[ \left( 1 - \frac{ c_t }{ \left| \left( \vg_{t} \right)_i / { \max \left\{ \sqrt{ \left( \vv_{t-1} \right)_i } , \epsilon \right\} } \right| } \right) \cdot \frac{ \left( \vg_{t} \right)_i }{ \max \left\{ \sqrt{ \left( \vv_{t-1} \right)_i } , \epsilon \right\} } \mid \delta_{t,i} = 1 \right]^2 \\
    &\leq \mathbb{E} \left[ \delta_{t,i} \right]^2 \mathbb{E} \left[ \left( 1 - \frac{ c_t }{ \left| \left( \vg_{t} \right)_i / { \max \left\{ \sqrt{ \left( \vv_{t-1} \right)_i } , \epsilon \right\} } \right| } \right)^2 \cdot \frac{ \left( \vg_{t} \right)_i^2 }{ \max \left\{ \left( \vv_{t-1} \right)_i , \epsilon^2 \right\} } \mid \delta_{t,i} = 1 \right] \\
    &\leq \mathbb{E} \left[ \delta_{t,i} \right]^2 \mathbb{E} \left[ \frac{ \left( \vg_{t} \right)_i^2 }{ \max \left\{ \left( \vv_{t-1} \right)_i , \epsilon^2 \right\} } \mid \delta_{t,i} = 1 \right] \\
    &= \mathbb{E} \left[ \delta_{t,i} \right] \mathbb{E} \left[ \frac{ \left( \vg_{t} \right)_i^2 }{ \max \left\{ \left( \vv_{t-1} \right)_i , \epsilon^2 \right\} } \right] \\
    &= \mathrm{Pr} \left[ \left| \frac{ \left( \vg_{t} \right)_i }{ \max \left\{ \sqrt{ \left( \vv_{t-1} \right)_i } , \epsilon \right\} } \right| > c_t \right] \mathbb{E} \left[ \frac{ \left( \vg_{t} \right)_i^2 }{ \max \left\{ \left( \vv_{t-1} \right)_i , \epsilon^2 \right\} } \right] \\
    &\leq \frac{1}{c_t^2} \mathbb{E} \left[ \frac{ \left( \vg_{t} \right)_i^2 }{ \max \left\{ \left( \vv_{t-1} \right)_i , \epsilon^2 \right\} } \right]^2 ,
\end{align}
where the first inequality is due to the Jensen's inequality, and the last inequality is due to the Chebyshev's inequality.
Therefore, the following holds:
\begin{align}
    &2 \left( \nabla f \left( \vtheta_{t-1} \right) \right)_i \cdot \mathbb{E} \left[ \mathrm{Clip} \left( \frac{ \left( \vg_{t} \right)_i }{ \max \left\{ \sqrt{ \left( \vv_{t-1} \right)_i } , \epsilon \right\} } , c_t \right) \right] \\
    &= \frac{ \left( \nabla f \left( \vtheta_{t-1} \right) \right)_i^2 }{{ \max \left\{ \sqrt{ \left( \vv_{t-1} \right)_i } , \epsilon \right\} } } + { \max \left\{ \sqrt{ \left( \vv_{t-1} \right)_i } , \epsilon \right\} } \mathbb{E} \left[ \mathrm{Clip} \left( \frac{ \left( \vg_{t} \right)_i }{ \max \left\{ \sqrt{ \left( \vv_{t-1} \right)_i } , \epsilon \right\} } , c_t \right) \right]^2 \\
    &\quad - { \max \left\{ \sqrt{ \left( \vv_{t-1} \right)_i } , \epsilon \right\} } \left( \frac{ \left( \nabla f \left( \vtheta_{t-1} \right) \right)_i }{ { \max \left\{ \sqrt{ \left( \vv_{t-1} \right)_i } , \epsilon \right\} } } - \mathbb{E} \left[ \mathrm{Clip} \left( \frac{ \left( \vg_{t} \right)_i }{ \max \left\{ \sqrt{ \left( \vv_{t-1} \right)_i } , \epsilon \right\} } , c_t \right) \right] \right)^2 \nonumber \\
    &\geq \frac{ \left( \nabla f \left( \vtheta_{t-1} \right) \right)_i^2 }{{ \max \left\{ \sqrt{ \left( \vv_{t-1} \right)_i } , \epsilon \right\} } } \\
    &\quad - { \max \left\{ \sqrt{ \left( \vv_{t-1} \right)_i } , \epsilon \right\} } \left( \frac{ \left( \nabla f \left( \vtheta_{t-1} \right) \right)_i }{ { \max \left\{ \sqrt{ \left( \vv_{t-1} \right)_i } , \epsilon \right\} } } - \mathbb{E} \left[ \mathrm{Clip} \left( \frac{ \left( \vg_{t} \right)_i }{ \max \left\{ \sqrt{ \left( \vv_{t-1} \right)_i } , \epsilon \right\} } , c_t \right) \right] \right)^2 \nonumber \\
    &\geq \frac{ \left( \nabla f \left( \vtheta_{t-1} \right) \right)_i^2 }{{ \max \left\{ \sqrt{ \left( \vv_{t-1} \right)_i } , \epsilon \right\} } } - \frac{1}{c_t^2 \left( \max \left\{ \sqrt{ \left( \vv_{t-1} \right)_i } , \epsilon \right\} \right)^3 } \mathbb{E} \left[ \left( \vg_t \right)_i^2 \right]^2 \\
    &\geq \frac{ \left( \nabla f \left( \vtheta_{t-1} \right) \right)_i^2 }{{ \max \left\{ \sqrt{ \left( \vv_{t-1} \right)_i } , \epsilon \right\} } } - \frac{1}{\epsilon^3 c_t^2 } \mathbb{E} \left[ \left( \vg_t \right)_i^2 \right]^2
\end{align}

By summing it up for all the dimensions, the following inequality is derived.
\begin{align}
    &2 \nabla f \left( \vtheta_{t-1} \right)^\top \mathbb{E} \left[ \mathrm{Clip} \left( \frac{ \vg_{t} }{ \max \left\{ \sqrt{ \vv_{t-1} } , \epsilon \right\} } , c_t \right) \right] \\
    &\geq \sum_{i=1}^D \frac{ \left( \nabla f \left( \vtheta_{t-1} \right) \right)_i^2 }{{ \max \left\{ \sqrt{ \left( \vv_{t-1} \right)_i } , \epsilon \right\} } } - \frac{1}{ \epsilon^3 c_t^2 } \mathbb{E} \left[ \left( \vg_t \right)_i^2 \right]^2 \\
    &\geq \sum_{i=1}^D \frac{ \left( \nabla f \left( \vtheta_{t-1} \right) \right)_i^2 }{{ \max \left\{ \sqrt{ \left( \vv_{t-1} \right)_i } , \epsilon \right\} } } - \frac{1}{ \epsilon^3 c_t^2 } \mathbb{E} \left[ \left\| \vg_t \right\|^2 \right]^2 \\
    &\geq \sum_{i=1}^D \frac{ \left( \nabla f \left( \vtheta_{t-1} \right) \right)_i^2 }{{ \max \left\{ \sqrt{ \left( \vv_{t-1} \right)_i } , \epsilon \right\} } } - \frac{G^4}{ \epsilon^3 c_t^2 }
\end{align}

\end{proof}
\end{lemma}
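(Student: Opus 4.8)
The plan is to establish the inequality coordinate-wise and then sum over the $D$ dimensions. Throughout I would condition on the past, so that $\vtheta_{t-1}$ and $\vv_{t-1}$, and hence each normalizer $D_i \coloneqq \max\{\sqrt{(\vv_{t-1})_i},\epsilon\}$, are treated as deterministic, with the only randomness coming from $\vg_t$. Writing $X_i \coloneqq (\vg_t)_i / D_i$ and $Y_i \coloneqq \mathbb{E}[\mathrm{Clip}(X_i,c_t)]$, the entry point is the elementary identity $2ab = a^2/w + w b^2 - w(a/w-b)^2$ applied with $a=(\nabla f(\vtheta_{t-1}))_i$, $b=Y_i$, and $w=D_i$. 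Discarding the nonnegative term $D_i Y_i^2$ yields
\[
    2(\nabla f(\vtheta_{t-1}))_i\, Y_i \geq \frac{(\nabla f(\vtheta_{t-1}))_i^2}{D_i} - D_i\left(\frac{(\nabla f(\vtheta_{t-1}))_i}{D_i} - Y_i\right)^2 ,
\]
so everything reduces to controlling the clipping-bias penalty $D_i(\,\cdot\,)^2$.

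To identify that bias, I would use Assumption \ref{ass:unbiased} together with the fact that $D_i$ is deterministic given the past to get $\mathbb{E}[X_i] = (\nabla f(\vtheta_{t-1}))_i/D_i$; hence the bias equals $\mathbb{E}[X_i - \mathrm{Clip}(X_i,c_t)]$. Since $X_i - \mathrm{Clip}(X_i,c_t)$ vanishes whenever $|X_i|\le c_t$, I would write it as $\delta_{t,i} X_i(1 - c_t/|X_i|)$ with the clipping indicator $\delta_{t,i} = \mathbf{1}\{|X_i|>c_t\}$.

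The hard part will be bounding the squared bias. The plan is a three-step chain: first apply Jensen's inequality to pull the square inside the conditional expectation given $\delta_{t,i}=1$; next use $0 \le 1 - c_t/|X_i| \le 1$ on the clipping event to discard that factor, reducing the bound to $\mathbb{E}[\delta_{t,i}]\,\mathbb{E}[X_i^2]$; and finally invoke Markov's inequality applied to $X_i^2$ (equivalently Chebyshev) to replace the clipping probability $\mathbb{E}[\delta_{t,i}] = \Pr[X_i^2 > c_t^2]$ by $\mathbb{E}[X_i^2]/c_t^2$. This yields $(\text{bias})^2 \le \mathbb{E}[X_i^2]^2 / c_t^2$. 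The delicate bookkeeping here is tracking the $\max\{\cdot,\epsilon\}$ normalizer so that the final denominator comes out as exactly $\epsilon^3$.

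To close, I would use $D_i \ge \epsilon$ and $\mathbb{E}[X_i^2] = \mathbb{E}[(\vg_t)_i^2]/D_i^2$ to convert the penalty into $D_i(\text{bias})^2 \le \mathbb{E}[(\vg_t)_i^2]^2/(\epsilon^3 c_t^2)$. Summing over $i$ and applying $\sum_{i=1}^D \mathbb{E}[(\vg_t)_i^2]^2 \le \big(\sum_{i=1}^D \mathbb{E}[(\vg_t)_i^2]\big)^2 = \mathbb{E}[\|\vg_t\|^2]^2 \le G^4$ (Assumption \ref{ass:second_moment}) gives the stated $G^4/(\epsilon^3 c_t^2)$ remainder. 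I expect the clipping-bias estimate—getting the Jensen/Markov combination to produce the clean $\mathbb{E}[X_i^2]^2/c_t^2$ factor—to be the crux, since unbiasedness and the finite-second-moment bound enter only routinely.
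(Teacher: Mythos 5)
Your proposal is correct and follows essentially the same route as the paper's proof: the same coordinate-wise identity $2ab = a^2/w + wb^2 - w(a/w-b)^2$ with the $wb^2$ term discarded, the same decomposition of the clipping bias via the indicator $\delta_{t,i}$, the same Jensen--then--Markov chain giving $(\text{bias})^2 \le \mathbb{E}[X_i^2]^2/c_t^2$, and the same final summation using $\sum_i \mathbb{E}[(\vg_t)_i^2]^2 \le \mathbb{E}[\|\vg_t\|^2]^2 \le G^4$. Your explicit conditioning on the past (so that $\vv_{t-1}$ is deterministic when invoking unbiasedness) is, if anything, a cleaner presentation of a step the paper leaves implicit.
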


\begin{lemma}
\label{lem:bound_exp_v}
For the \adopt{} algorithm, the following holds for $t \geq 0$:
\begin{align}
    \mathbb{E} \left[ \sum_{i=1}^D \left( \vv_t \right)_i \right] \leq G^2.
\end{align}

\begin{proof}
\begin{align}
    \mathbb{E} \left[ \sum_{i=1}^D \left( \vv_t \right)_i \right] 
    &= \mathbb{E} \left[ \beta_2^t \sum_{i=1}^D \left( \vg_0 \right)_i^2 + \left( 1 - \beta_2 \right) \sum_{i=1}^D \sum_{k=1}^t \beta_2^{t-k} \left( \vg_{k-1} \right)_i^2 \right] \\
    &\leq \beta_2^t G^2 + \left( 1 - \beta_2 \right) G^2 \sum_{k=1}^t \beta_2^{t-k} \\
    &= \beta_2^t G^2 + \left( 1 - \beta_2^t \right) G^2 \\
    &= G^2
\end{align}
\end{proof}
\end{lemma}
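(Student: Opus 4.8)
The plan is to unroll the exponential moving average defining the second moment estimate and then bound each contributing gradient term using Assumption \ref{ass:second_moment}. First I would recall the ADOPT recursion, $\vv_0 = \vg_0 \odot \vg_0$ and $\vv_t = \beta_2 \vv_{t-1} + (1 - \beta_2)\, \vg_t \odot \vg_t$ for $t \geq 1$, and iterate it backward to obtain the closed form
\begin{align}
    \vv_t = \beta_2^t\, \vg_0 \odot \vg_0 + (1 - \beta_2) \sum_{k=1}^t \beta_2^{t-k}\, \vg_k \odot \vg_k .
\end{align}
The point of this step is that it writes $\vv_t$ as a fixed (deterministic-coefficient) linear combination of squared past gradients, so I never have to reason about any correlation between $\vv_t$ and a particular $\vg_k$; a per-step second moment bound suffices.

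Next I would sum over the $D$ coordinates and take expectations. By linearity this gives $\mathbb{E}[\sum_{i=1}^D (\vv_t)_i] = \beta_2^t\, \mathbb{E}[\|\vg_0\|^2] + (1 - \beta_2)\sum_{k=1}^t \beta_2^{t-k}\, \mathbb{E}[\|\vg_k\|^2]$. Applying Assumption \ref{ass:second_moment} to replace every $\mathbb{E}[\|\vg_k\|^2]$ by $G^2$ and pulling $G^2$ out, I am left with the scalar weight $\beta_2^t + (1 - \beta_2)\sum_{k=1}^t \beta_2^{t-k}$. Evaluating the geometric sum $\sum_{k=1}^t \beta_2^{t-k} = (1 - \beta_2^t)/(1 - \beta_2)$ collapses this weight to $\beta_2^t + (1 - \beta_2^t) = 1$, so the right-hand side is exactly $G^2$.

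I expect no real obstacle here; the lemma is essentially a geometric-series bookkeeping exercise. The only things to verify are that the coefficients multiplying the gradients form a genuine convex combination summing to one, so that the uniform bound $G^2$ is preserved rather than inflated, and that the $t = 0$ base case, $\mathbb{E}[\sum_i (\vv_0)_i] = \mathbb{E}[\|\vg_0\|^2] \leq G^2$, is handled directly. It is worth emphasizing that the argument uses neither independence nor unbiasedness of the gradients, which is precisely why it yields the uniform control on $\mathbb{E}[\sum_i (\vv_t)_i]$ that later lemmas such as Lemma \ref{lem:bound_dot_prod} rely on when converting the normalized-gradient inner product into the $\|\nabla f\|^{4/3}$ quantity.
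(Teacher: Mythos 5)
Your proposal is correct and follows essentially the same route as the paper's proof: unroll the exponential moving average into $\beta_2^t\,\vg_0\odot\vg_0 + (1-\beta_2)\sum_{k=1}^t\beta_2^{t-k}\,\vg_k\odot\vg_k$, bound each expected squared norm by $G^2$ via Assumption \ref{ass:second_moment}, and collapse the geometric sum to a convex combination. The only difference is a trivial index shift ($\vg_k$ versus the paper's $\vg_{k-1}$ inside the sum), which does not affect the bound since every term is controlled by the same uniform constant.
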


\begin{lemma}
For the \adopt{} algorithm, the following holds for $t \geq 0$:
\begin{align}
    &\mathbb{E} \left[ \left\| \mathrm{Clip} \left( \frac{ \vg_{t} }{ \max \left\{ \sqrt{ \vv_{t-1} } , \epsilon \right\} } , c_t \right) \right\|^2 \right] \leq \frac{2 G^2}{\epsilon^2} .
\end{align}

\begin{proof}
\begin{align}
    &\mathbb{E} \left[ \left\| \mathrm{Clip} \left( \frac{ \vg_{t} }{ \max \left\{ \sqrt{ \vv_{t-1} } , \epsilon \right\} } , c_t \right) \right\|^2 \right] \\
    &= \mathbb{E} \left[ \sum_{i=1}^D \mathrm{Clip} \left( \frac{ \left( \vg_{t} \right)_i }{ \max \left\{ \sqrt{ \left( \vv_{t-1} \right)_i } , \epsilon \right\} } , c_t \right)^2 \right] \\
    &= \mathbb{E} \left[ \sum_{i=1}^D \delta_{t, i} c_t^2 + \left( 1 - \delta_{t, i} \right) \frac{ \left( \vg_{t} \right)_i^2 }{ \max \left\{ \left( \vv_{t-1} \right)_i , \epsilon^2 \right\} } \right] \\
    &\leq \sum_{i=1}^D c_t^2 \mathbb{E} \left[ \delta_{t, i} \right] + \mathbb{E} \left[ \frac{ \left( \vg_{t} \right)_i^2 }{ \max \left\{ \left( \vv_{t-1} \right)_i , \epsilon^2 \right\} } \right] \\
    &= \sum_{i=1}^D c_t^2 \mathrm{Pr} \left[ \left| \frac{ \left( \vg_{t} \right)_i }{ \max \left\{ \sqrt{ \left( \vv_{t-1} \right)_i } , \epsilon \right\} } \right| > c_t \right] + \mathbb{E} \left[ \frac{ \left( \vg_{t} \right)_i^2 }{ \max \left\{ \left( \vv_{t-1} \right)_i , \epsilon^2 \right\} } \right] \\
    &\leq 2 \cdot \mathbb{E} \left[ \sum_{i=1}^D \frac{ \left( \vg_{t} \right)_i^2 }{ \max \left\{ \left( \vv_{t-1} \right)_i , \epsilon^2 \right\} } \right] \\
    &\leq \frac{2}{\epsilon^2} \cdot \mathbb{E} \left[ \left\| \vg_t \right\|^2 \right] \\
    &\leq \frac{2 G^2}{\epsilon^2} ,
\end{align}
where the second inequality is due to the Chebyshev's inequality.
\end{proof}
\end{lemma}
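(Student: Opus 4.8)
The plan is to bound the expected squared norm coordinatewise, splitting each coordinate according to whether the clipping is active. For a fixed coordinate $i$, I would introduce the indicator of the clipping event,
\begin{align}
    \delta_{t,i} = \vone_{\left\{ \left| (\vg_t)_i / \max\{ \sqrt{(\vv_{t-1})_i}, \epsilon \} \right| > c_t \right\}},
\end{align}
so that the squared clipped entry equals $\delta_{t,i} c_t^2 + (1 - \delta_{t,i}) (\vg_t)_i^2 / \max\{ (\vv_{t-1})_i, \epsilon^2 \}$: on the clipping event the magnitude is pinned to $c_t$, while on its complement the clip is inactive and the entry is the raw normalized gradient. Summing over coordinates, taking expectations, and bounding $1 - \delta_{t,i} \leq 1$ reduces the claim to controlling $\sum_i c_t^2 \, \mathbb{E}[\delta_{t,i}]$ alongside the unclipped normalized second-moment sum.

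The key step is to handle the clipped contribution. Since $\mathbb{E}[\delta_{t,i}] = \mathrm{Pr}[ | (\vg_t)_i / \max\{ \sqrt{(\vv_{t-1})_i}, \epsilon \} | > c_t ]$, I would apply a Markov/Chebyshev-type inequality to the \emph{squared} normalized gradient to get
\begin{align}
    \mathbb{E}[\delta_{t,i}] \leq \frac{1}{c_t^2} \, \mathbb{E}\left[ \frac{(\vg_t)_i^2}{\max\{ (\vv_{t-1})_i, \epsilon^2 \}} \right].
\end{align}
Multiplying by $c_t^2$ cancels the clipping prefactor exactly, so that the clipped contribution is dominated by precisely the same normalized second moment that already governs the unclipped part. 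Merging the two cases therefore yields at most a factor of two times $\mathbb{E}[ \sum_i (\vg_t)_i^2 / \max\{ (\vv_{t-1})_i, \epsilon^2 \} ]$.

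To close the argument I would use the floor $\max\{ (\vv_{t-1})_i, \epsilon^2 \} \geq \epsilon^2$ to pull out the factor $1/\epsilon^2$, recognize the remaining sum over coordinates as $\mathbb{E}[\| \vg_t \|^2]$, and invoke Assumption \ref{ass:second_moment} to bound it by $G^2$. This gives the desired $2 G^2 / \epsilon^2$ and, notably, a bound free of any dependence on $c_t$.

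I expect the main obstacle to be the clipped-event estimate: one must apply the tail bound to the squared normalized gradient rather than to $\vg_t$ in isolation, so that the $\max\{\sqrt{\vv_{t-1}},\epsilon\}$ denominator is carried consistently through both the probability bound and the unclipped term. The delicate point is that the $c_t^2$ prefactor must cancel cleanly, leaving the \emph{same} normalized second moment in both cases; only then can the two contributions be combined into a single factor of two, after which the $\epsilon^{-2}$ floor and Assumption \ref{ass:second_moment} finish the proof routinely.
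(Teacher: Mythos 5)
Your proposal is correct and follows essentially the same route as the paper's proof: the same indicator decomposition of the clipped square into $\delta_{t,i} c_t^2 + (1-\delta_{t,i})$ times the normalized gradient square, the same Markov/Chebyshev tail bound on the squared normalized gradient whose $c_t^{-2}$ exactly cancels the $c_t^2$ prefactor, the same merging into a factor of two, and the same $\epsilon^2$ floor plus Assumption \ref{ass:second_moment} to conclude. No substantive differences.
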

\begin{lemma}
\label{lem:bound_exp_sq_norm_m}
For the \adopt{} algorithm, the following holds for $0 \leq t \leq T$.
\begin{align}
    \mathbb{E} \left[ \left\| \vm_{t} \right\|^2 \right] \leq \frac{ 2 G^2 }{\epsilon^2}.
\end{align}

\begin{proof}
\begin{align}
    &\mathbb{E} \left[ \left\| \vm_t \right\|^2 \right] \nonumber \\
    &= \mathbb{E} \left[ \left\| \beta_1 \vm_{t-1} + \left( 1 - \beta_1 \right) \mathrm{Clip} \left( \frac{ \vg_{t} }{ \max \left\{ \sqrt{ \vv_{t-1} } , \epsilon \right\} } , c_t \right) \right\|^2 \right] \\
    &= \mathbb{E} \left[ \beta_1^2 \left\| \vm_{t-1} \right\|^2 + \left( 1 - \beta_1 \right)^2 \left\| \mathrm{Clip} \left( \frac{ \vg_{t} }{ \max \left\{ \sqrt{ \vv_{t-1} } , \epsilon \right\} } , c_t \right) \right\|^2 \right] \nonumber \\
    &\quad + \mathbb{E} \left[ 2 \beta_1 \left( 1 - \beta_1 \right) \vm_{t-1}^\top \mathrm{Clip} \left( \frac{ \vg_{t} }{ \max \left\{ \sqrt{ \vv_{t-1} } , \epsilon \right\} } , c_t \right) \right] \label{eq:extended_sq_norm_m}\\
    &\leq \mathbb{E} \left[ \beta_1 \left\| \vm_{t-1} \right\|^2 + \left( 1 - \beta_1 \right) \left\| \mathrm{Clip} \left( \frac{ \vg_{t} }{ \max \left\{ \sqrt{ \vv_{t-1} } , \epsilon \right\} } , c_t \right) \right\|^2 \right] \label{eq:bound_inner_prod_m} \\
    &\leq \mathbb{E} \left[ \beta_1 \left\| \vm_{t-1} \right\|^2 + \frac{ 2 \left( 1 - \beta_1 \right) G^2 }{ \epsilon^2 } \right] \\
    &\leq \frac{ 2 \left( 1 - \beta_1 \right) G^2 }{ \epsilon^2 } \sum_{k=1}^t \beta_1^{t-k} \\
    &\leq \frac{ 2 \left( 1 - \beta_1^t \right) G^2 }{ \epsilon^2 } \\
    &\leq \frac{ 2 G^2 }{\epsilon^2}.
\end{align}

First inequality is derived using the following fact:
\begin{align}
    \forall \lambda>0, \vx, \vy \in \mathbb{R}^d, \vx^\top \vy \leq \frac{\lambda}{2} \left\| \vx \right\|^2+\frac{1}{2 \lambda} \left\| \vy \right\|^2
\end{align}
By setting \(\lambda = \left( 1 - \beta_1 \right) / \beta_1, \vx = \beta_1 \vm_{t-1}, \vy = \left(1 - \beta_1 \right) \mathrm{Clip} \left( \vg_t / \max \left\{ \sqrt{ \vv_{t-1} } , \epsilon \right\} , c_t \right) \), we obtain
\begin{align}
    &2 \beta_1 \left( 1 - \beta_1 \right) \vm_{t-1}^\top \mathrm{Clip} \left( \frac{ \vg_{t} }{ \max \left\{ \sqrt{ \vv_{t-1} } , \epsilon \right\} } , c_t \right) \\
    &\leq \beta_1 \left( 1 - \beta_1 \right) \left( \left\| \vm_{t-1} \right\|^2 + \left\| \mathrm{Clip} \left( \frac{ \vg_{t} }{ \max \left\{ \sqrt{ \vv_{t-1} } , \epsilon \right\} } , c_t \right) \right\|^2 \right)
\end{align}
Injecting it into Eq. (\ref{eq:extended_sq_norm_m}), we obtain Eq. (\ref{eq:bound_inner_prod_m}).

\end{proof}
\end{lemma}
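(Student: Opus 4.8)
The plan is to exploit the structure of $\vm_t$ as an exponential moving average of the clipped, normalized gradients
$\vc_t \coloneqq \mathrm{Clip}\left( \vg_t / \max\left\{ \sqrt{\vv_{t-1}}, \epsilon \right\}, c_t \right)$, started from $\vm_0 = \vzero$. The preceding lemma already supplies the uniform bound $\mathbb{E}\left[ \left\| \vc_t \right\|^2 \right] \leq 2G^2/\epsilon^2$, so the whole task reduces to showing that the squared norm of an EMA inherits the uniform bound of its summands. Concretely, I would establish a one-step recursion of the form $\mathbb{E}\left[ \left\| \vm_t \right\|^2 \right] \leq \beta_1 \mathbb{E}\left[ \left\| \vm_{t-1} \right\|^2 \right] + (1-\beta_1)\mathbb{E}\left[ \left\| \vc_t \right\|^2 \right]$ and then unroll it.

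First I would write $\vm_t = \beta_1 \vm_{t-1} + (1-\beta_1)\vc_t$ and expand
\begin{align*}
\left\| \vm_t \right\|^2 = \beta_1^2 \left\| \vm_{t-1} \right\|^2 + \left( 1 - \beta_1 \right)^2 \left\| \vc_t \right\|^2 + 2 \beta_1 \left( 1 - \beta_1 \right) \vm_{t-1}^\top \vc_t .
\end{align*}
The only term not already in convex-combination form is the inner product. I would bound it with Young's inequality $\vx^\top \vy \leq \tfrac{\lambda}{2}\left\| \vx \right\|^2 + \tfrac{1}{2\lambda}\left\| \vy \right\|^2$, taking $\vx = \beta_1 \vm_{t-1}$, $\vy = (1-\beta_1)\vc_t$, and $\lambda = (1-\beta_1)/\beta_1$. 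This exact choice makes both resulting coefficients equal to $\beta_1(1-\beta_1)$, so after substitution the $\beta_1$-dependent factors collapse: the coefficient of $\left\| \vm_{t-1} \right\|^2$ becomes $\beta_1^2 + \beta_1(1-\beta_1) = \beta_1$ and that of $\left\| \vc_t \right\|^2$ becomes $(1-\beta_1)^2 + \beta_1(1-\beta_1) = 1-\beta_1$, yielding the clean convex combination $\beta_1 \left\| \vm_{t-1} \right\|^2 + (1-\beta_1)\left\| \vc_t \right\|^2$.

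Taking expectations and invoking the preceding lemma gives $\mathbb{E}\left[ \left\| \vm_t \right\|^2 \right] \leq \beta_1 \mathbb{E}\left[ \left\| \vm_{t-1} \right\|^2 \right] + (1-\beta_1)\cdot 2G^2/\epsilon^2$. Since $\vm_0 = \vzero$, unrolling this recursion (equivalently, a one-line induction) leaves the geometric sum $(1-\beta_1)\sum_{k=0}^{t-1}\beta_1^k = 1 - \beta_1^t$, so $\mathbb{E}\left[ \left\| \vm_t \right\|^2 \right] \leq (1-\beta_1^t)\cdot 2G^2/\epsilon^2 \leq 2G^2/\epsilon^2$, which is the claim.

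I do not expect a genuine conceptual obstacle here: the truly delicate estimate — bounding the clipped normalized gradient uniformly despite the possibility that entries of $\vv_{t-1}$ are arbitrarily small — has already been discharged by the preceding lemma via Chebyshev's inequality on the clipping event. The only point that requires care is the calibration of the Young's parameter $\lambda$; any value other than $(1-\beta_1)/\beta_1$ leaves mismatched coefficients and destroys the convexity bound, which is precisely what prevents the constant from blowing up as $\beta_1 \to 1$.
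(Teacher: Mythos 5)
Your proposal is correct and follows essentially the same route as the paper's proof: the same expansion of $\left\| \beta_1 \vm_{t-1} + (1-\beta_1)\vc_t \right\|^2$, the same Young's inequality with $\lambda = (1-\beta_1)/\beta_1$ to collapse the cross term into the convex combination $\beta_1 \left\| \vm_{t-1} \right\|^2 + (1-\beta_1)\left\| \vc_t \right\|^2$, the same appeal to the preceding lemma's bound $2G^2/\epsilon^2$, and the same geometric unrolling. No gaps.
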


\begin{lemma}
\label{lem:bound_exp_norm_m}
For the \adopt{} algorithm, the following holds for $t \geq 0$.
\begin{align}
    \mathbb{E} \left[ \left\| \vm_t \right\| \right] \leq \frac{ \sqrt{2} G }{\epsilon}
\end{align}

\begin{proof}
\begin{align}
    \mathbb{E} \left[ \left\| \vm_t \right\| \right]
    &= \mathbb{E} \left[ \left\| \left( 1 - \beta_1 \right) \sum_{k=1}^t \beta_1^{t-k} \mathrm{Clip} \left( \frac{ \vg_{k} }{ \max \left\{ \sqrt{ \vv_{k-1} } , \epsilon \right\} } , c_t \right) \right\| \right] \\
    &\leq \left( 1 - \beta_1 \right) \sum_{k=1}^t \beta_1^{t-k} \mathbb{E} \left[ \left\| \mathrm{Clip} \left( \frac{ \vg_{k} }{ \max \left\{ \sqrt{ \vv_{k-1} } , \epsilon \right\} } , c_t \right) \right\| \right] \\
    &\leq \left( 1 - \beta_1 \right) \sum_{k=1}^t \beta_1^{t-k} \sqrt{ \mathbb{E} \left[ \left\| \mathrm{Clip} \left( \frac{ \vg_{k} }{ \max \left\{ \sqrt{ \vv_{k-1} } , \epsilon \right\} } , c_t \right) \right\|^2 \right] } \\
    &\leq \frac{ \sqrt{2} \left( 1 - \beta_1 \right) G }{\epsilon} \sum_{k=1}^t \beta_1^{t-k} \\
    &= \frac{ \sqrt{2} \left( 1 - \beta_1^t \right) G }{ \epsilon } \\
    &\leq \frac{ \sqrt{2} G }{ \epsilon }.
\end{align}
\end{proof}
\end{lemma}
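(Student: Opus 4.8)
The plan is to reduce this first-moment bound to a second-moment bound that is already in hand. By the power-mean inequality (equivalently H\"older, or Jensen applied to $x \mapsto x^2$), $\mathbb{E}[\|\vm_t\|] \leq (\mathbb{E}[\|\vm_t\|^2])^{1/2}$. Since Lemma~\ref{lem:bound_exp_sq_norm_m} already supplies $\mathbb{E}[\|\vm_t\|^2] \leq 2G^2/\epsilon^2$, taking square roots immediately gives $\mathbb{E}[\|\vm_t\|] \leq \sqrt{2}\,G/\epsilon$, which is exactly the claim. This is the shortest route, and I would present it first.

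If instead a self-contained argument is preferred, one that does not route through the squared-norm lemma, I would unroll the momentum recursion. Because $\vm_0 = \vzero$, iterating $\vm_t = \beta_1 \vm_{t-1} + (1-\beta_1)\,\mathrm{Clip}(\vg_t/\max\{\sqrt{\vv_{t-1}},\epsilon\}, c_t)$ writes $\vm_t$ as the exponentially weighted sum $(1-\beta_1)\sum_{k=1}^t \beta_1^{t-k}\,\mathrm{Clip}(\vg_k/\max\{\sqrt{\vv_{k-1}},\epsilon\}, c_k)$. I would then apply the triangle inequality, take expectations termwise, use Jensen in the form $\mathbb{E}[\|X\|] \leq (\mathbb{E}[\|X\|^2])^{1/2}$ on each summand, and invoke the preceding lemma bounding the squared norm of the clipped normalized gradient by $2G^2/\epsilon^2$. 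This yields $\mathbb{E}[\|\vm_t\|] \leq (1-\beta_1)\sum_{k=1}^t \beta_1^{t-k}\,\sqrt{2}\,G/\epsilon$, and summing the geometric series $\sum_{k=1}^t \beta_1^{t-k} = (1-\beta_1^t)/(1-\beta_1)$ collapses the prefactor to $1-\beta_1^t \leq 1$, leaving $\sqrt{2}\,G/\epsilon$.

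The only point requiring care, and the nearest thing to an obstacle, is the direction of the norm/expectation exchange: one must bound $\mathbb{E}[\|\cdot\|]$ from above by $(\mathbb{E}[\|\cdot\|^2])^{1/2}$ rather than the reverse, and must treat the weights $(1-\beta_1)\beta_1^{t-k}$ as a genuine convex combination so the triangle inequality introduces no extra factor. Everything else is the per-step control of the clipped normalized gradient, whose second moment is bounded by $2G^2/\epsilon^2$ via Chebyshev's inequality together with Assumption~\ref{ass:second_moment}; granting that input, no further difficulty arises.
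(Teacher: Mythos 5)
Your proposal is correct, and your primary route is genuinely different from (and shorter than) the paper's. The paper proves this lemma by unrolling the momentum recursion from $\vm_0 = \vzero$ into the exponentially weighted sum $(1-\beta_1)\sum_{k=1}^t \beta_1^{t-k}\,\mathrm{Clip}(\cdot)$, applying the triangle inequality and Jensen termwise, invoking the $2G^2/\epsilon^2$ bound on the second moment of each clipped normalized gradient, and collapsing the geometric series --- which is exactly your fallback argument, so that route matches the paper step for step. Your preferred route instead applies Jensen once, globally, to get $\mathbb{E}[\|\vm_t\|] \leq (\mathbb{E}[\|\vm_t\|^2])^{1/2}$ and then cites Lemma~\ref{lem:bound_exp_sq_norm_m} directly; since that lemma already gives $\mathbb{E}[\|\vm_t\|^2] \leq 2G^2/\epsilon^2$, the claim follows in one line with the same constant. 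This is a clean simplification: the paper effectively re-derives by unrolling what it has already established recursively in the preceding lemma, and your observation that the first-moment bound is a one-step corollary of the second-moment bound eliminates that duplication. The only caveat is bookkeeping, which you handle correctly: the direction of the Jensen step, and (in the unrolled version) writing $c_k$ rather than $c_t$ inside the sum --- the paper's own display has a small indexing slip there that your version avoids.
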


\begin{lemma}
\label{lem:bound_exp_norm_prod}
For the \adopt{} algorithm, the following holds for $t \geq 1$:
\begin{align}
    &\mathbb{E} \left[ \left\| \vphi_{t-1} - \vtheta_{t-1} \right\| \left\| \vphi_{t} - \vphi_{t-1} \right\| \right] \nonumber\\
    &\leq \frac{ 2 \alpha_{t-1} \left( \alpha_{t-1} - \alpha_{t} \right) \beta_1^2 \left( 1 - \beta_1^{t-1} \right) G^2 }{ \epsilon^2 \left( 1 - \beta_1 \right)^2 } + \frac{ 2 \alpha_{t} \alpha_{t-1} \beta_1 \sqrt{ 1 - \beta_1^{t-1} } G^2 }{ \epsilon^2 \left( 1 - \beta_1 \right) }.
\end{align}

\begin{proof}

\begin{align}
    &\left\| \vphi_{t-1} - \vtheta_{t-1} \right\| \left\| \vphi_{t} - \vphi_{t-1} \right\| \nonumber \\
    &= \left\| - \frac{ \alpha_{t-1} \beta_1 }{ 1 - \beta_1 } \vm_{t-1} \right\| \left\| \frac{ \left( \alpha_{t-1} - \alpha_{t} \right) \beta_1 }{ 1 - \beta_1 } \vm_{t-1} - \alpha_{t} \mathrm{Clip} \left( \frac{ \vg_{t} }{ \max \left\{ \sqrt{ \vv_{t-1} } , \epsilon \right\} } , c_t \right) \right\| \\
    &\leq \frac{ \alpha_{t-1} \beta_1 }{ 1 - \beta_1 } \left\| \vm_{t-1} \right\| \left( \frac{ \left( \alpha_{t-1} - \alpha_{t} \right) \beta_1 }{ 1 - \beta_1 } \left\| \vm_{t-1} \right\| + \alpha_{t} \left\| \mathrm{Clip} \left( \frac{ \vg_{t} }{ \max \left\{ \sqrt{ \vv_{t-1} } , \epsilon \right\} } , c_t \right) \right\| \right) \\
    &\leq \frac{ \alpha_{t-1} \left( \alpha_{t-1} - \alpha_{t} \right) \beta_1^2 }{ \left( 1 - \beta_1 \right)^2 } \left\| \vm_{t-1} \right\|^2 +  \frac{ \alpha_{t} \alpha_{t-1} \beta_1  }{ 1 - \beta_1 } \left\| \vm_{t-1} \right\| \left\| \mathrm{Clip} \left( \frac{ \vg_{t} }{ \max \left\{ \sqrt{ \vv_{t-1} } , \epsilon \right\} } , c_t \right) \right\|.
\end{align}
Taking the expectation yields:
\begin{align}
    &\mathbb{E} \left[ \left\| \vphi_{t-1} - \vtheta_{t-1} \right\| \left\| \vphi_t - \vphi_{t-1} \right\| \right] \nonumber \\
    &\leq \frac{ \alpha_{t-1} \left( \alpha_{t-1} - \alpha_{t} \right) \beta_1^2 }{ \left( 1 - \beta_1 \right)^2 } \mathbb{E} \left[ \left\| \vm_{t-1} \right\|^2 \right] + \frac{ \alpha_{t} \alpha_{t-1} \beta_1 }{ 1 - \beta_1 } \mathbb{E} \left[ \left\| \vm_{t-1} \right\| \left\| \mathrm{Clip} \left( \frac{ \vg_{t} }{ \max \left\{ \sqrt{ \vv_{t-1} } , \epsilon \right\} } , c_t \right) \right\| \right] \\
    &\leq \frac{ \alpha_{t-1} \left( \alpha_{t-1} - \alpha_{t} \right) \beta_1^2 }{ \left( 1 - \beta_1 \right)^2 } \mathbb{E} \left[ \left\| \vm_{t-1} \right\|^2 \right] + \frac{ \sqrt{2} \alpha_{t} \alpha_{t-1} \beta_1 G }{ \left( 1 - \beta_1 \right) \epsilon } \mathbb{E} \left[ \left\| \vm_{t-1} \right\| \right] \\
    &\leq \frac{ 2 \alpha_{t-1} \left( \alpha_{t-1} - \alpha_{t} \right) \beta_1^2 \left( 1 - \beta_1^{t-1} \right) G^2 }{ \epsilon^2 \left( 1 - \beta_1 \right)^2 } + \frac{ 2 \alpha_{t} \alpha_{t-1} \beta_1 \sqrt{ 1 - \beta_1^{t-1} } G^2 }{ \left( 1 - \beta_1 \right) \epsilon^2 }.
\end{align}
\end{proof}
\end{lemma}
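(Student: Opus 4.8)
The plan is to reduce everything to quantities already controlled in the preceding lemmas by expressing the two vector differences entirely in terms of $\vm_{t-1}$ and the clipped normalized gradient. First I would invoke Lemma~\ref{lem:phi_theta}, which gives $\vphi_{t-1} - \vtheta_{t-1} = -\frac{\alpha_{t-1}\beta_1}{1-\beta_1}\vm_{t-1}$, so that $\|\vphi_{t-1}-\vtheta_{t-1}\| = \frac{\alpha_{t-1}\beta_1}{1-\beta_1}\|\vm_{t-1}\|$ exactly (all scalar factors being nonnegative). Next I would apply Lemma~\ref{lem:phi}, which expands $\vphi_t - \vphi_{t-1}$ as $\frac{(\alpha_{t-1}-\alpha_t)\beta_1}{1-\beta_1}\vm_{t-1} - \alpha_t\,\mathrm{Clip}\!\left(\vg_t/\max\{\sqrt{\vv_{t-1}},\epsilon\},c_t\right)$, and bound its norm by the triangle inequality, using $\alpha_{t-1}\geq\alpha_t$ (valid for both the constant and the $\Theta(1/\sqrt{t})$ schedules).

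Multiplying the two bounds then splits the left-hand side into a pure-momentum term proportional to $\|\vm_{t-1}\|^2$ and a cross term proportional to $\|\vm_{t-1}\|\,\|\mathrm{Clip}(\cdots)\|$, with coefficients $\frac{\alpha_{t-1}(\alpha_{t-1}-\alpha_t)\beta_1^2}{(1-\beta_1)^2}$ and $\frac{\alpha_t\alpha_{t-1}\beta_1}{1-\beta_1}$ respectively. Taking expectations, the first term is handled immediately by the intermediate bound $\mathbb{E}[\|\vm_{t-1}\|^2]\leq 2(1-\beta_1^{t-1})G^2/\epsilon^2$ established inside Lemma~\ref{lem:bound_exp_sq_norm_m}; substituting it reproduces the first summand of the claim exactly.

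The cross term is where the real work lies, and I expect it to be the main obstacle: $\vm_{t-1}$ and the clipped gradient are \emph{not} independent, since both depend on $\vv_{t-1}$, so one cannot simply factor $\mathbb{E}[\|\vm_{t-1}\|\,\|\mathrm{Clip}(\cdots)\|]$ into a product of expectations. The fix is to condition on the $\sigma$-algebra $\mathcal{F}_{t-1}$ generated by $\vg_0,\ldots,\vg_{t-1}$: under it $\vm_{t-1}$ and $\vv_{t-1}$ are measurable, so by the tower property $\mathbb{E}[\|\vm_{t-1}\|\,\|\mathrm{Clip}(\cdots)\|] = \mathbb{E}\!\big[\|\vm_{t-1}\|\,\mathbb{E}[\|\mathrm{Clip}(\cdots)\|\mid\mathcal{F}_{t-1}]\big]$, and the only fresh randomness in the clip factor is $\vg_t$. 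Conditional Jensen together with the conditional form of the earlier bound $\mathbb{E}[\|\mathrm{Clip}(\cdots)\|^2]\leq 2G^2/\epsilon^2$ (itself following from the conditional second-moment bound of Assumption~\ref{ass:second_moment}) gives $\mathbb{E}[\|\mathrm{Clip}(\cdots)\|\mid\mathcal{F}_{t-1}]\leq \sqrt{2}G/\epsilon$, leaving $\frac{\sqrt{2}G}{\epsilon}\mathbb{E}[\|\vm_{t-1}\|]$. Finally I would bound $\mathbb{E}[\|\vm_{t-1}\|]\leq\sqrt{\mathbb{E}[\|\vm_{t-1}\|^2]}\leq\sqrt{2(1-\beta_1^{t-1})}\,G/\epsilon$ by Jensen and the same squared-norm bound; this is what produces the $\sqrt{1-\beta_1^{t-1}}$ factor (rather than $1-\beta_1^{t-1}$) in the second summand. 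Collecting the two contributions yields the stated inequality.
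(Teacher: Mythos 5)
Your proposal follows essentially the same route as the paper's proof: Lemmas \ref{lem:phi_theta} and \ref{lem:phi} to rewrite the two differences, the triangle inequality to split into a $\|\vm_{t-1}\|^2$ term and a cross term, the bound $\sqrt{2}G/\epsilon$ on the clipped factor, and Jensen on $\mathbb{E}[\|\vm_{t-1}\|]$ to produce the $\sqrt{1-\beta_1^{t-1}}$ factor. The only difference is that you make explicit the conditioning/tower-property argument needed to justify replacing the clipped factor by $\sqrt{2}G/\epsilon$ inside the cross-term expectation, a step the paper performs without comment; this is a welcome clarification rather than a deviation.
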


\begin{lemma}
\label{lem:bound_exp_sq_norm}
For the \adopt{} algorithm, the following holds for $t \geq 1$:
\begin{align}
    &\mathbb{E} \left[ \left\| \vphi_{t} - \vphi_{t-1} \right\|^2 \right] \nonumber \\
    &\leq \frac{ 2 \left( \alpha_{t-1} - \alpha_{t} \right)^2 \beta_1^2 \left( 1 - \beta_1^{t-1} \right) G^2 }{ \left( 1 - \beta_1 \right)^2 \epsilon^2 } + \frac{2 \alpha_{t}^2 G^2 }{ \epsilon^2 } + \frac{ 2 \alpha_t \left( \alpha_{t-1} - \alpha_{t} \right) \beta_1 \sqrt{ 1 - \beta_1^{t-1} }G^2 }{ \left( 1 - \beta_1 \right) \epsilon^2 }.
\end{align}

\begin{proof}

\begin{align}
    &\left\| \vphi_{t} - \vphi_{t-1} \right\|^2 \nonumber \\
    &= \left\| \frac{ \left( \alpha_{t-1} - \alpha_{t} \right) \beta_1 }{ 1 - \beta_1 } \vm_{t-1} - \alpha_{t} \mathrm{Clip} \left( \frac{ \vg_{t} }{ \max \left\{ \sqrt{ \vv_{t-1} } , \epsilon \right\} } , c_t \right) \right\|^2 \\
    &= \frac{ \left( \alpha_{t-1} - \alpha_{t} \right)^2 \beta_1^2 }{ \left( 1 - \beta_1 \right)^2 } \left\| \vm_{t-1} \right\|^2 + \alpha_{t}^2 \left\| \mathrm{Clip} \left( \frac{ \vg_{t} }{ \max \left\{ \sqrt{ \vv_{t-1} } , \epsilon \right\} } , c_t \right) \right\|^2 
    \nonumber \\
    &\quad - \frac{ \alpha_t \left( \alpha_{t-1} - \alpha_{t} \right) \beta_1 }{ 1 - \beta_1 } \vm_{t-1}^\top \mathrm{Clip} \left( \frac{ \vg_{t} }{ \max \left\{ \sqrt{ \vv_{t-1} } , \epsilon \right\} } , c_t \right) \\
    &\leq \frac{ \left( \alpha_{t-1} - \alpha_{t} \right)^2 \beta_1^2 }{ \left( 1 - \beta_1 \right)^2 } \left\| \vm_{t-1} \right\|^2 + \alpha_{t}^2 \left\| \mathrm{Clip} \left( \frac{ \vg_{t} }{ \max \left\{ \sqrt{ \vv_{t-1} } , \epsilon \right\} } , c_t \right) \right\|^2
    \nonumber\\
    &\quad + \frac{ \alpha_t \left( \alpha_{t-1} - \alpha_{t} \right) \beta_1 }{ 1 - \beta_1 } \left\| \vm_{t-1} \right\| \left\| \mathrm{Clip} \left( \frac{ \vg_{t} }{ \max \left\{ \sqrt{ \vv_{t-1} } , \epsilon \right\} } \right) \right\| \\
\end{align}
Taking the expectation yields:
\begin{align}
    &\mathbb{E} \left[ \left\| \vphi_{t} - \vphi_{t-1} \right\|^2 \right] \nonumber \\
    &\leq \frac{ 2 \left( \alpha_{t-1} - \alpha_{t} \right)^2 \beta_1^2 \left( 1 - \beta_1^{t-1} \right) G^2 }{ \left( 1 - \beta_1 \right)^2 \epsilon^2 } + \frac{2 \alpha_{t}^2 G^2 }{ \epsilon^2 } + \frac{ 2 \alpha_t \left( \alpha_{t-1} - \alpha_{t} \right) \beta_1 \sqrt{ 1 - \beta_1^{t-1} }G^2 }{ \left( 1 - \beta_1 \right) \epsilon^2 }.
\end{align}

\end{proof}

\end{lemma}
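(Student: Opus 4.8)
The plan is to start from the one-step displacement of the auxiliary sequence $\vphi_t$ recorded in Lemma \ref{lem:phi},
\begin{align}
\vphi_{t} - \vphi_{t-1} = \frac{(\alpha_{t-1}-\alpha_t)\beta_1}{1-\beta_1}\,\vm_{t-1} - \alpha_t\,\mathrm{Clip}\left(\frac{\vg_t}{\max\left\{\sqrt{\vv_{t-1}},\epsilon\right\}}, c_t\right),
\end{align}
and to expand $\|\vphi_{t}-\vphi_{t-1}\|^2$ into a momentum term proportional to $\|\vm_{t-1}\|^2$, a gradient term proportional to $\|\mathrm{Clip}(\cdot)\|^2$, and a cross term built from the inner product $\vm_{t-1}^\top\mathrm{Clip}(\cdot)$. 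First I would drop the sign of the cross term by Cauchy--Schwarz, replacing the inner product by the product of norms $\|\vm_{t-1}\|\,\|\mathrm{Clip}(\cdot)\|$, so that after taking expectations every contribution is expressed through second moments of $\|\vm_{t-1}\|$ and of the clipped, normalized gradient.

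The two square terms are then immediate. For the momentum piece I would use the geometric-series estimate $\mathbb{E}[\|\vm_{t-1}\|^2]\le 2(1-\beta_1^{t-1})G^2/\epsilon^2$ that is established inside the proof of Lemma \ref{lem:bound_exp_sq_norm_m}, and for the gradient piece the companion bound $\mathbb{E}[\|\mathrm{Clip}(\cdot)\|^2]\le 2G^2/\epsilon^2$. Substituting these directly reproduces the first two summands of the claimed inequality.

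The cross term is the only delicate step and the one I expect to be the main obstacle, since it couples the momentum $\vm_{t-1}$ with the fresh clipped gradient. Rather than tracking the dependence between these two random vectors explicitly, I would apply the Cauchy--Schwarz inequality at the level of expectations, $\mathbb{E}[\|\vm_{t-1}\|\,\|\mathrm{Clip}(\cdot)\|]\le\sqrt{\mathbb{E}[\|\vm_{t-1}\|^2]}\,\sqrt{\mathbb{E}[\|\mathrm{Clip}(\cdot)\|^2]}$, and reuse the same two second-moment bounds. The product of $\sqrt{2(1-\beta_1^{t-1})}\,G/\epsilon$ with $\sqrt{2}\,G/\epsilon$ produces exactly the $\sqrt{1-\beta_1^{t-1}}$ factor of the third summand; combining the three contributions yields the stated bound, the only remaining work being the bookkeeping of the multiplicative constants.
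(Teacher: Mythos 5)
Your proposal follows essentially the same route as the paper: expand $\left\| \vphi_{t} - \vphi_{t-1} \right\|^2$ via the decomposition of Lemma \ref{lem:phi}, control the cross term by Cauchy--Schwarz, and invoke the second-moment bounds $\mathbb{E} [ \| \vm_{t-1} \|^2 ] \leq 2 ( 1 - \beta_1^{t-1} ) G^2 / \epsilon^2$ and $\mathbb{E} [ \| \mathrm{Clip} ( \cdot ) \|^2 ] \leq 2 G^2 / \epsilon^2$. The only caveat is the constant on the cross term: the algebraic expansion of the squared norm produces a factor $2$ in front of $\vm_{t-1}^\top \mathrm{Clip} ( \cdot )$ that the stated coefficient (and the paper's own intermediate step) does not carry, but this bookkeeping discrepancy is immaterial to how the lemma is used downstream.
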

\begin{figure}[tb]
    \centering
    \begin{minipage}{0.48\linewidth}
        \includegraphics[width=\linewidth]{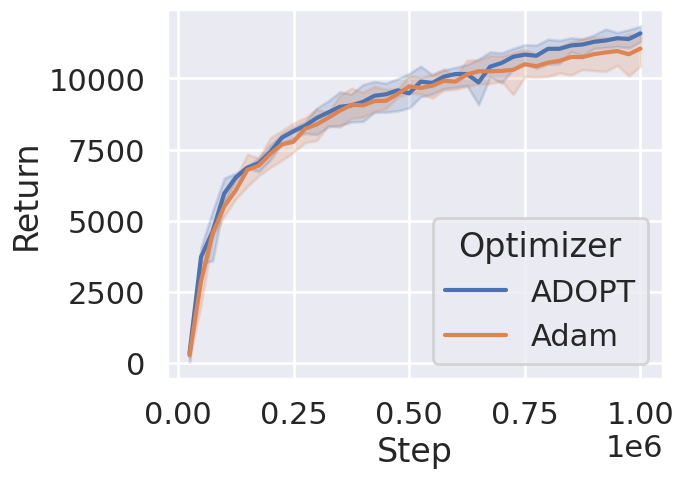}
    \end{minipage}
    \begin{minipage}{0.48\linewidth}
        \includegraphics[width=\linewidth]{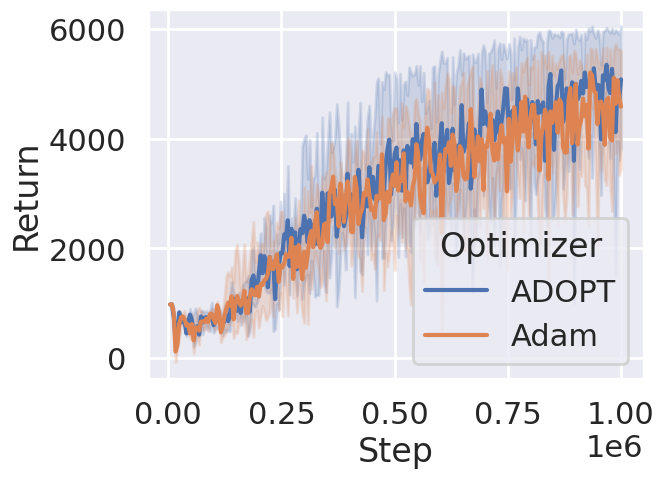}
    \end{minipage}\\
    \begin{minipage}{0.48\linewidth}
        \centering
        {\bf HalfCheetah-v4}
    \end{minipage}
    \begin{minipage}{0.48\linewidth}
        \centering
        {\bf Ant-v4}
    \end{minipage}
    \caption{Performance comparison between \adam{} and \adopt{} in reinforcement learning.}
    \label{fig:deep_rl}
\end{figure}

\begin{figure}
    \centering
    \includegraphics[width=\hsize]{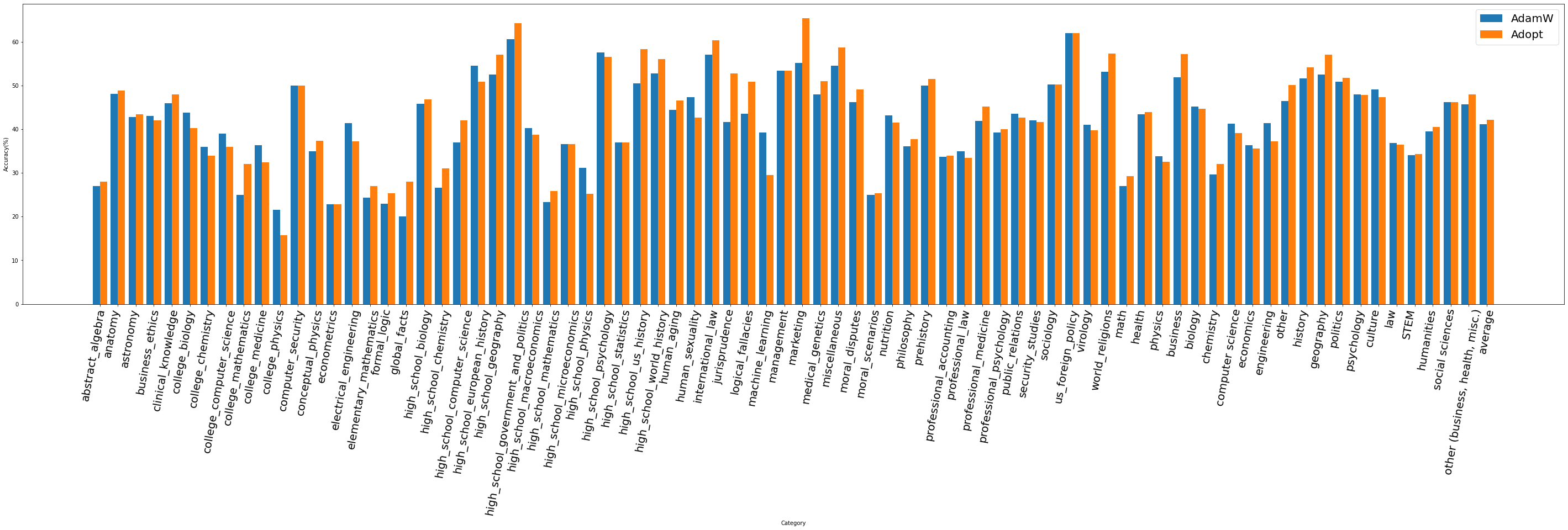}
    \caption{Comparison of MMLU scores for LLaMA-7B finetuned via instruction following using \adamw{} and \adopt{}.}
    \label{fig:alpaca}
\end{figure}

\section{Additional Experiments}
\label{sec:add_exp}
{\bf Deep reinforcement learning:}

We train reinforcement learning (RL) agents using the soft actor crtitic algorithm~\citep{pmlr-v80-haarnoja18b} with \adopt{} for the optimizer.
As a benchmark, we use a continuous control tasks of HalfCheetah-v4 on MuJoCo simulator~\citep{6386109}.
For comparison to \adopt{}, \adam{} is used as a baseline optimizer. 
We follow the hyperparameter settings recommended by Stable-Baselines3~\citep{raffin2021stable}, and just change the choice of an optimizer.
The result is shown in Figure \ref{fig:deep_rl}.
The error bars indicate 95\% confidence intervals of three trials.
We observe slight performance improvement by using \adopt{} instead of \adam{}.

\section{Details of Experimental Setups}
\label{sec:setup}

\subsection{Code}
Our implementation for the experiment is available at \url{https://github.com/iShohei220/adopt}.

\subsection{Total amount of compute}
We run our experiments mainly on cloud GPU instances with $8 \times$ A100. 
It took approximately 320 hours for our experiments in total.

\subsection{License of Assets}
{\bf Datasets:} 
The MNIST database is downloaded from \url{http://yann.lecun.com/exdb/mnist}, which is license-free.
The terms of access for the ImageNet database is provided at \url{https://www.image-net.org/download}. 
The dataset of Stanford Alpaca is CC BY NC 4.0 (allowing only non-commercial use).

{\bf Pretrained models:}
The pretrained model of LLaMA is provided under GNU General Public License v3.0.

{\bf Simulator:}
MuJoCo is provided under Apache License 2.0.

{\bf Code:}
Our implementation of ImageNet classification is based on the Torchvision's official training recipe provided at \url{https://github.com/UiPath/torchvision/tree/master/references/classification}.
Torchvision is provided under BSD 3-Clause License.
We use the official implementation of NVAE provided at \url{https://github.com/NVlabs/NVAE}, whose license is described at \url{https://github.com/NVlabs/NVAE/blob/master/LICENSE}.


\newpage
\section*{NeurIPS Paper Checklist}

\begin{enumerate}

\item {\bf Claims}
    \item[] Question: Do the main claims made in the abstract and introduction accurately reflect the paper's contributions and scope?
    \item[] Answer: \answerYes{} 
    \item[] Justification: Our main contribution is to demystify the cause of non-convergence of Adam, which is clearly written in the abstract and introduction.
    \item[] Guidelines:
    \begin{itemize}
        \item The answer NA means that the abstract and introduction do not include the claims made in the paper.
        \item The abstract and/or introduction should clearly state the claims made, including the contributions made in the paper and important assumptions and limitations. A No or NA answer to this question will not be perceived well by the reviewers. 
        \item The claims made should match theoretical and experimental results, and reflect how much the results can be expected to generalize to other settings. 
        \item It is fine to include aspirational goals as motivation as long as it is clear that these goals are not attained by the paper. 
    \end{itemize}

\item {\bf Limitations}
    \item[] Question: Does the paper discuss the limitations of the work performed by the authors?
    \item[] Answer: \answerYes{} 
    \item[] Justification: Limitations are described in the last section.
    \item[] Guidelines:
    \begin{itemize}
        \item The answer NA means that the paper has no limitation while the answer No means that the paper has limitations, but those are not discussed in the paper. 
        \item The authors are encouraged to create a separate "Limitations" section in their paper.
        \item The paper should point out any strong assumptions and how robust the results are to violations of these assumptions (e.g., independence assumptions, noiseless settings, model well-specification, asymptotic approximations only holding locally). The authors should reflect on how these assumptions might be violated in practice and what the implications would be.
        \item The authors should reflect on the scope of the claims made, e.g., if the approach was only tested on a few datasets or with a few runs. In general, empirical results often depend on implicit assumptions, which should be articulated.
        \item The authors should reflect on the factors that influence the performance of the approach. For example, a facial recognition algorithm may perform poorly when image resolution is low or images are taken in low lighting. Or a speech-to-text system might not be used reliably to provide closed captions for online lectures because it fails to handle technical jargon.
        \item The authors should discuss the computational efficiency of the proposed algorithms and how they scale with dataset size.
        \item If applicable, the authors should discuss possible limitations of their approach to address problems of privacy and fairness.
        \item While the authors might fear that complete honesty about limitations might be used by reviewers as grounds for rejection, a worse outcome might be that reviewers discover limitations that aren't acknowledged in the paper. The authors should use their best judgment and recognize that individual actions in favor of transparency play an important role in developing norms that preserve the integrity of the community. Reviewers will be specifically instructed to not penalize honesty concerning limitations.
    \end{itemize}

\item {\bf Theory Assumptions and Proofs}
    \item[] Question: For each theoretical result, does the paper provide the full set of assumptions and a complete (and correct) proof?
    \item[] Answer: \answerYes{} 
    \item[] Justification: Assumptions and proofs are provided in Section 2 and the appendix.
    \item[] Guidelines:
    \begin{itemize}
        \item The answer NA means that the paper does not include theoretical results. 
        \item All the theorems, formulas, and proofs in the paper should be numbered and cross-referenced.
        \item All assumptions should be clearly stated or referenced in the statement of any theorems.
        \item The proofs can either appear in the main paper or the supplemental material, but if they appear in the supplemental material, the authors are encouraged to provide a short proof sketch to provide intuition. 
        \item Inversely, any informal proof provided in the core of the paper should be complemented by formal proofs provided in appendix or supplemental material.
        \item Theorems and Lemmas that the proof relies upon should be properly referenced. 
    \end{itemize}

    \item {\bf Experimental Result Reproducibility}
    \item[] Question: Does the paper fully disclose all the information needed to reproduce the main experimental results of the paper to the extent that it affects the main claims and/or conclusions of the paper (regardless of whether the code and data are provided or not)?
    \item[] Answer: \answerYes{} 
    \item[] Justification: Experimental settings are provided in Section 5 and the appendix. We also share the implementation of the experiment.
    \item[] Guidelines:
    \begin{itemize}
        \item The answer NA means that the paper does not include experiments.
        \item If the paper includes experiments, a No answer to this question will not be perceived well by the reviewers: Making the paper reproducible is important, regardless of whether the code and data are provided or not.
        \item If the contribution is a dataset and/or model, the authors should describe the steps taken to make their results reproducible or verifiable. 
        \item Depending on the contribution, reproducibility can be accomplished in various ways. For example, if the contribution is a novel architecture, describing the architecture fully might suffice, or if the contribution is a specific model and empirical evaluation, it may be necessary to either make it possible for others to replicate the model with the same dataset, or provide access to the model. In general. releasing code and data is often one good way to accomplish this, but reproducibility can also be provided via detailed instructions for how to replicate the results, access to a hosted model (e.g., in the case of a large language model), releasing of a model checkpoint, or other means that are appropriate to the research performed.
        \item While NeurIPS does not require releasing code, the conference does require all submissions to provide some reasonable avenue for reproducibility, which may depend on the nature of the contribution. For example
        \begin{enumerate}
            \item If the contribution is primarily a new algorithm, the paper should make it clear how to reproduce that algorithm.
            \item If the contribution is primarily a new model architecture, the paper should describe the architecture clearly and fully.
            \item If the contribution is a new model (e.g., a large language model), then there should either be a way to access this model for reproducing the results or a way to reproduce the model (e.g., with an open-source dataset or instructions for how to construct the dataset).
            \item We recognize that reproducibility may be tricky in some cases, in which case authors are welcome to describe the particular way they provide for reproducibility. In the case of closed-source models, it may be that access to the model is limited in some way (e.g., to registered users), but it should be possible for other researchers to have some path to reproducing or verifying the results.
        \end{enumerate}
    \end{itemize}

\item {\bf Open access to data and code}
    \item[] Question: Does the paper provide open access to the data and code, with sufficient instructions to faithfully reproduce the main experimental results, as described in supplemental material?
    \item[] Answer: \answerYes{} 
    \item[] Justification: Data and code are provided in the appendix.
    \item[] Guidelines:
    \begin{itemize}
        \item The answer NA means that paper does not include experiments requiring code.
        \item Please see the NeurIPS code and data submission guidelines (\url{https://nips.cc/public/guides/CodeSubmissionPolicy}) for more details.
        \item While we encourage the release of code and data, we understand that this might not be possible, so “No” is an acceptable answer. Papers cannot be rejected simply for not including code, unless this is central to the contribution (e.g., for a new open-source benchmark).
        \item The instructions should contain the exact command and environment needed to run to reproduce the results. See the NeurIPS code and data submission guidelines (\url{https://nips.cc/public/guides/CodeSubmissionPolicy}) for more details.
        \item The authors should provide instructions on data access and preparation, including how to access the raw data, preprocessed data, intermediate data, and generated data, etc.
        \item The authors should provide scripts to reproduce all experimental results for the new proposed method and baselines. If only a subset of experiments are reproducible, they should state which ones are omitted from the script and why.
        \item At submission time, to preserve anonymity, the authors should release anonymized versions (if applicable).
        \item Providing as much information as possible in supplemental material (appended to the paper) is recommended, but including URLs to data and code is permitted.
    \end{itemize}

\item {\bf Experimental Setting/Details}
    \item[] Question: Does the paper specify all the training and test details (e.g., data splits, hyperparameters, how they were chosen, type of optimizer, etc.) necessary to understand the results?
    \item[] Answer: \answerYes{} 
    \item[] Justification: Detailed experimental settings are provided in Section 5 and the appendix.
    \item[] Guidelines:
    \begin{itemize}
        \item The answer NA means that the paper does not include experiments.
        \item The experimental setting should be presented in the core of the paper to a level of detail that is necessary to appreciate the results and make sense of them.
        \item The full details can be provided either with the code, in appendix, or as supplemental material.
    \end{itemize}

\item {\bf Experiment Statistical Significance}
    \item[] Question: Does the paper report error bars suitably and correctly defined or other appropriate information about the statistical significance of the experiments?
    \item[] Answer: \answerYes{} 
    \item[] Justification: Error bars are reported in all the figures and tables.
    \item[] Guidelines:
    \begin{itemize}
        \item The answer NA means that the paper does not include experiments.
        \item The authors should answer "Yes" if the results are accompanied by error bars, confidence intervals, or statistical significance tests, at least for the experiments that support the main claims of the paper.
        \item The factors of variability that the error bars are capturing should be clearly stated (for example, train/test split, initialization, random drawing of some parameter, or overall run with given experimental conditions).
        \item The method for calculating the error bars should be explained (closed form formula, call to a library function, bootstrap, etc.)
        \item The assumptions made should be given (e.g., Normally distributed errors).
        \item It should be clear whether the error bar is the standard deviation or the standard error of the mean.
        \item It is OK to report 1-sigma error bars, but one should state it. The authors should preferably report a 2-sigma error bar than state that they have a 96\% CI, if the hypothesis of Normality of errors is not verified.
        \item For asymmetric distributions, the authors should be careful not to show in tables or figures symmetric error bars that would yield results that are out of range (e.g. negative error rates).
        \item If error bars are reported in tables or plots, The authors should explain in the text how they were calculated and reference the corresponding figures or tables in the text.
    \end{itemize}

\item {\bf Experiments Compute Resources}
    \item[] Question: For each experiment, does the paper provide sufficient information on the computer resources (type of compute workers, memory, time of execution) needed to reproduce the experiments?
    \item[] Answer: \answerYes{} 
    \item[] Justification: Computational resources used in our experiments are reported in the appendix.
    \item[] Guidelines:
    \begin{itemize}
        \item The answer NA means that the paper does not include experiments.
        \item The paper should indicate the type of compute workers CPU or GPU, internal cluster, or cloud provider, including relevant memory and storage.
        \item The paper should provide the amount of compute required for each of the individual experimental runs as well as estimate the total compute. 
        \item The paper should disclose whether the full research project required more compute than the experiments reported in the paper (e.g., preliminary or failed experiments that didn't make it into the paper). 
    \end{itemize}
    
\item {\bf Code Of Ethics}
    \item[] Question: Does the research conducted in the paper conform, in every respect, with the NeurIPS Code of Ethics \url{https://neurips.cc/public/EthicsGuidelines}?
    \item[] Answer: \answerYes{} 
    \item[] Justification: We confirmed that our research conforms with the Code of Ethics.
    \item[] Guidelines:
    \begin{itemize}
        \item The answer NA means that the authors have not reviewed the NeurIPS Code of Ethics.
        \item If the authors answer No, they should explain the special circumstances that require a deviation from the Code of Ethics.
        \item The authors should make sure to preserve anonymity (e.g., if there is a special consideration due to laws or regulations in their jurisdiction).
    \end{itemize}

\item {\bf Broader Impacts}
    \item[] Question: Does the paper discuss both potential positive societal impacts and negative societal impacts of the work performed?
    \item[] Answer: \answerYes{} 
    \item[] Justification: We discussed them in the last section of the paper.
    \item[] Guidelines:
    \begin{itemize}
        \item The answer NA means that there is no societal impact of the work performed.
        \item If the authors answer NA or No, they should explain why their work has no societal impact or why the paper does not address societal impact.
        \item Examples of negative societal impacts include potential malicious or unintended uses (e.g., disinformation, generating fake profiles, surveillance), fairness considerations (e.g., deployment of technologies that could make decisions that unfairly impact specific groups), privacy considerations, and security considerations.
        \item The conference expects that many papers will be foundational research and not tied to particular applications, let alone deployments. However, if there is a direct path to any negative applications, the authors should point it out. For example, it is legitimate to point out that an improvement in the quality of generative models could be used to generate deepfakes for disinformation. On the other hand, it is not needed to point out that a generic algorithm for optimizing neural networks could enable people to train models that generate Deepfakes faster.
        \item The authors should consider possible harms that could arise when the technology is being used as intended and functioning correctly, harms that could arise when the technology is being used as intended but gives incorrect results, and harms following from (intentional or unintentional) misuse of the technology.
        \item If there are negative societal impacts, the authors could also discuss possible mitigation strategies (e.g., gated release of models, providing defenses in addition to attacks, mechanisms for monitoring misuse, mechanisms to monitor how a system learns from feedback over time, improving the efficiency and accessibility of ML).
    \end{itemize}
    
\item {\bf Safeguards}
    \item[] Question: Does the paper describe safeguards that have been put in place for responsible release of data or models that have a high risk for misuse (e.g., pretrained language models, image generators, or scraped datasets)?
    \item[] Answer: \answerNA{} 
    \item[] Justification:
    \item[] Guidelines:
    \begin{itemize}
        \item The answer NA means that the paper poses no such risks.
        \item Released models that have a high risk for misuse or dual-use should be released with necessary safeguards to allow for controlled use of the model, for example by requiring that users adhere to usage guidelines or restrictions to access the model or implementing safety filters. 
        \item Datasets that have been scraped from the Internet could pose safety risks. The authors should describe how they avoided releasing unsafe images.
        \item We recognize that providing effective safeguards is challenging, and many papers do not require this, but we encourage authors to take this into account and make a best faith effort.
    \end{itemize}

\item {\bf Licenses for existing assets}
    \item[] Question: Are the creators or original owners of assets (e.g., code, data, models), used in the paper, properly credited and are the license and terms of use explicitly mentioned and properly respected?
    \item[] Answer: \answerYes{} 
    \item[] Justification: They are provided both in the main paper and the appendix.
    \item[] Guidelines:
    \begin{itemize}
        \item The answer NA means that the paper does not use existing assets.
        \item The authors should cite the original paper that produced the code package or dataset.
        \item The authors should state which version of the asset is used and, if possible, include a URL.
        \item The name of the license (e.g., CC-BY 4.0) should be included for each asset.
        \item For scraped data from a particular source (e.g., website), the copyright and terms of service of that source should be provided.
        \item If assets are released, the license, copyright information, and terms of use in the package should be provided. For popular datasets, \url{paperswithcode.com/datasets} has curated licenses for some datasets. Their licensing guide can help determine the license of a dataset.
        \item For existing datasets that are re-packaged, both the original license and the license of the derived asset (if it has changed) should be provided.
        \item If this information is not available online, the authors are encouraged to reach out to the asset's creators.
    \end{itemize}

\item {\bf New Assets}
    \item[] Question: Are new assets introduced in the paper well documented and is the documentation provided alongside the assets?
    \item[] Answer: \answerNA{} 
    \item[] Justification:
    \item[] Guidelines:
    \begin{itemize}
        \item The answer NA means that the paper does not release new assets.
        \item Researchers should communicate the details of the dataset/code/model as part of their submissions via structured templates. This includes details about training, license, limitations, etc. 
        \item The paper should discuss whether and how consent was obtained from people whose asset is used.
        \item At submission time, remember to anonymize your assets (if applicable). You can either create an anonymized URL or include an anonymized zip file.
    \end{itemize}

\item {\bf Crowdsourcing and Research with Human Subjects}
    \item[] Question: For crowdsourcing experiments and research with human subjects, does the paper include the full text of instructions given to participants and screenshots, if applicable, as well as details about compensation (if any)? 
    \item[] Answer: \answerNA{} 
    \item[] Justification:
    \item[] Guidelines:
    \begin{itemize}
        \item The answer NA means that the paper does not involve crowdsourcing nor research with human subjects.
        \item Including this information in the supplemental material is fine, but if the main contribution of the paper involves human subjects, then as much detail as possible should be included in the main paper. 
        \item According to the NeurIPS Code of Ethics, workers involved in data collection, curation, or other labor should be paid at least the minimum wage in the country of the data collector. 
    \end{itemize}

\item {\bf Institutional Review Board (IRB) Approvals or Equivalent for Research with Human Subjects}
    \item[] Question: Does the paper describe potential risks incurred by study participants, whether such risks were disclosed to the subjects, and whether Institutional Review Board (IRB) approvals (or an equivalent approval/review based on the requirements of your country or institution) were obtained?
    \item[] Answer: \answerNA{} 
    \item[] Justification:
    \item[] Guidelines:
    \begin{itemize}
        \item The answer NA means that the paper does not involve crowdsourcing nor research with human subjects.
        \item Depending on the country in which research is conducted, IRB approval (or equivalent) may be required for any human subjects research. If you obtained IRB approval, you should clearly state this in the paper. 
        \item We recognize that the procedures for this may vary significantly between institutions and locations, and we expect authors to adhere to the NeurIPS Code of Ethics and the guidelines for their institution. 
        \item For initial submissions, do not include any information that would break anonymity (if applicable), such as the institution conducting the review.
    \end{itemize}

\end{enumerate}

\end{document}